\def\Figref#1{Figure~\ref{#1}}
\def\eqref#1{equation~\ref{#1}}
\def\floor#1{\lfloor #1 \rfloor}
\def\1{\bm{1}}
\def\vm{{\bm{m}}}
\def\vt{{\bm{t}}}
\def\vu{{\bm{u}}}
\def\vv{{\bm{v}}}
\def\vw{{\bm{w}}}
\def\vx{{\bm{x}}}
\def\vy{{\bm{y}}}
\def\vz{{\bm{z}}}
\def\mS{{\bm{S}}}
\def\mX{{\bm{X}}}
\DeclareMathAlphabet{\mathsfit}{\encodingdefault}{\sfdefault}{m}{sl}
\SetMathAlphabet{\mathsfit}{bold}{\encodingdefault}{\sfdefault}{bx}{n}
\def\gG{{\mathcal{G}}}
\def\sC{{\mathbb{C}}}
\def\sN{{\mathbb{N}}}
\def\sZ{{\mathbb{Z}}}
\newcommand{\normltwo}{L^2}
\DeclareMathOperator*{\argmin}{arg\,min}
\newtheorem{proposition}{Proposition}
\newtheorem{definition}{Definition}
\newtheorem{corollary}{Corollary}
\newcommand{\cmark}{\ding{51}}%
\newcommand{\xmark}{\ding{55}}%
\newcommand{\ours}{T-Phenotype}
\newcommand{\overbar}[1]{\mkern 1.5mu\overline{\mkern-1.5mu#1\mkern-1.5mu}\mkern 1.5mu}
\newcommand{\var}[1]{{#1}}
\algrenewcommand\algorithmicrequire{\textbf{Input:}}
\algrenewcommand\algorithmicensure{\textbf{Output:}}
\newcommand{\capstr}[1]{{\texorpdfstring{\uppercase{#1}}{}}}
\title{T-Phenotype: Discovering Phenotypes of \\Predictive Temporal Patterns in Disease Progression}
\author{
    Yuchao Qin\\
	University of Cambridge, UK\\
	\And
    Mihaela van der Schaar \\
    University of Cambridge, UK \\
    The Alan Turing Institute, UK\\
    \And 
    Changhee Lee\\
    Chung-Ang University, South Korea\\
}
\date{}
\begin{document}

\twocolumn[
\maketitle
]

\begin{abstract}
Clustering time-series data in healthcare is crucial for clinical phenotyping to understand patients’ disease progression patterns and to design treatment guidelines tailored to homogeneous patient subgroups.  While rich temporal dynamics enable the discovery of potential clusters beyond static correlations, two major challenges remain outstanding: 
i) discovery of predictive patterns from many potential temporal correlations in the multi-variate time-series data and ii) association of individual temporal patterns to the target label distribution that best characterizes the underlying clinical progression. 
To address such challenges, we develop a novel temporal clustering method, \textit{T-Phenotype}, to discover phenotypes of predictive temporal patterns from labeled time-series data. 
We introduce an efficient representation learning approach in frequency domain that can encode variable-length, irregularly-sampled time-series into a unified representation space, which is then applied to identify various temporal patterns that potentially contribute to the target label using a new notion of path-based similarity. 
Throughout the experiments on synthetic and real-world datasets, we show that T-Phenotype achieves the best phenotype discovery performance over all the evaluated baselines. We further demonstrate the utility of T-Phenotype by uncovering clinically meaningful patient subgroups characterized by unique temporal patterns.

\end{abstract}

\section{INTRODUCTION}
Discovering predictive patterns of disease progression has been a long pursuit in healthcare. 
Clinicians have considered specific clinical (disease) status and the associated patterns as a \textit{phenotype} to uncover the heterogeneity of diseases and to design therapeutic guidelines tailored to homogeneous subgroups \citep{hripcsak2013phenotype,richesson2016clinical}. 
While rule-based phenotypes identified by domain experts have been widely used \citep{denny2013systematic,richesson2016clinical}, designing and validating such rules require tremendous effort. 
Unfortunately, disease progression can manifest through a broad spectrum of clinical factors, collected as a sequence of measurements in electronic health records (EHRs), that may vary greatly across individual patients. This makes it even more daunting for domain experts to transform such raw and complex clinical observations into clinically relevant and interpretable patterns.

Temporal clustering has been recently used as a data-driven framework for phenotyping to partition patients with sequences of observations into homogeneous subgroups.  To discover different temporal patterns, traditional notions of similarity focus on either adjusting similarity measures \citep{xzhang2019clustering, baytas2017patient} or finding low-dimensional representations \citep{jho2014clustering, agiannoula2018clustering} for longitudinal observations. 
These approaches are purely unsupervised and discard valuable information about the disease status that is often available in the clinical data. More recently, predictive clustering methods \citep{lee2020temporal,lee2020outcome,lee2022developing,aguiar2022learning} have introduced a new notion of similarity such that each cluster shares similar disease status to provide a better prognostic value.  
Despite the effort to understand temporal dynamics in their mutual context, 
these clustering methods fail to capture the full picture of disease progression as reflected by covariate trajectories of prognostic characteristics, i.e., temporal patterns associated with specific disease status.
Figure~\ref{fig:clusters} illustrates a pictorial depiction of the notion of phenotypes behind different temporal clustering methods.

\begin{figure*}[!tb]
	\begin{center}
	 \includegraphics[clip, trim=0cm 0.8cm 0cm 0.8cm, width=0.9\textwidth]{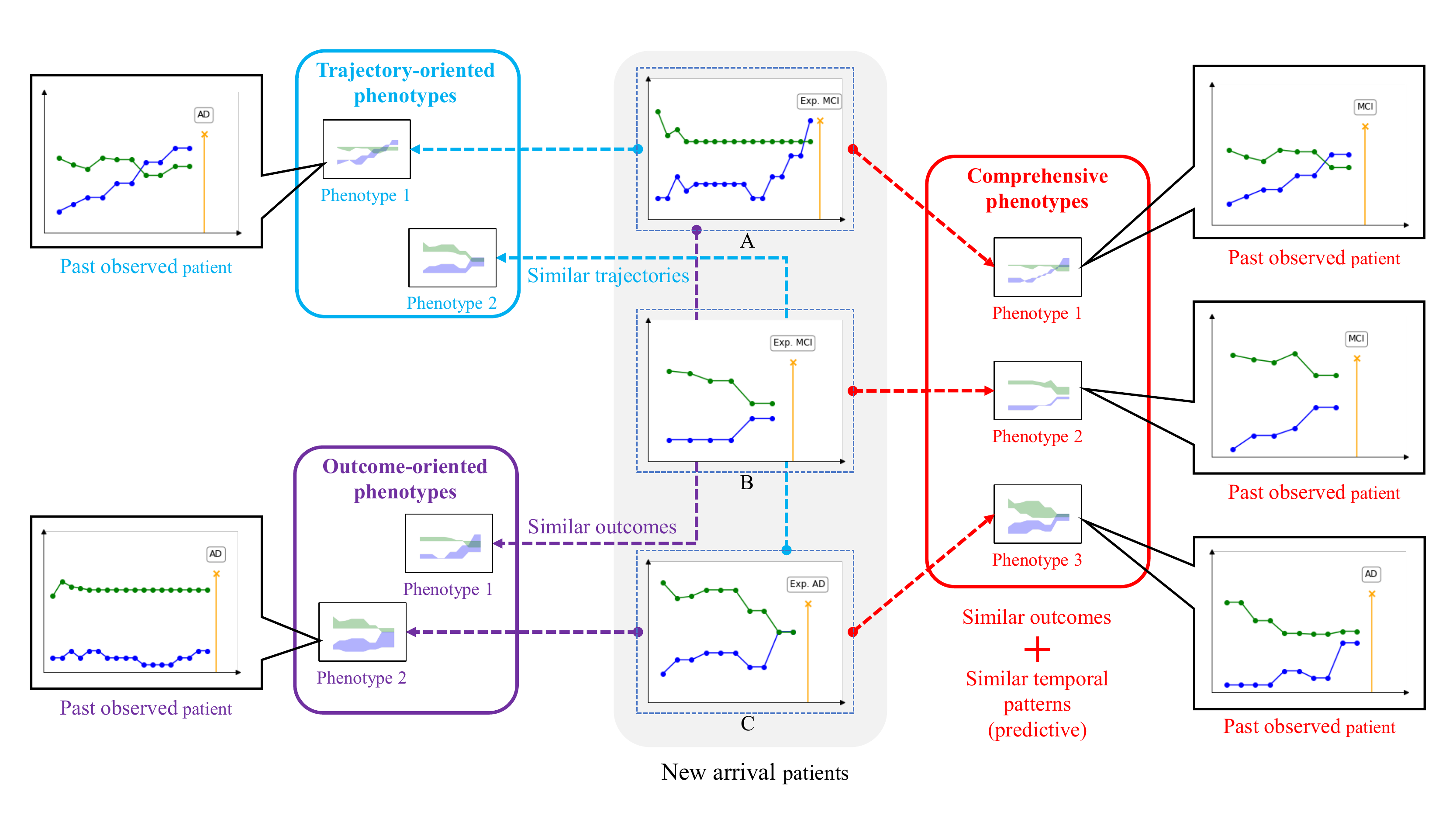}
	\caption{{Different Notions of Temporal Phenotypes.}
		\small
		Purely unsupervised clustering approaches focus on trajectory-oriented phenotypes (\textcolor{Cerulean}{blue}) and disregard the valuable information in patient outcomes. Predictive clustering methods aim at discovering outcome-oriented phenotypes (\textcolor{Purple}{purple}) which may not reflect the heterogeneity in patient trajectories despite the same diagnosis outcome. A desirable phenotyping method shall address both types of similarity and discover comprehensive phenotypes (\textcolor{Red}{red}).
		} 
		\label{fig:clusters}
	\end{center}
\end{figure*}

\textbf{Contribution.}\quad
In this paper, we propose a novel temporal clustering method to correctly uncover predictive temporal patterns descriptive of the underlying disease progression from the labeled time-series data.  
First, we formally define the notion of temporal phenotypes as predictive temporal patterns. 
Then, the association of individual temporal patterns with the target disease status is assessed by proposing a novel path-based similarity score. For effective evaluation of the path-based similarity, we introduce a representation learning approach based on the Laplace transform to convert variable-length, irregularly sampled time-series data into unified embeddings.  
Finally, based on the resulting path-based similarity graph, we formulate the task of temporal phenotyping as a temporal predictive clustering problem that can be efficiently solved by adopting the graph-constrained $K$-means clustering. 

We validate our approach through experiments on synthetic and real-world time-series datasets. Our method discovers temporal phenotypes that provide superior prediction performance compared to state-of-the-art benchmarks, and we corroborate the interpretability of our discovered phenotypes with supporting medical and scientific literature.

\section{TEMPORAL PHENOTYPING}
Suppose disease progression manifests through a multi-variate continuous-time trajectory $\vx(t)\in \mathcal{X}$ defined on $t\in [0,1]$, where $\mathcal{X}$ is the functional space of all possible patient trajectories.\footnote{
Trajectories defined within the interval $\mathbb{R}_{+}$ can be simply scaled to the unit interval $[0,1]$.
} Each trajectory consists of $\mathrm{dim}_x$-dimensional time-varying covariates, i.e., $\vx(t) = [x_1(t),\ldots,x_{\mathrm{dim}_x}(t)]^\top$, each of which can be described by a continuous-time function $x_i$ in $\normltwo_{[0,1]}$ (i.e., $\normltwo$-space under the interval $[0,1]$).\footnote{In many practical scenarios, the continuous-time functions for time-varying covariates are bounded and fall into the $\normltwo$-space which has a natural extension of Euclidean distance.
}  
Thus, the considered trajectory space can be given as $\mathcal{X} = \bigotimes_{\mathrm{dim}_x} \normltwo_{[0,1]}$. 
Each trajectory $\vx$ is correlated with a target label vector $\vy = [y_1, \dots, y_{\mathrm{dim}_y}]^\top \in \mathcal{Y}$ that describes the clinical status of the underlying disease progression (e.g., clinical endpoints). 
Throughout the paper, we focus our description on the case where the outcome of interest $\vy$ is categorical and represented by a one-hot vector, i.e., $\mathcal{Y} = \{0,1\}^{\mathrm{dim}_y}$.

Let $p(\vx, \vy)$ be the joint distribution of the continuous-time trajectory and the label vector. 
To discover temporal patterns that are predictive of the clinical status of patients, we first define a vector-valued function $g(\vx) = [p(y_1|\vx),\ldots, p(y_{\mathrm{dim}_y}|\vx)]^\top$ which implies the categorical conditional distribution $p(\vy|\vx)$. 
We assume the clinical status conditioned on a patient trajectory can be represented by one of the \textit{$\delta$-separable modes} in $g(\vx)$.
These modes are $\delta$-separable such that they can be separated based on a proper distance metric $\mathrm{d}_y$ with some threshold $\delta>0$.
Here, we choose the Jensen–Shannon (JS) divergence as our distance metric, i.e., $\mathrm{d}_y(\vv, \vu) = \frac{1}{2}KL( g(\vv) || \vm) + \frac{1}{2}KL( g(\vu) || \vm)$, where $KL$ is the Kullback-Leibler divergence, $\vm=\frac{g(\vv) + g(\vu)}{2}$.

\subsection{Phenotypes: Predictive Temporal Patterns}
In this subsection, we introduce the formal definition of \textit{phenotypes} as temporal patterns that are predictive of disease progression.
To this goal, we start by describing how the temporal patterns in continuous-time trajectories can be discovered and how the specific disease progression can be associated with each individual pattern.

\textbf{Temporal Patterns.}\quad
A temporal pattern characterizes some temporal dynamics that are shared by a subset of trajectories in $\mathcal{X}$.
Here, we introduce a novel definition to describe temporal patterns in the general form based on connectivity in trajectory space $\mathcal{X}$.
Given two trajectories $\vx^1, \vx^2\in \mathcal{X}$, we define a translation from $\vx^1$ to $\vx^2$, denoted as $\Gamma(\vx^1 \rightarrow \vx^2)$, as a continuous path $\Gamma$ connecting the two trajectories in space $\mathcal{X}$. Typically, $\Gamma(\vx^1 \rightarrow \vx^2)$ can continuously morph the shape of $\vx^1$ into that of $\vx^2$. 
Then, we formally define a \textit{temporal pattern} as a connected set $\Phi \subset \mathcal{X}$ such that all the trajectories in $\Phi$ can be inter-connected by translations within $\Phi$. That is, there exists a series of translations from any trajectory to any other trajectory in $\Phi$.

\textbf{Phenotypes.}\quad
Considering multivariate continuous-time trajectories, a variety of temporal patterns may exist in $\mathcal{X}$ while only a few of them are relevant to the target label. 
In the meantime, the clinical status marked by the same target label may manifest in patient trajectories through different temporal characteristics. 
For instance, in lung transplant referral of cystic fibrosis patients, (i) low lung function score, (ii) rapid declining lung function score, and (iii) multiple exacerbations requiring intravenous antibiotics are identified as distinct predictive temporal patterns \citep{ramos2019lung} among various temporal dynamics.

To provide insights on disease progression, desirable phenotypes shall be defined based on distinct predictive temporal patterns.
In line with such notion of phenotypes, we propose a new path-based similarity score that measures the variation of conditional label distribution (described by function $g(\vx)$) along a translation between two trajectories.
Specifically, consider two continuous-time trajectories $\vx^1, \vx^2$ and a translation $\Gamma(\vx^1 \rightarrow \vx^2)$, the score function evaluates the similarity between $\vx^1$ and $\vx^2$ via their impact on label $\vy$ through path $\Gamma$ as follows:
\begin{equation}
    \mathrm{d}_{\Gamma} (\vx^1,\vx^2) = \max_{\substack{\vx \in \Gamma(\vx^1\rightarrow\vx^2) \\ i \in \{1,2\}}} {\mathrm{d}_y (g(\vx), g(\vx^i))}.\label{eq:path-based-test}
\end{equation}
Small value of $\mathrm{d}_{\Gamma} (\vx^1,\vx^2) $ indicates that trajectories $\vx^1$ and $\vx^2$ share similar clinical status $\vy$ and contain similar temporal patterns that are predictive of their associated label.

Finally, we provide a formal definition of \textit{phenotype} as a predictive temporal pattern associated with a distinct clinical status as follows:
\begin{definition}\label{def:phenotype}
	\textnormal{\textbf{(Phenotype)}} 
	Let $\vv$ be the centroid of a $\delta$-separable mode in $g(\vx)$. Then, there exists a unique phenotype, denoted as a tuple $(\vv, \Phi)$ with $\Phi$ as a set of trajectories, that satisfies the following two properties:
	\begin{equation} \nonumber
	    \begin{aligned}
	    &\text{(Similar clinical status)} &\max_{\vx \in \Phi}~\mathrm{d}_y(g(\vx), \vv)\leq \frac{\delta}{2}, \\
	    &\text{(Similar predictive pattern)} &\max_{\substack{\vx^1,\vx^2\in \Phi\\\Gamma\subseteq \Phi}} \!\!\! \mathrm{d}_{\Gamma}(\vx^1, \vx^2) \leq \delta,
	    \end{aligned} 
	\end{equation}
	and any trajectory $\vx \in \mathcal{X} \setminus \Phi$ is either not connected to $\Phi$ or has a different mode.	
\end{definition}
Intuitively, the homogeneity of each phenotype $(\vv, \Phi)$ guarantees that the continuous-time trajectories exhibiting a similar temporal pattern will lead to a similar clinical status, which in turn provides a prognostic value on the underlying disease progression.

\subsection{Predictive Temporal Clustering}
In practice, the continuous-time trajectories of a patient are systematically collected in EHRs as discrete observations with irregular intervals during his/her regular follow-ups or stay at hospital. 
Hence, we focus this subsection on formulating the task of discovering phenotypes given discrete observations of trajectories as a novel clustering problem.

Suppose we have a dataset $\mathcal{D}=\{(\vt^i, \mX^i,\vy^i)\}_{i=1}^N$ comprising discrete observations on the underlying continuous-time trajectories and target labels. Here, we denote discrete observations as time-series $\mX=[\vx(t_1),\vx(t_2),\ldots,\vx(t_T)]$ which contains sequential observations of a trajectory $\vx$ at observation time stamps $\vt= [t_1, t_2 ,\ldots, t_T]^\top$ with $0\leq t_1 \leq \ldots \leq t_T \leq 1$.
The label vector $\vy\in \mathcal{Y}$ describes the clinical status sampled from the conditional distribution $p(\vy|\vx)$.
From this point forward, we will slightly abuse the notation and interchangeably write $\mX$ to denote the discrete time-series and the associated time stamps.

\textbf{Path-Based Connectivity.}\quad
Note that the property of a phenotype in Definition \ref{def:phenotype} requires all trajectories in that phenotype share a similar predictive pattern. 
Consider two time-series $\mX^1, \mX^2$ with underlying continuous-time trajectories $\vx^1, \vx^2$ from the same phenotype $(\vv, \Phi)$. 
There must exist a translation $\Gamma$ from trajectory $\vx^1$ to $\vx^2$ such that the condition in $\mathrm{d}_{\Gamma} (\vx^1,\vx^2)\leq \delta$ holds.
Violating such a condition implies a significant difference between the two trajectories suggesting they are from different phenotypes. 
Therefore, we utilize the \textit{path-based connectivity test}, i.e.,  $\exists\, \Gamma(\vx^1\rightarrow\vx^2),\, \mathrm{d}_{\Gamma} (\vx^1,\vx^2)\leq \delta$, to assesses the phenotype similarity between two given trajectories $\mX^1$ and $\mX^2$. 
This enables discovery of predictive temporal patterns without access to the ground-truth phenotypes. 
Evaluation of the path-based connectivity on all possible pairs of time-series in dataset $\mathcal{D}$ generates a distance matrix $\mS$.
Element-wise comparison of $\mS$ and threshold $\delta$ yields
a similarity graph $\gG_\delta$ with edges between similar samples. 
We will discuss how we can approximately achieve the path-based connectivity test based on the discrete observations in the next section.

\begin{figure*}[tb!]
	\begin{center}
	\centering
    \includegraphics[clip, trim=1.5cm 3cm 1.5cm 2.5cm, width=0.75\textwidth]{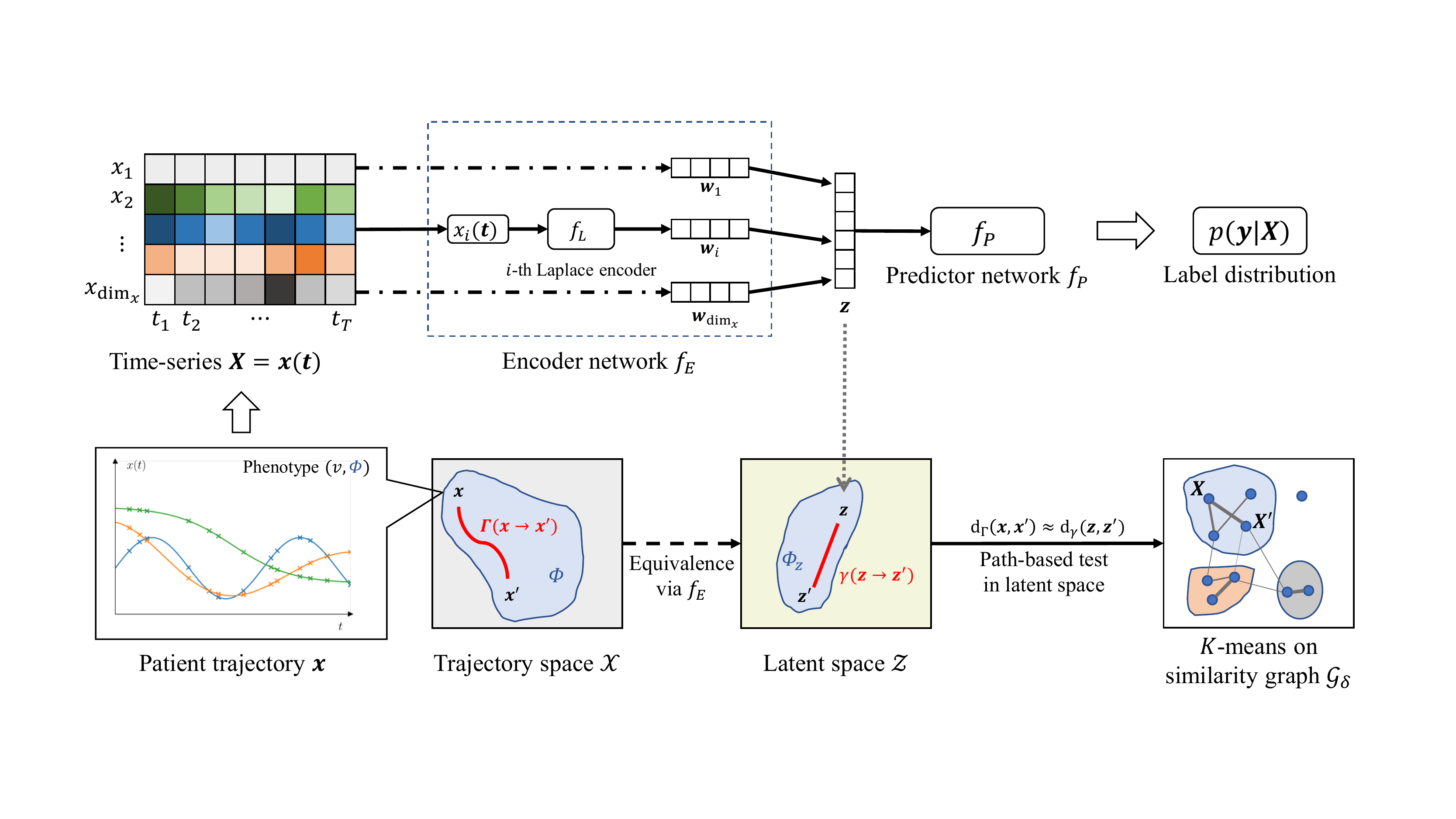}
	\caption{{Overview of {\ours}.}
		\small} 
		\label{fig:overview}
	\end{center}
\end{figure*}

\textbf{Temporal Phenotyping.}\quad
To discover phenotypes from dataset $\mathcal{D}$, we assume that we have a proper approximator $f(\mX)$ of the conditional label distribution $g(\vx)$ from discrete observations in $\mX$.
Thus, similarity graph $\gG_\delta$ can be constructed based on the path-based connectivity test with approximator $f(\mX)$. 
Now, we formulate the task of temporal phenotyping as a predictive clustering problem \citep{lee2020temporal} to group time-series into different clusters on top of $\gG_\delta$. 
More specifically, the clusters (with distinct phenotypes) are discovered by solving the following constrained optimization problem: 
\begin{equation}
\begin{aligned}
\min_{\mathcal{C}} \quad & \sum_{C_k \in \mathcal{C}}\sum_{\mX\in C_k} {\mathrm{d}_y(f(\mX), \vv_k)},\\
    s.t. \quad & \forall \mX^1, \mX^2 \in C_k,~~\mX^1 \xleftrightarrow{\gG_\delta}\mX^2,
\end{aligned}
\label{eq:problem}
\end{equation}
where $\mathcal{C} = \{C_1, C_2, \dots, C_K\}$ is a feasible set of $K\in \sN$ clusters each of which has a centroid $\vv_k$ as the average density $f(\mX)$, 
Since threshold $\delta$ is usually unknown in advance, we set its value according to $\delta = 2 \max_{C_k\in \mathcal{C},\mX\in C_k} \mathrm{d}_y(f(\mX), \vv_k)$ for consistency with Definition \ref{def:phenotype}.
Here, $\mX^1 \xleftrightarrow{\gG_\delta}\mX^2$ implies that there exists a path over graph $\gG_\delta$ such that $\mX^1$ and $\mX^2$ are interconnected. 
In (\ref{eq:problem}), the objective function  encourages the cluster centroids to be clearly distinguished in approximated label distribution $f(\mX)$ while the constraint on similarity graph $\gG_\delta$ ensures that samples in the same cluster are of similar phenotypes.
Each discovered cluster $C_k$ represents a unique phenotype with centroid $\vv_k$ describing the associated clinical status and allows us to explain the predictive temporal pattern in terms of the collection of time-series in $C_k$.

Unfortunately, the optimization problem in (\ref{eq:problem}) is highly non-trivial due to the following two challenges: First, it requires to learn a proper approximation of the conditional label distribution from irregularly-sampled discrete time-series. Second, an efficient evaluation of the path-based connectivity test is required to construct similarity graph $\gG_\delta$ given discrete time-series in $\mathcal{D}$.

\section{METHOD: T-PHENOTYPE}
In this section, we propose a novel temporal clustering framework, \textit{\ours}, that effectively discovers phenotypes from discrete time-series data. 
To estimate the conditional label distribution from discrete time-series, we introduce two networks, an encoder and a predictor. 
The encoder, $f_{E}$, comprises $\mathrm{dim}_x$ feature-wise Laplace encoders, each of which transforms a single feature dimension of discrete time-series $\mX$ into a fixed-length latent embedding.
The predictor, $f_{P}$, takes embeddings from $\mathrm{dim}_x$ Laplace encoders as the input $\vz$ in the latent space and estimates the conditional label distribution. 
The proposed Laplace encoders, $f_{{L}}$, allow us to 
establish (approximately) equivalence translation in the latent space and thereby to efficiently evaluate the path-based connectivity test between discrete time-series in dataset $\mathcal{D}$. 
Then, given an approximate similarity graph $\gG_\delta$ constructed from the result of pair-wise connectivity test, we propose a graph-constrained $K$-means algorithm to discover distinct phenotypes.  The overview of steps involved in \ours~ is illustrated in Figure~\ref{fig:overview}.

\subsection{Time-Series Embedding via Laplace Encoder}
\label{sec:encoder}
Now, we introduce a novel time-series encoder which encodes each dimension of a given discrete time-series into a unified parametric function in the frequency domain as an approximation of the Laplace transform.

\textbf{Laplace Encoder.} \quad
Let $x(\vt) = [x(t_1), \dots, x(t_T)]^\top \in \mathbb{R}^{T}$ be a time-series of discrete observations on a univariate trajectory $x(t)$ at time stamps $\vt = [t_1, \ldots, t_T]^\top$ in the unit interval. 
The Laplace encoder (parameterized by $\theta_L$), $f_{L}: \mathbb{R}^{T} \rightarrow \mathbb{C}^{n(d+1)}$, encodes discrete time-series $x(\vt)$ into a rational function on the complex plane with $n\in \sN$ poles of maximum degree of $d\in \sN$ as follows:
\begin{equation} \label{eq:laplace-transform}
    F_{w}(s) = \sum_{m=1}^n \sum_{l=1}^d \frac{c_{m,l}}{(s-p_m)^l}, ~~~~ c_{m,l}, p_m\in \sC. 
\end{equation}
Here, $\vw \triangleq f_{L}(x(\vt))= [p_1, \ldots, p_n, c_{1,1}, \ldots, c_{n, d}]^\top$ is the Laplace embedding comprising the poles and the corresponding coefficients. 
Note that the poles in (\ref{eq:laplace-transform}) are distinct and are in a lexical order, i.e., $p_m \leq p_{m+1}$ for $m=1,\dots, n-1$ where $p_{m} \leq p_{n}$ if and only if $\mathrm{Re}(p_m) < \mathrm{Re}(p_{n})$ or $\mathrm{Re}(p_m) = \mathrm{Re}(p_{n}) \wedge  \mathrm{Im}(p_m)\leq \mathrm{Im}(p_{n})$ holds. 
Then, the time-domain function can be efficiently reconstructed via the inverse Laplace transform: 
\begin{equation} 
    \hat{x}(t)
    = \frac{1}{2\pi j} \lim_{T\rightarrow \infty} \int_{\sigma-jT}^{\sigma+jT} e^{st}F_{w}(s)\mathrm{d}s,\label{eq:inverse-laplace-transform}
\end{equation}
where $j^2 = -1$ and $\sigma$ is some suitable complex number such that $\mathrm{Re}(\sigma) > \max_{p_m\in \vw} \mathrm{Re}(p_m)$.
With a sufficient number of poles, the Laplace embedding $\vw$ becomes an equivalent description of the underlying trajectory $x(t)$.
That is, the orthonormal basis $\{e^{2\pi j m t}, m\in \sZ\}$ of $\normltwo_{[0,1]}$ is covered by the reconstruction $\hat{x}(t)$ when $n\rightarrow \infty$.

Given a dataset of $N$ discrete univariate time-series, i.e., $\{x^{i}(\vt)\}_{i=1}^{N}$, we train the Laplace encoder utilizing the following loss function that consists of the time-series reconstruction error and the  regularization term specifically designed to encourage unique Laplace embeddings:
\begin{equation}
\mathcal{L}_\mathrm{laplace}(\theta_L) = \mathcal{L}_\mathrm{mse}(\theta_L) + \alpha \mathcal{L}_\mathrm{unique}(\theta_L)
 \label{eq:loss-encoder}
\end{equation}
where $\alpha$ is a balancing coefficient. The former term, i.e., $\mathcal{L}_\mathrm{mse}(\theta_L)= \frac{1}{N} \sum_{i=1}^{N} \|x^{i}(\vt) - \hat{x}^{i}(\vt)\|^2_2$, is the reconstruction error from our Laplace embeddings, and the latter term, i.e., $\mathcal{L}_\mathrm{unique}(\theta_L)=\frac{1}{N(N-1)} \sum_{i\neq j} \ell_\mathrm{unique}(\hat{x}^{i}(\vt), \hat{x}^{j}(\vt))$, encourages the uniqueness of the Laplace embedding. More specifically, $\ell_\mathrm{unique}$ focuses on three aspects -- (i) the obtained poles are distinct, (ii) the reconstructed trajectories are real-valued, and (iii) no two distinct Laplace embeddings generate the same trajectory. We further elaborate the uniqueness regularization in the Appendix.

\textbf{From Trajectory Space to Latent Space.} \quad
Utilizing $\mathrm{dim}_x$ feature-wise Laplace encoders as our encoder, $f_{E}$, any discrete observations of a continuous-time trajectory $\vx \in \mathcal{X}$ can be transformed into a fixed-length embedding $\vz \in \mathcal{Z}$ in the latent space as a composition of $\mathrm{dim}_x$ Laplace embeddings, i.e., $\vz \triangleq [f_{L}(x_{1}(\vt)), \dots, f_{L}(x_{\mathrm{dim}_x}(\vt))]^{\top}$. 
The following proposition builds a strong connection between the trajectory space $\mathcal{X}$ and the latent space $\mathcal{Z}$:
\begin{proposition} \label{proposition1}
    Without loss of generality, consider univariate continuous-time trajectories $\hat{x}^1, \hat{x}^2 \in \mathcal{X}$ and their corresponding latent embeddings $\vz^1, \vz^2 \in \mathcal{Z}$, respectively. Then, the distance between two trajectories can be bounded by $\| \hat{x}^1 - \hat{x}^2 \|_{\normltwo_{[0,1]}}^{2} \leq \psi \| \vz^1 - \vz^2\|_2^2$, 
    where $\psi>0$ is a constant and $\| x(t) \|_{\normltwo_{[0,1]}}^2 \!\!=\! \int_{0}^1 \! {x(t) \overbar{x(t)} \mathrm{d}t}$.
\end{proposition}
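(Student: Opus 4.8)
Since the statement reduces everything to the univariate case (the multivariate bound then follows by summing the estimate over the $\mathrm{dim}_x$ feature dimensions, as $\vz$ is the concatenation of the $f_L(x_i(\vt))$), I would work with a single Laplace embedding, so $\vz=\vw$. The first step is to make the reconstruction map explicit: evaluating the contour integral in \eqref{eq:inverse-laplace-transform} for the rational function \eqref{eq:laplace-transform} gives, for $t\in[0,1]$,
\[
\hat x(t)\;=\;\sum_{m=1}^{n}\sum_{l=1}^{d} c_{m,l}\,\frac{t^{\,l-1}}{(l-1)!}\,e^{p_m t},
\]
so $\hat x$ is a finite sum of polynomial-times-exponential ``modes'' and depends real-analytically on the entries of $\vw$.

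The second step is a term-by-term estimate of $\hat x^1-\hat x^2$. Writing $\phi^{k}_{m,l}(t)=\frac{t^{l-1}}{(l-1)!}e^{p^{k}_m t}$, I would use the telescoping identity $c^1_{m,l}\phi^1_{m,l}-c^2_{m,l}\phi^2_{m,l}=(c^1_{m,l}-c^2_{m,l})\,\phi^1_{m,l}+c^2_{m,l}\,(\phi^1_{m,l}-\phi^2_{m,l})$ together with the triangle inequality in $\normltwo_{[0,1]}$. The coefficient pieces are bounded by $|c^1_{m,l}-c^2_{m,l}|\cdot\|\phi^1_{m,l}\|_{\normltwo_{[0,1]}}$, where $\|\phi^1_{m,l}\|_{\normltwo_{[0,1]}}^2=\int_0^1\frac{t^{2(l-1)}}{((l-1)!)^2}e^{2\mathrm{Re}(p^1_m)t}\,\mathrm{d}t$ is finite. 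The pole pieces I would control with the elementary inequality $|e^{at}-e^{bt}|\le t\,|a-b|\,e^{\max(\mathrm{Re}\,a,\mathrm{Re}\,b)\,t}$ for $t\ge 0$ (mean value theorem along the segment $[a,b]$), which yields a bound of the form $|c^2_{m,l}|\cdot|p^1_m-p^2_m|$ times a finite factor. Collecting the $2nd$ pieces and applying Cauchy--Schwarz to pass from the resulting $\ell_1$-type sum in the parameter differences to $\|\vz^1-\vz^2\|_2$ gives $\|\hat x^1-\hat x^2\|_{\normltwo_{[0,1]}}\le\sqrt{\psi}\,\|\vz^1-\vz^2\|_2$; squaring is the claim.

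The crux — and the only non-routine point — is that every factor above carries a $e^{\mathrm{Re}(p_m)}$ or a $|c_{m,l}|$, so a single universal $\psi$ cannot exist: letting a pole's real part grow makes $\hat x$ arbitrarily sensitive to perturbations of $\vw$. I would therefore invoke the standing regime in which the admissible embeddings lie in a bounded set, say $\mathrm{Re}(p_m)\le B$ and $|c_{m,l}|\le M$ for all $m,l$ and all embeddings under consideration (which holds in practice because the encoder outputs are bounded); then all factors are uniformly bounded and $\psi$ becomes an honest constant depending only on $n,d,B,M$. Equivalently and more abstractly, one may simply note that the real-analytic map $\vw\mapsto\hat x\in\normltwo_{[0,1]}$ has bounded Fréchet derivative on any compact parameter set, hence is Lipschitz there, with $\sqrt{\psi}$ the Lipschitz constant — the explicit estimate above being a concrete realization of this. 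I would close by remarking that the distinctness and lexical ordering of the poles play no role in this direction: they matter only for a converse bound (estimating $\|\vz^1-\vz^2\|_2$ by the trajectory distance), which is not asserted, so near-coincidences of poles cause no difficulty here.
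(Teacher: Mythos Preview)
Your argument is correct and follows essentially the same route as the paper: write out the inverse Laplace transform explicitly as $\hat x(t)=\sum_{m,l}\frac{c_{m,l}t^{l-1}}{(l-1)!}e^{p_m t}$, telescope so that coefficient differences and pole differences are isolated, bound the pole pieces by a linear-in-$|p_m^1-p_m^2|$ estimate on $|e^{p_m^1 t}-e^{p_m^2 t}|$, and absorb the remaining integrals into a constant. The only cosmetic difference is the ordering of the telescope: the paper changes one coordinate of $\vw$ at a time across the full vector (via an operator $S_i(\vw^1,\vw^2)$) and derives the exponential bound by a direct calculation of $|1-e^{\Delta t}|^2$, whereas you telescope within each $(m,l)$ mode and invoke the mean value theorem --- both yield the same term-by-term estimates. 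Your explicit remark that a universal $\psi$ requires the poles' real parts and the coefficients to lie in a bounded set is a genuine tightening: the paper writes ``some suitable constant'' for each $\psi^c_{m,l},\psi^p_m$ without flagging their dependence on the embeddings, and only elsewhere (in the implementation notes) clips $p_m$ and $c_{m,l}$ to compact ranges, which is exactly the hypothesis you identify as necessary.
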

The detailed proof can be found in the Appendix.
Consider a subset of latent variables $\Phi_z$ and the corresponding trajectory set $\Phi$ of their time-domain representations.
The upper bound in Proposition \ref{proposition1} implies that continuity of $\Phi_z$ in the latent space leads to the continuity of $\Phi$ in the trajectory space.
This property allows efficient evaluation of the path-based connectivity test in the latent space as illustrated in the following subsection.

\subsection{Efficient Evaluation of Path-based Similarity}
Construction of similarity graph $\gG_\delta$ involves iterative evaluation of the path-based similarity score $\mathrm{d}_\Gamma$ in (\ref{eq:path-based-test}) for all possible pairs of time-series samples in  $\mathcal{D}$. This requires a substantial number of computations in both constructing translation $\Gamma$ and calculating conditional $g(\vx)$ on all available continuous-time trajectories $\vx\in \Gamma$. 
Instead, we efficiently approximate the similarity graph $\gG_\delta$ via path-based connectivity test in the latent space and estimate the conditional $g(\vx)$ via neural networks.

\textbf{Translation in Latent Space.} \quad
Consider two trajectories $\hat{\vx}^1, \hat{\vx}^2\in \mathcal{X}$ with the corresponding latent embedding $\vz^1, \vz^2 \in \mathcal{Z}$.
For any translation $\Gamma(\hat{\vx}^1\rightarrow \hat{\vx}^2) \subseteq \mathcal{X}$ in trajectory space, we can always find a continuous path in the latent space, i.e., ${\gamma}(\vz^1 \rightarrow \vz^2) \subseteq \mathcal{Z}$, such that the distance between its time-domain reconstruction and $\Gamma$ is minimized.
We consider $\gamma$ to be an (approximately) equivalent translation of $\Gamma$.\footnote{The equivalence is strict when all trajectories along translation $\Gamma$ have rational Laplace transform as described in (\ref{eq:laplace-transform}).} This enables us to capitalize on the translation in the latent space without constructing intermediate trajectories along path $\Gamma$, which significantly reduces computations in obtaining the path-based similarity in (\ref{eq:path-based-test}).

\textbf{Predictor.}\quad
To estimate the function $g(\vx)$, we utilize the time-series encoder $f_{E}$, which consists of $\mathrm{dim}_{x}$ Laplace encoders, and a predictor $f_{P}$ (an MLP parameterized by $\theta_{P}$) to construct the approximator as $f(\mX) \triangleq f_{P}\circ f_{E}(\mX) \approx g(\vx)$ where $\mX$ is the discrete observation of trajectory $\vx$. 
The predictor $f_{P}$ is trained based on the cross-entropy loss:
\begin{equation}\label{eq:loss-predictor}
    \mathcal{L}_\mathrm{predictor}(\theta_{P}) = - \frac{1}{N} \sum_{i=1}^{N}  \sum_{c=1}^{\mathrm{dim}_y} \vy^{i}_{c} \log f_{P}(\vz^{i})_{c},
\end{equation}
where $\vz = f_{E}(\mX)$ and subscript $c$ indicates the $c$-th element in the output space. To maintain the property of the Laplace encoders, we only update the predictor via the signal from the label during training.

Consider a trajectory translation $\Gamma(\hat{\vx}^1\rightarrow\hat{\vx}^2)$ and its equivalent translation $\gamma(\vz^1\rightarrow\vz^2)$ in latent space, the path-based similarity can be approximately calculated as
\begin{equation}
        \mathrm{d}_\Gamma(\hat{\vx}^1, \hat{\vx}^2) \approx  \,\mathrm{d}_\gamma(\vz^1, \vz^2) = \!\! \max_{\vz\in \gamma, i=1,2} \mathrm{d}_y(f_{P}(\vz), f_{P}(\vz^i)).
\label{eq:latent-path-based-test}
\end{equation} 
Hence, given two discrete time-series $\mX^1$ and $\mX^2$, the path-based connectivity test can be efficiently performed along translation $\gamma$ in the latent space without assessing the corresponding translation in the trajectory space ${\mathcal{X}}$.

\textbf{Approximate Similarity Graph.}\quad
Consider a phenotype $(\vv, \Phi)$ where centroid $\vv$ represents a specific clinical status and $\Phi$ is the associated predictive temporal pattern. 
The encoder $f_{E}$ is trained to map time-series $\mX$ sampled from trajectories in $\Phi$ into a connected area $\Phi_z$ in latent space $\mathcal{Z}$ via Laplace encoders. 
Given time-series $\mX$ that is observed from trajectory $\vx\in \Phi$, Definition \ref{def:phenotype} implies that we have $\mathrm{d}_y(f(\mX), \vv)\leq \frac{\delta}{2}$ where $f(\mX) = f_P(\vz)$ and $\vz=f_E(\mX)\in \Phi_z$. 
Hence, for two embeddings $\vz^1, \vz^2 \in \Phi_z$, there always exist a translation $\gamma(\vz^1\rightarrow\vz^2)\subseteq \Phi_z$ such that $\mathrm{d}_\gamma(\vz^1, \vz^2) \leq \delta$ due to the connectivity of $\Phi_z$ in the latent space. 
If two latent embeddings $\vz^1, \vz^2$ are located in the same convex subset of $\Phi_z$, linear path $\Bar{\gamma}(\vz^1\rightarrow\vz^2) = \{\vz| (1-a) \vz^1 + a \vz^2 , a\in [0,1]\}$ suffices the connectivity test. 
When $\vz^1$ and $\vz^2$ are in different convex subsets, the connectivity of $\Phi_z$ guarantees that there exists a series of intermediate points $\vz^{m_1}, \vz^{m_2}, \ldots, \vz^{m_l}$ such that composite path $\gamma(\vz^1\rightarrow\vz^2) = \Bar{\gamma}(\vz^1\rightarrow\vz^{m_1}) \cup \ldots \cup\Bar{\gamma}(\vz^{m_l}\rightarrow\vz^2) $ is inside $\Phi_z$ and can be used for connectivity test.
Therefore, in this work, we simplify the path-based connectivity test to the linear paths between latent variables
as the similarity between two time-series can be inferred based on these linear paths. 
Overall, given two time-series $\mX^i$ and $\mX^j$, we calculate the approximate distance $\mathrm{d}_{\Bar{\gamma}}(f_E(\mX^i),f_E(\mX^j))$ via discrete points along path $\Bar{\gamma}$, which is stored in element $S_{ij}$ of \textit{path-based distance matrix} $\mS$. 
The approximate similarity graph $\gG_\delta$ is then constructed with edges between samples $\mX^i$ and $\mX^j$ if and only if $S_{i,j}\leq \delta$.

\begin{table*}[!htb]
\caption{{Comparison of Temporal Clustering Methods.}
\small
The difference in the notion of phenotypes and similarity measure are highlighted together with two desiderata:
(i) clusters are outcomes associated; and (ii) with interpretable insights on cluster assignment.} 
\label{tab:comparison}
\small
\centering
\begin{tabular}{lllcc}
\toprule
\capstr{\textbf{Method}} & \capstr{\textbf{Phenotype}} & \capstr{\textbf{Similarity}} \capstr{\textbf{Measure}} & \capstr{\textbf{(i)}} &\capstr{\textbf{(ii)}} \\
\midrule
Deep temporal $K$-means & Distance-based & Euclidean distance& \cmark&\xmark\\
\citet{bahadori2015functional} & Affinity-based &  Self-expression&\xmark&\xmark\\
\citet{chen2022clustering}& Pattern-oriented& Latent distance & \xmark & \cmark\\
\citet{aguiar2022learning}&Attention\&outcome-oriented& KL-divergence & \cmark& \cmark\\
\citet{lee2020temporal} & Outcome-oriented & KL-divergence & \cmark& \xmark\\
{\ours} (Ours) & Predictive pattern-oriented & Path-based connectivity & \cmark&\cmark\\
\bottomrule
\end{tabular}
\end{table*}

\subsection{Predictive Clustering on Similarity Graph}
Unfortunately, solving the clustering objective in (\ref{eq:problem}) is a NP-hard combinatorial problem. Thus, we introduce a greedy approach to discover the temporal clusters from the path-based distance matrix $\mS$ defined in the previous subsection.

The objective function in (\ref{eq:problem}) has the following upper bound:
\begin{equation} \label{eq:warm_start}
    \begin{aligned}
    J &\triangleq \sum_{C_k \in \mathcal{C}}\sum_{\mX\in C_k} {\mathrm{d}_y(f(\mX), \vv_k)},\\
    &\leq \sum_{C_k \in \mathcal{C}} \frac{1}{|C_k|} \sum_{\mX^i\!,\mX^j\in C_k} {\mathrm{d}_y(f(\mX^i), f(\mX^j))},\\
    &\leq  \sum_{C_k \in \mathcal{C}} \sum_{\mX^i\!,\mX^j\in C_k} {\mathrm{d}_{\Bar{\gamma}}(\vz^i, \vz^j)},\\
    &=  \sum_{C_k \in \mathcal{C}} \sum_{\mX^i\!,\mX^j\in C_k} {S_{ij}} \triangleq \Bar{J}(\mS) ,
    \end{aligned}
\end{equation}
where $\vz^i=f_E(\mX^i)$,
latent translation $\Bar{\gamma}$ is a linear path connecting two embeddings $\vz^i$ and $\vz^j$.
The first inequality comes from the convexity of the JS divergence, and the second inequality establishes from equation (\ref{eq:latent-path-based-test}) and the fact that $|C_k|\geq 1$.
Local minimum of the upper bound $\Bar{J}(\mS)$ can be achieved via a greedy $K$-partitioning algorithm based on pair-wise sample distances in matrix $\mS$.

Utilizing the approximate solution in (\ref{eq:warm_start}) as warm-start, we propose a graph-constrained $K$-means clustering approach to solve problem (\ref{eq:problem}) via a greedy breadth-first search algorithm \var{GK-means} (details in Appendix). The overview of our predictive clustering method, {\ours}, is given in Algorithm \ref{alg:clustering}. More details about the algorithm are provided in the Appendix.

\begin{algorithm}
\caption{~~\ours}\label{alg:clustering}
    \small
  \begin{algorithmic}
  \Require{dataset $\mathcal{D}$, number of clusters $K$}
    \Ensure{$\mathcal{C}=\{C_1, C_2, \dots, C_K \}$}
    \State calculate distance matrix $\mS$ based on (\ref{eq:latent-path-based-test})
    \State $\mathcal{C} \gets \argmin_{\mathcal{C}} \Bar{J}(\mS)$ \Comment{warm-start}
    \State $\delta \gets \log(2)$ \Comment{upper bound of $\mathrm{d}_\mathrm{JS}$}
    \While{not converged}
        \For{$k = 1,2,\ldots, K$}
        \State update cluster seed $e_k$ via (\ref{eq:centroid})
        \EndFor
        \State $\delta' \gets 2 \max_{C_k\in \mathcal{C},\mX\in C_k} \mathrm{d}_y(f(\mX), \vv_k)$
        \State $\delta \gets \min (\delta, \delta')$ \Comment{upper bound $J\leq N \delta$}
        \State create similarity graph $\gG_\delta$ from $S_{i,j}\leq \delta$
        \State $\mathcal{C} \gets \text{\var{GK-means}}(J| e_1,e_2,\ldots, e_K, \gG_\delta)$
    \EndWhile
  \end{algorithmic}
\end{algorithm}

The cluster seeds in Algorithm \ref{alg:clustering} are used to perform greedy cluster expansion over similarity graph $\gG_\delta$.
For the $k$-th cluster, the cluster seed $e_k = (\vv_k, \mX^{(k)})$
can be given as 
\begin{equation}
     \vv_k = \frac{1}{|C_k|} \!\sum_{\mX\in C_k}\!  f(\mX),~~\mX^{(k)} \!= \argmin_{\mX\in C_k}  \mathrm{d}_y(f(\mX),\vv_k),
    \label{eq:centroid}
\end{equation}
where $\vv_k$ is the cluster centroid and $\mX^{(k)}$ is the representative time-series in cluster $C_k$ with closest conditional to that of the centroid.

\section{RELATED WORK}

Different strands of clustering methods have been increasingly investigated for knowledge discovery from time-series data with various similarity notions accustomed to specific application scenarios. 
One strand is unsupervised clustering methods that adopt the traditional notion of similarity into the time-series setting. 
To flexibly incorporate with variable-length irregularly-sampled time-series observations, the traditional methods applied $K$-means clustering by either finding fixed-length and low-dimensional representations using deep learning-based sequence-to-sequence model \citep{ma2019learning, xzhang2019clustering} or on modifying the similarity measure such as dynamic time warping (DTW) \citep{agiannoula2018clustering} and the associated graph Laplacian \citep{lei2019similarity,hayashi2005embedding}. 
Alternatively, \citet{bahadori2015functional} focused on sample affinities to conduct spectral clustering, and 
\citet{chen2022clustering} proposed a deep generative model whose parametric space is then used for clustering.
Further, advanced hidden Markov models \citep{ceritli2022mixture} and Gaussian processes \citep{schulam2015clustering} have also been utilized together with hierarchical graph models in disease subtype discovery.
In general, these methods are limited by some model specifications such as the linear subspace assumptions and graphical models for the underlying data generation process.

Clusters identified through these methods are purely unsupervised -- they do not account for patients' clinical outcomes that are often available in EHRs -- which may lead to heterogeneous outcomes even for patients in the same cluster. To overcome this issue, another strand of clustering methods combine predictions on the future outcomes with clustering. 
\citet{lee2020temporal} proposed an actor-critic approach to divide time-series of patient trajectories into subgroups based on their associated clinical status. 
The discovered patient subgroups allow clinicians to investigate the temporal patterns related to the transition of disease stages. 
\citet{aguiar2022learning} extended it to capture phenotype-related feature contributions by employing an attention mechanism. 
Given predicted clusters, visualizing the associated attention map provides additional interpretability about the underlying disease progression. 

Unfortunately, actionable information that can be inferred from the aforementioned temporal predictive clusters is still limited. These methods primarily focus on finding the discrete representations that can best describe the outcome labels rather without properly associating with temporal patterns that can be found among time-series samples. 
In this paper, we propose a novel temporal clustering method to correctly uncover predictive temporal
patterns descriptive of the underlying disease progression
from the labeled time-series data. Therefore, our method  not only can provide clusters that have a prognostic value but also can offer interpretable information about the disease progression patterns.

\section{EXPERIMENTS}
In this section, we evaluate the clustering performance and the prognostic value of {\ours} with one synthetic dataset and two real-world datasets (detailed statistics are provided in the Appendix).

\textbf{Synthetic Dataset.} \quad
We construct a synthetic dataset of $N=1200$ samples with ground truth cluster  labels. Each sample comprises discrete observations of a 2-dimensional trajectory $\vx(t)$ and the target binary outcome. 
We design the two elements $x_1(t)$ and $x_2(t)$ to model trend and periodicity of a trajectory, respectively: we set $x_1(t)=\iota \cdot \mathrm{sigmoid}(a\cdot (t- b-\varphi ))$ with sign $\iota\in \{-1,1\}$, $a=10$, $b=0.5$, and $\varphi \sim \mathrm{exp}(\frac{3}{10})$ and set $x_2(t)=\sin(c\cdot(t-\varphi))$ with $c\in \{4,6,8\}$ and $\varphi$ identical to that of  $x_1$.
The trajectory $\vx=[x_1,x_2]^\top$ is irregularly observed over $20$ time stamps in $t\in[0,2]$ with a white noise $\mathcal{N}(0,0.1^2)$ for each variable. 
We set $c$ as the ground truth phenotype label representing different periodicity and set the target outcome label $y$ as $y=0$ when $c=6$ and $y=1$ otherwise.

\textbf{ADNI Dataset.}\quad
The Alzheimer’s Disease Neuroimaging Initiative\footnote{\scriptsize \url{https://adni.loni.usc.edu}} (ADNI) dataset includes records on the progression of Alzheimer’s disease (AD) of $N=1346$ patients with regular follow-ups every six months.
Each patient is associated with various biomarkers, evaluation of MRI and PET images, and cognitive tests results. We set the target outcome at each time stamp as the three diagnostic groups -- i.e., normal brain functioning (NL), mild cognitive impairment (MCI), and AD -- which is used to indicate different stages of AD progression. 
We focus on three important temporal variables -- i.e., the genetic biomarker of apolipoprotein (APOE) $\varepsilon4$ gene, the hippocampus evaluation from MRI, and the cognitive test result of CDRSB -- to predict the AD progression. 
 
\textbf{ICU Dataset.}\quad
The PhysioNet ICU\footnote{\scriptsize \url{https://physionet.org/content/challenge-2012/}} \citep{goldberger2000physiobank} dataset contains temporal observations on 42 covariates of adult patients over the first 48 hours of ICU stay. 
We extract $N=1554$ records of adult patients admitted to the medical or surgical ICU. Temporal covariates used in the experiments are age, gender, Glasgow Coma Scale (GCS), and partial pressure of arterial CO2 (PaCO2) with a time resolution of 1 hour, and we set patient mortality as the target binary outcome of interest.

\textbf{Baselines.}\quad
We compare the performance of {\ours} with the following benchmarks ranging from traditional method to recently developed deep learning-based methods, where each clustering method reflects a different notion of temporal phenotypes: 1) $K$-means with warping-based distance (KM-DTW); 2) deep temporal $K$-means with the encoder-predictor (E2P) structure introduced in \citep{lee2020temporal}, i.e., KM-E2P(z) and KM-E2P(y); 3) $K$-means on top of our proposed Laplace encoder (KM-$\mathcal{L}$); 4) sequence-to-sequence with $K$-means friendly representation space (SEQ2SEQ); and 5) the state-of-the-art temporal clustering approach AC-TPC \citep{lee2020temporal}. Detailed description can be found in Appendix.
In addition, we consider the ablation study of {\ours} with joint optimization for the Laplace encoders and predictor $f_P$ and denote such model with {\ours} (J).

\begin{figure*}[!htb]
	\begin{center}
	\centering
	\addtolength{\tabcolsep}{-0.6em}
  \begin{tabular}{ c }
    \begin{subfigure}{0.4\linewidth}
      \includegraphics[width=\linewidth]{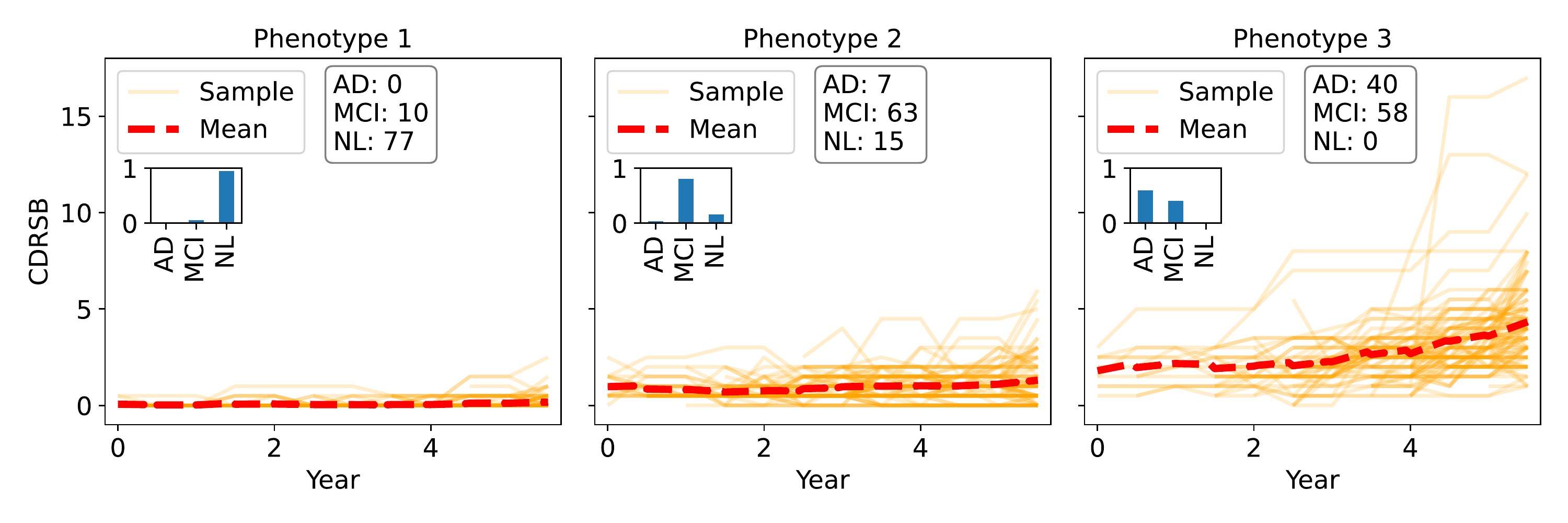}
      \caption{Three phenotypes from AC-TPC.}
        \label{fig:ADNI-AC-TPC}
    \end{subfigure}
        \\
        \begin{subfigure}{0.54\linewidth}
        \includegraphics[width=\linewidth]{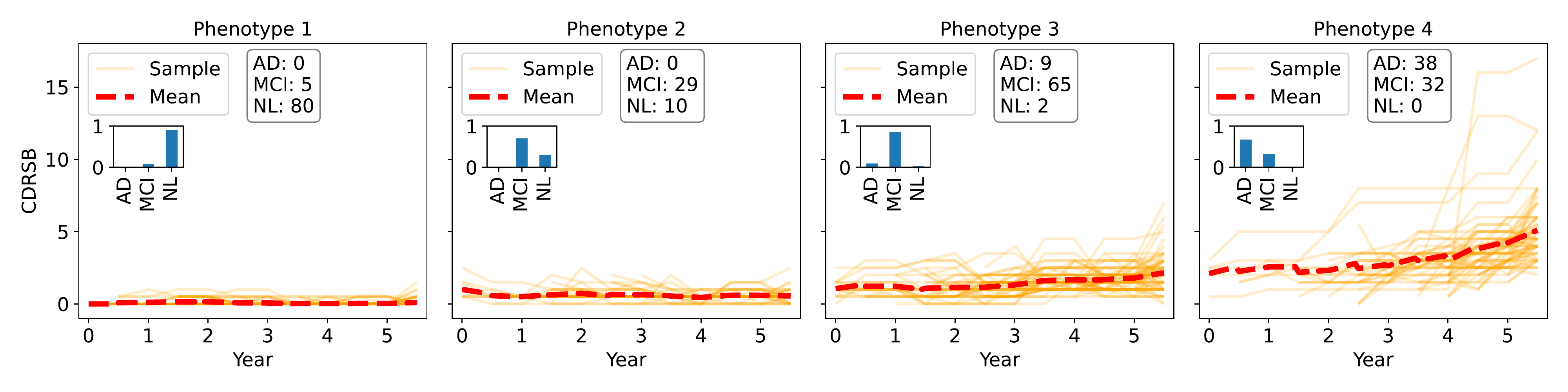}
      \caption{Four phenotypes from {\ours}.}
    \label{fig:ADNI-ours}
    \end{subfigure}
  \end{tabular}
  \begin{tabular}{ c }
  \begin{subfigure}{0.44\linewidth}
    \includegraphics[clip, trim=1.8cm 0cm 1.8cm 0cm, width=\linewidth]{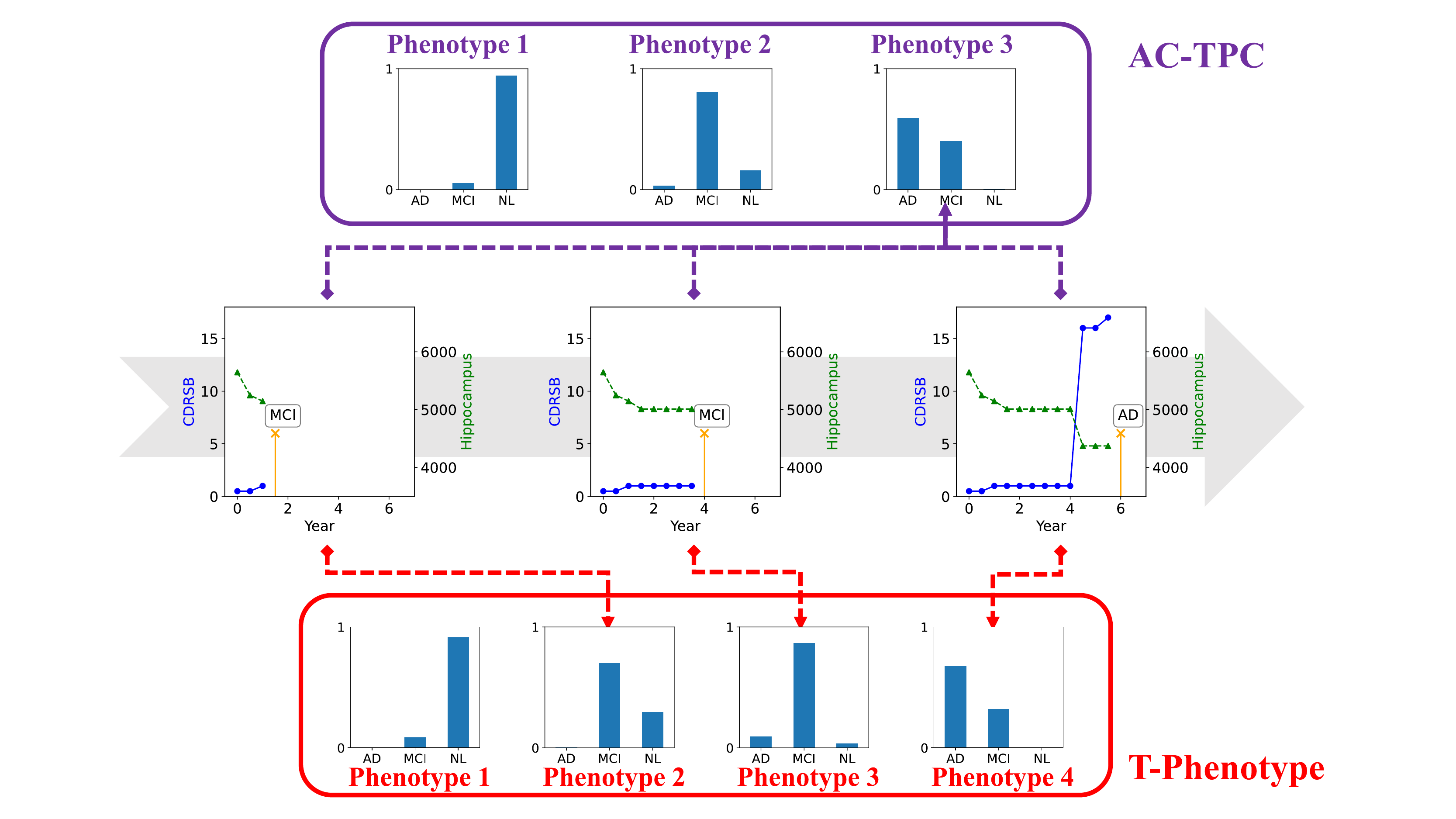}
    \caption{{Prognostic values of {\ours} and AC-TPC.}\small} 
		\label{fig:temporal-cluster}
    \end{subfigure}
  \end{tabular}%
      \caption{{Comparison of Phenotypes Discovered by {\ours} and AC-TPC on the ADNI Dataset.}\small} 
		\label{fig:ADNI-phenotypes}
	\end{center}
\end{figure*}

Throughout the experiments, time stamps of discrete time-series are scaled into $t\in [0,1]$.
For the synthetic and ADNI datasets, we use 64/16/20 train/validation/test splits in experiments.
To get reliable clustering performance measurement on the ICU dataset, we use 48/12/40 train/validation/test splits for experiments.
Hyperparameters of {\ours} and baselines are optimized through 3-fold cross-validation. 
For comparison of clustering performance, the number of clusters $K$ for each dataset is shared by all methods.
We select $K$ as a hyperparameter of {\ours}, and the optimal cluster numbers are determined to be $K=3$ (ground truth), $K=4$ and $K=3$ for the synthetic, ADNI and ICU dataset, respectively. Details can be found in the Appendix.

\begin{table}[!h]
\caption{Clustering Performance on the Synthetic Dataset.
}
\label{tab:benchmark-synth}
\begin{center}
\tiny
\begin{tabular}{lccc}
\toprule
\textbf{METHOD}   & \textbf{PURITY} & \textbf{RAND} & \textbf{NMI} \\
\midrule
KM-E2P(y)  &0.663$\pm$0.019 	&0.477$\pm$0.033 	&0.569$\pm$0.045 	 \\
KM-E2P(z) &	0.677$\pm$0.029 	&0.418$\pm$0.024 	&0.485$\pm$0.047 		 \\
KM-DTW & 0.469$\pm$0.017 &	0.068$\pm$0.021 &	0.077$\pm$0.022	 \\
KM-$\mathcal{L}$& 0.687$\pm$0.033 	&0.395$\pm$0.058 	&0.447$\pm$0.059\\ 
SEQ2SEQ &	0.378$\pm$0.008 	&-0.003$\pm$0.003 	&0.005$\pm$0.003 	\\ 
AC-TPC &	0.659$\pm$0.020 	&0.487$\pm$0.035 	&0.596$\pm$0.043	\\ \midrule
{\ours} (J) & 0.655$\pm$0.021 	&0.440$\pm$0.051 	&0.543$\pm$0.064\\
{\ours} & \textbf{0.965$\pm$0.018}$^\ddagger$  	&\textbf{0.902$\pm$0.048}$^\ddagger$  	&\textbf{0.875$\pm$0.050}$^\ddagger$ \\
\bottomrule
\end{tabular}
\end{center}
\tiny
Purity score, Rand index and normalized mutual information (NMI) are used to evaluate the clustering performance with ground truth phenotype labels. Best performance is highlighted in \textbf{bold}, and $^\ddagger$ indicates $p$-value $<0.01$.
\end{table}
\begin{table}[!h]
\caption{Benchmark Result on Two Real-world Datasets.}
\label{tab:benchmark-medical}
\begin{center}
\tiny
\addtolength{\tabcolsep}{-0.4em}
\begin{tabular}{llcccc}
\toprule
{} & \textbf{\capstr{Method}}   & \textbf{AUROC} & \textbf{AUPRC} & $H_\mathrm{ROC}$&$H_\mathrm{PRC}$\\
\midrule
\multirow{8}{*}{\rotatebox[origin=c]{90}{\textbf{ADNI}}}&
KM-E2P(y) & \textbf{0.893$\pm$0.005} 	&\textbf{0.728$\pm$0.017} 	&	0.770$\pm$0.013 &	0.701$\pm$0.012 \\
&KM-E2P(z) &0.884$\pm$0.012 &	0.711$\pm$0.020 & 	0.763$\pm$0.018 	&0.690$\pm$0.013\\
&KM-DTW & 0.743$\pm$0.013 	&0.522$\pm$0.020 & 	0.752$\pm$0.027 	&0.618$\pm$0.021\\
&KM-$\mathcal{L}$& 0.697$\pm$0.029 &	0.465$\pm$0.019  	&0.753$\pm$0.019 	&0.593$\pm$0.018\\
&SEQ2SEQ &	 	0.775$\pm$0.023 	&0.550$\pm$0.030 	&	0.773$\pm$0.012 	&0.642$\pm$0.022\\
&AC-TPC 	&0.861$\pm$0.012 &	0.665$\pm$0.020 &	0.788$\pm$0.014 	&0.694$\pm$0.013\\ \cmidrule{2-6}
&{\ours} (J) & 0.867$\pm$0.020 	&0.679$\pm$0.040 &	0.768$\pm$0.011 	&0.684$\pm$0.021\\
&{\ours} & 0.891$\pm$0.005 &	0.716$\pm$0.015 & 	\textbf{0.791$\pm$0.013} 	&\textbf{0.713$\pm$0.009}$^\ddagger$\\
\midrule
\multirow{8}{*}{\rotatebox[origin=c]{90}{\textbf{ICU}}}&
KM-E2P(y) &\textbf{0.697$\pm$0.014} 	&0.593$\pm$0.012 &	0.682$\pm$0.029 	&0.628$\pm$0.025 \\
&KM-E2P(z) &0.677$\pm$0.030 	&0.579$\pm$0.018 & 	0.686$\pm$0.031 &	0.633$\pm$0.024 \\
&KM-DTW & 0.539$\pm$0.030 	&0.515$\pm$0.011 & 	0.636$\pm$0.023 	&0.621$\pm$0.021\\
&KM-$\mathcal{L}$& 0.577$\pm$0.019 &	0.532$\pm$0.009 	&0.682$\pm$0.009 &	0.649$\pm$0.004\\
&SEQ2SEQ &0.592$\pm$0.024 &	0.539$\pm$0.012 & 	0.690$\pm$0.011 	&\textbf{0.653$\pm$0.004}\\
&AC-TPC 	&0.660$\pm$0.008 	&0.573$\pm$0.003 & 	0.695$\pm$0.014 &	0.644$\pm$0.011\\ \cmidrule{2-6}
&{\ours} (J)& \textbf{0.697$\pm$0.025} 	&\textbf{0.595$\pm$0.017} 	& 	0.691$\pm$0.056 &	0.636$\pm$0.048\\
&{\ours} & 0.681$\pm$0.017 &	0.585$\pm$0.015 &	\textbf{0.703$\pm$0.007} 	&0.648$\pm$0.008\\
\bottomrule 
\end{tabular}
\end{center}
\tiny
The area under the curve of receiving-operator characteristic (AUROC) and area under the curve of precision-recall (AUPRC) are used to assess the prognostic value of the discovered clusters on predicting target outcomes. 
Two composite metrics $H_\mathrm{ROC}$ and $H_\mathrm{PRC}$, calculated as harmonic means between predictive accuracy (AUROC or AUPRC) and a cluster consistency metric AUSIL, are used to measure the phenotype discovery performance.
Please refer to the Appendix for details. 
Best performance is highlighted in \textbf{bold}, and $^\ddagger$ indicates $p$-value $<0.01$.
\end{table}

\paragraph{Benchmark.}
The clustering performance of {\ours} is compared with six baselines, 
with all results reported using 5 random train/validation/test splits of the corresponding dataset.
Benchmark results on synthetic dataset and two real-world datasets are provided in Table \ref{tab:benchmark-synth} and Table \ref{tab:benchmark-medical}, respectively.
Complete benchmark tables are available in the Appendix. 
On the synthetic dataset, {\ours} outperforms all baselines with significant gaps in considered clustering accuracy metrics.
Similarly, {\ours} has the best (or very close to best) outcome prediction performance on both ADNI and ICU datasets and outperforms AC-TPC and most other baselines in phenotype discovery on the two datasets.
The baseline of KM-E2P(y) directly discovers clusters over predicted outcome distributions and achieves the best prediction performance on the ADNI dataset, which is within expectation.
However, its clustering performance, particularly $H_\mathrm{ROC}$, is inferior to that of {\ours} due to the negligence of similarity in temporal patterns. 
On the ICU dataset, while {\ours} has close phenotype discovery performance $H_\mathrm{PRC}$ to baseline SEQ2SEQ, the clusters discovered by our method provide greater prognostic values as reflected in the outcome prediction accuracy.

\textbf{Phenotypes of AD Progression.}\quad
The CDRSB score measures the impairment on both cognitive abilities and brain function \citep{coley2011suitability} and is widely used in AD progression assessment and staging \citep{kim2020disease,o2008staging}.
The temporal patterns in CDRSB trajectory vary in different disease stages and show stable prognostic power on patient outcomes \citep{delor2013modeling}.
On the ADNI dataset, four phenotypes are discovered by {\ours}.
We examine these phenotypes by plotting the CDRSB scores of $N_{test}=270$ test samples separately in corresponding clusters. 
As shown in \Figref{fig:ADNI-ours}, 
normal and high-risk patients with divergent cognitive test trajectories are correctly identified in phenotype 1 and 4 by {\ours}.
In the meantime, for the predicted outcome of MCI, two subtypes of patients are clearly separated into two phenotypes (2 and 3) with different growth rates in CDRSB score.
In comparison, AC-TPC fails to distinguish between these two subtypes as illustrated in \Figref{fig:ADNI-AC-TPC}, which impedes the prognostic value of clusters discovered by AC-TPC.

\textbf{Prognostic Value of {\ours}.}\quad
We further demonstrate the prognostic value of {\ours} with the temporal phenotyping results obtained on a typical patient from the ADNI dataset.
The studied patient had a positive biomarker of APOE $\varepsilon4$ gene which contributes to an increased risk of AD \citep{yamazaki2019apolipoprotein}.
Consecutive observations of patient covariates at three time stamps are plotted in \Figref{fig:temporal-cluster}.
Hippocampus volume (\textcolor{OliveGreen}{green triangle}) and CDRSB score (\textcolor{Blue}{blue dot}) are displayed together with diagnosis obtained at the next follow-up (\textcolor{YellowOrange}{yellow bar}).
The temporal phenotype assignment via {\ours} is shown at the bottom.
As a predictive factor of early-stage AD \citep{rao2022hippocampus}, fast decrease in hippocampus volume leads to the initial diagnosis of phenotype 2 (MCI) in \Figref{fig:ADNI-ours} by {\ours} despite a low CDRSB score from cognitive test.
Then, with a clear trend of increase appearing in CDRSB trajectory, the studied patient is classified into phenotypes ($2\rightarrow3\rightarrow4$) that reflect the growing risk in developing AD. 
In contrast, as shown on the top of \Figref{fig:temporal-cluster}, AC-TPC simply assigns the same phenotype to the patient throughout the considered time period and is unable to provide comparable insights on AD progression from the patient trajectory.

\section{CONCLUSION}

In this paper, we propose a novel phenotype discovery approach {\ours} to uncover predictive patterns from labeled time-series data.
A representation learning method in frequency-domain is developed to efficiently embed the variable-length, irregularly sampled time-series into a unified latent space that provides insights on their temporal patterns.
With our new notion of path-based phenotype similarity, a graph-constrained $K$-means approach is utilized to discover clusters representing distinct phenotypes.
Throughout experiments on synthetic and real-world datasets, we show that {\ours} outperforms all baselines in phenotype discovery. 
The utility of {\ours} to discover clinically meaningful phenotypes is further demonstrated via comparison with the the state-of-the-art temporal phenotyping method AC-TPC on real-world healthcare datasets.

\section{LIMITATIONS}
Our proposed method, {\ours}, leverages Laplace encoders as a general approach to capture temporal patterns from time-series data as distinct Laplace embeddings.
However, there may exist some complex temporal patterns, e.g., interactions between patient covariates at two specific time points, that cannot be encoded in this manner.
To address this issue, additional representations (e.g., representation via attention mechanism) from the input time-series can be introduced to augment the Laplace embedding, which we leave as a future work.
In the meantime, the phenotype discovery performance of {\ours} is highly dependent on the quality of predictor $f_P$.
Unstable predictions from $f_P$ will directly lead to inaccuracies in phenotype assignment.
Thus, effective regularization of the predictor network would be another important future direction.

\section{SOCIETAL IMPACT}
Discovery of phenotypes from disease trajectories is a long pursuit in healthcare.
In line with the target of precision medicine, the phenotype connects temporal patterns in patient trajectory and clinical outcomes is of great prognostic value since it allows clinicians to make more accurate diagnosis and issue the most appropriate treatment to their patients.
By combining notions of similarity in both patient trajectories and clinical outcomes, our method, {\ours}, can effectively identify phenotypes of desired property.
The discovered patient subgroups can be used to improve current clinical guidelines and help clinicians to better understand the disease progression of their patients.
Nevertheless, the association between temporal patterns and clinical outcome in a phenotype cannot be interpreted as causal relationship without careful tests and examinations.
Application of {\ours} without audits from human experts may lead to undesirable outcome of patients in certain edge cases.

\subsubsection*{Acknowledgements}
Yuchao Qin was supported by the Cystic Fibrosis Trust.
Changhee Lee was supported through the IITP grant funded by the Korea government (MSIT) (No. 2021-0-01341, AI Graduate School Program, CAU).
We thank all reviewers at AISTATS 2023 for their time in helping us to evaluate our work.
Their insightful comments are greatly appreciated.

\bibliography{reference.bib}
\bibliographystyle{abbrvnat}

\newpage
\onecolumn
\appendix

\section*{Appendix}
\renewcommand{\thetable}{\thesection.\arabic{table}}
\setcounter{table}{0}
\renewcommand{\thefigure}{\thesection.\arabic{figure}}
\setcounter{figure}{0}
\renewcommand\thesubfigure{\alph{subfigure}}
\renewcommand{\thealgorithm}{\thesection.\arabic{algorithm}}
\setcounter{algorithm}{0}
The appendix is organized in the following structure.
\begin{enumerate}[label=\textbf{\Alph*}]
    \item Detailed discussion of the Laplace encoder.
    \item Proof of Proposition 1 and relevant discussions.
    \item The graph-constrained $K$-means algorithm
    \item Experiment setup.
    \item Hyperparameter Selection.
    \item Complete benchmark results.
    \item Additional analyses of results obtained on the two real-world datasets.
\end{enumerate}

A summary of major notations used in this paper is provided below.
{
\setlength{\nomlabelwidth}{2cm}
\nomenclature[01]{$\vx$}{Continuous-time disease trajectory of a patient}
\nomenclature[02]{$\vy$}{Label vector indicating clinical status of a patient}
\nomenclature[03]{$\vt$}{A vector of time stamps}
\nomenclature[04]{$\vz$}{A vector of latent variables}
\nomenclature[05]{$\vw$}{A vector of Laplace embedding}
\nomenclature[06]{$\mX$}{Discrete-time observation to disease trajectory $\vx(t)$}
\nomenclature[07]{$\Phi$}{A connected set of patient trajectories which represents a temporal pattern}
\nomenclature[08]{$g(\vx)$}{Vector-valued function that describes the conditional distribution $p(\vy\vert\vx)$}
\nomenclature[09]{$\mathrm{d}_y(\cdot,\cdot)$}{Distance metric of two label distributions}
\nomenclature[10]{$\Gamma(\vx^1\rightarrow\vx^2)$}{A translation from trajectory $\vx^1$ to $\vx^2$}
\nomenclature[11]{$\gamma(\vz^1\rightarrow\vz^2)$}{A translation from latent representation $\vz^1$ to $\vz^2$}
\nomenclature[12]{$\mathrm{d}_\Gamma(\vx^1,\vx^2)$}{Path-based similarity score between trajectories $\vx^1$ and $\vx^2$}
\nomenclature[13]{$\mathrm{d}_\gamma(\vz^1,\vz^2)$}{Proxy of path-based similarity score $\mathrm{d}_\Gamma(\vx^1,\vx^2)$ in latent space}
\nomenclature[14]{$\mS$}{A distance matrix of path-based similarity score between samples in a dataset}
\nomenclature[15]{$\gG_\delta$}{A graph generated from matrix $\mS$ with threshold $\delta$}
\nomenclature[16]{$K$}{Number of clusters}
\nomenclature[17]{$\mathcal{C}$}{A set of $K$ clusters}
\nomenclature[18]{$f_L$}{A Laplace encoder}
\nomenclature[19]{$f_E$}{A composite encoder with feature-wise Laplace encoders}
\nomenclature[20]{$f_P$}{A predictor for label distribution}
\renewcommand{\nomname}{NOMENCLATURE}
\printnomenclature
}

\textbf{Code Availability.}\quad
The source code of {\ours} can be found in the two GitHub repositories listed below:
\begin{itemize}
    \item The van der Schaar lab repo: \url{https://github.com/vanderschaarlab/tphenotype}
    \item The author's personal repo: \url{https://github.com/yvchao/tphenotype}
\end{itemize}

\renewcommand{\thesection}{A}
\setcounter{table}{0}
\setcounter{figure}{0}
\setcounter{algorithm}{0}
\section{ANALYSIS OF THE LAPLACE ENCODER}
\subsection{Implementation Details}
The proposed Laplace encoder is implemented with a RNN-based neural network $f_L$ parameterized by $\theta_L$.
As shown in \Figref{fig:encoder}, given discrete time-series of a one-dimension trajectory $x(t)$, the Laplace encoder first generates a summary of time-series $x(\vt)$ via the RNN.
With the summary as input, the MLP outputs a representation ${\vw}\in \mathbb{C}^{n(d+1)}$.
Elements in ${\vw}$ can be divided into two groups: poles and coefficients, which are further used to construct a function in the frequency domain, $F_\vw(s)$, as defined in  (\ref{eq:laplace-transform}). 
Changing the order of poles (and associated coefficients) in ${\vw}$ has no effect on $F_{{\vw}}(s)$ since it is permutation-invariant to the poles in ${\vw}$.
As discussed in the main manuscript and in the next paragraph, we impose a lexical order on poles in the MLP output ${\vw}$ to make it a unique representation of $F_{{\vw}}(s)$.
The trajectory $x(t)$ can be reconstructed as $\hat{x}(t)$ through the inverse Laplace transform (\ref{eq:inverse-laplace-transform}) on $F_\vw (s)$. Here, the reconstruction $\hat{x}(t)$ is a function and its value can be evaluated everywhere in $t\in [0,1]$. This allows us to compare input time-series with variable-length and irregularly-sampled observations in a unified latent space. 
For the sake of convenience, we denote with $\mathcal{L}^{-1}(\vw) =\hat{x}(t)= \mathcal{L}^{-1} [F_\vw (s)](t)$ the transform that maps embedding $\vw$ to its time-domain reconstruction $\hat{x}(t)$.

\begin{figure}[!h]
	\begin{center}
		\centering
		{\includegraphics[width=0.95\textwidth]{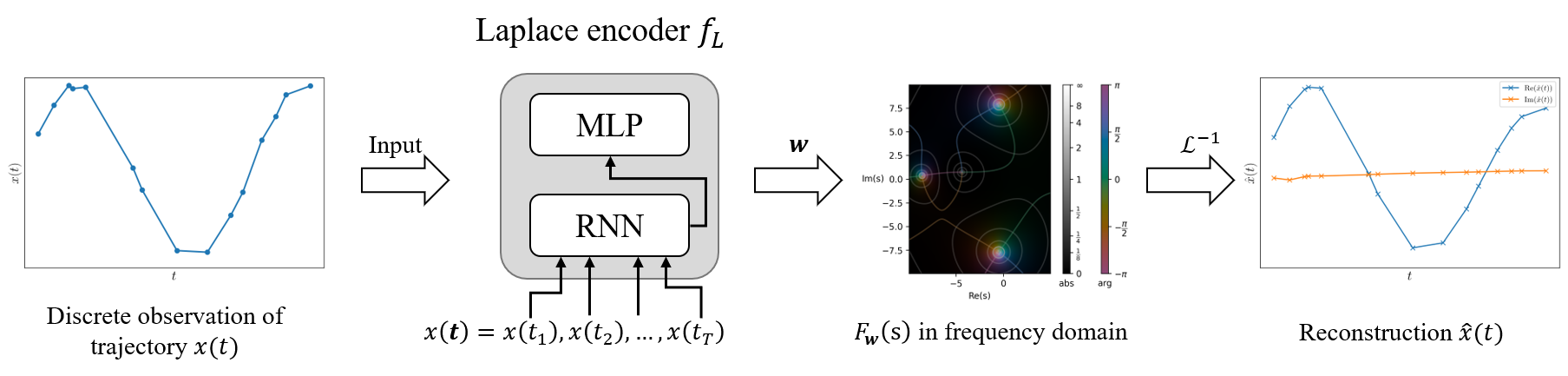}}
		\caption{{Laplace Encoder.}
		\small} 
		\label{fig:encoder}
	\end{center}
\end{figure}

\textbf{Robust Lexical Order of Poles.}\quad
Due to the summation in (\ref{eq:laplace-transform}), $F_\vw(s)$ is permutation-equivariant with respect to the poles in $\vw$. 
Thus, we impose a lexical order ($p_m \leq p_{m+1}$ for $m=1,\dots, n-1$) on the poles to obtain a unique Laplace embedding $\vw$ as discussed in the manuscript. 
To guarantee this property, we transform the unordered representation (output of the MLP in Laplace encoder) into the final unique Laplace embedding $\vw$ by sorting the poles (together with their associated coefficients) in a lexical order. 
To achieve a stable ordering that is robust to inevitable noise in ${\vw}$, we encourage any pair of two poles to be sufficiently different to avoid abrupt changes in their order.  
Hence, given two poles $p_m,p_l$, we say $p_{m} \leq p_{l}$ if and only if $(\mathrm{Re}(p_m) < \mathrm{Re}(p_{l}) ) \wedge (|\mathrm{Re}(p_m) - \mathrm{Re}(p_{l})| > \delta_{pole})$ or $(|\mathrm{Re}(p_m) - \mathrm{Re}(p_{l})| \leq \delta_{pole}) \wedge  (\mathrm{Im}(p_m)\leq \mathrm{Im}(p_{l}))$, and $p_{m} > p_{l}$ otherwise, where $\delta_{pole} \geq 0$ is a threshold that controls the robustness of the lexical order. The best threshold $\delta_{pole}$ is search as a hyperparameter in our experiment.

\textbf{Ranges of Poles and Coefficients.}\quad
Each pole $p_m$ in embedding $\vw$ is located on the complex plane $\mathbb{C}$.
The real part $\mathrm{Re}(p_m)$ indicates the increase or decay speed of the corresponding component ($e^{\mathrm{Re}(p_m) t}$) in the time-domain reconstruction $\hat{x}(t)=\mathcal{L}^{-1}(\vw)$.
Too large or small value of $\mathrm{Re}(p_m)$ leads to unrealistic signals.
In the meantime, The imaginary part $\mathrm{Im}(p_m)$ represents the frequency of oscillations in the related component ($\cos(\mathrm{Im}(p_m) t)+j \sin(\mathrm{Im}(p_m) t), j^2=-1$) in reconstruction $\hat{x}(t)$.
Very high-frequency oscillation in the input time-series $x(\vt)$ are usually caused by random noise and should be discarded in reconstruction $\hat{x}(t)$.
In our experiment, we limit the range of poles to the area of $\{p~\big|~|\mathrm{Re}(p)|\leq r_{max}, |\mathrm{Im}(p)|\leq freq_{max}\}$, where $r_{max}$ limits that maximum increase or decrease speed of signals in reconstruction $\hat{x}$, $freq_{max}$ is the maximum allowed frequency such that high-frequency signals above $freq_{max}$ are considered as a noise component in time-series $x(\vt)$ and, thus, discarded when constructing $F_\vw(s)$.  
In our experiments, we set $r_{max}=10$ and $freq_{max}=20Hz$.
Similarly, the coefficient $c_{m,l}$ in embedding $\vw$ is limited to a square area of $\{c~\big|~|\mathrm{Re}(c)|\leq c_{max}, |\mathrm{Im}(c)|\leq c_{max}\}$.
We set $c_{max}=5$ which is sufficient for normalized time-series (via min-max or normal scaling). 
The range of poles and coefficients in $\vw$ can be adjusted accordingly based on needs in practical application scenarios.
When fed into the predictor network $f_P$, the poles and coefficients in embedding $\vw$ are normalized by the corresponding maximum allowed values to facilitate the learning process. 

\textbf{Embedding of Static Features.}\quad
In order to improve computation efficiency, when the $d$-th feature dimension $x_d$ of trajectory $\vx$ is known to be constant over time, i.e., $x_d(t)\equiv x_d(0)$, instead of training a Laplace encoder, the static value $x_d(0)$ is directly used to represent $x_d(t)$, and the $d$-th component $\vw_d$ in latent variable $\vz$ is replaced by $x_d(0)$.

\textbf{Regularization Terms.}\quad
Apart from the lexical order imposed on the embedding $\vw$, we further introduce three regularization terms that encourage the Laplace encoder to provide a unique and consistent Laplace representation given an input time-series. 
These regularization terms are combined into the second term of $\mathcal{L}_\mathrm{unique}$ in (\ref{eq:loss-encoder}); we will describe each in turn.

The first regularizer, $l_\mathrm{sep}$, penalizes the case where two poles in embedding $\vw$ are nearly identical -- that is, $p_m$ and $p_l$ are considered as an identical pole when $|p_m-p_l|\leq \delta_{pole}$ -- based on the following hinge loss:
\begin{equation}
    l_\mathrm{sep}(\hat{x}(\vt)) = \sum_{m\neq l}  \max(0, \delta_{pole} - |p_m-p_l|).
\end{equation}
Here, $p_m$ and $p_l$ are two poles in the associated embedding $\vw$ given the input time-series $x(\vt)$, i.e., $\vw=f_{L}(x(\vt))$, and the threshold $\delta_{pole}>0$ for robust pole sorting is reused here as a pole separation threshold. 

The second regularizer, $l_\mathrm{real}$, ensures that the reconstructed trajectory $\hat{x}(t)$ is real-valued on $[0,1]$ by suppressing the imaginary part of the reconstructed trajectory $\hat{x}(t)$ via the following loss:  
\begin{equation}
    l_\mathrm{real}(\hat{x}(\vt)) = \frac{1}{T} \| \mathrm{Im}(\hat{x}(\vt)) \|_2^2,
\end{equation}
where $\vt=[t_1,\dots,t_{T}]^\top$ includes time stamps randomly sampled over $t_j \in [0,1]$ for $j=1,\dots,T$.
Specifically, $t_j = \mathrm{clamp}(\frac{j}{T} + \frac{1}{2T}\varepsilon ,\min=0,\max=1), \varepsilon\sim \mathrm{Normal}(0,1)$.

The last regularizer, $l_\mathrm{distinct}$, encourages that no two distinct Laplace embeddings generate the same trajectory based on the following loss:
\begin{equation}
    l_\mathrm{distinct}(\hat{x}^i(\vt),\hat{x}^j(\vt)) =  \Vert \vw^i-\vw^j\Vert_2^2e ^{-\|\hat{x}^i(\vt)-\hat{x}^j(\vt)\|_2^2},
\end{equation}
where the radial basis similarity function $e ^{-\|\hat{x}^i(\vt)-\hat{x}^j(\vt)\|_2^2}$ is used to discover similar trajectories, $\vw^i$ and $\vw^j$ are embeddings of input time-series while $\hat{x}^i(\vt)$ and $\hat{x}^j(\vt)$ are their time-domain reconstructions.
Since the input time-series may be of different lengths and sampling intervals, we use the reconstructed trajectories for pair-wise comparison between time-series here.

Overall, we construct $\mathcal{L}_\mathrm{unique}(\theta_L)$ as a combination of the three regularization terms introduced above: 
\begin{equation}
    \mathcal{L}_\mathrm{unique}(\theta_L) = \sum_{d=1}^{\mathrm{dim}_x} \left(  \frac{1}{N} \sum_{i}  l_\mathrm{sep}(\hat{x}^i_{d}(\vt)) + \frac{\alpha_1}{\alpha} l_\mathrm{real}(\hat{x}^i_d(\vt)) + \frac{\alpha_2}{\alpha} \frac{1}{N(N-1)} \sum_{i\neq j} l_\mathrm{distinct}(\hat{x}^i_{d}(\vt),\hat{x}^j_{d}(\vt)) \right),
\end{equation}
where $\alpha$ is the coefficient for $\mathcal{L}_\mathrm{unique}(\theta_L)$ in  (\ref{eq:loss-encoder}), $\alpha_1$ and $\alpha_2$ are balancing coefficients that trade-off different uniqueness properties in the Laplace encoder. 
In the experiment, due to the high computational complexity, the last term $l_\mathrm{distinct}$ is only evaluated on a subset of 10 randomly selected time-series in each training batch.
In addition, since $l_\mathrm{distinct}$ relies on the reconstructed time-series $\hat{x}(t)$ which may be inaccurate in the beginning of training,  we fix $\alpha_2$ to $0.01$ such that it majorly takes effect after the reconstruction error is small enough.
 
\subsection{Quantitative Analysis}
\textbf{Comparison with Regular Auto-encoder.}\quad
We provide a toy example to demonstrate the advantage of our proposed Laplace encoder over regular auto-encoders in time-series reconstruction.
A Laplace encoder composed of a 1-layer GRU \citep{cho2014properties}  and a 1-layer MLP with 10 hidden units in each layer is considered in the following discussion.
Other parameters of the Laplace encoder are set as $n=4, d=1,\alpha=1.0, \alpha_1 = 0.1, \alpha_2=0.01, \delta_{pole}=1.0$.
A regular time-series auto-encoder is used for comparison.
The auto-encoder has a 1-layer GRU network as the encoder.
The decoder contains a 1-layer MLP on top of another 1-layer GRU.
Each layer in the auto-encoder includes 10 hidden units.
The auto-encoder maps the input time-series to a latent variable. 
Then, the latent variable is provided to the decoder network for reconstruction of the entire time-series.

Consider a toy dataset with $N=1000$ irregularly sampled time-series in $t\in[0,1]$. 
Each sample contains $T=15$ observations from one of the following four types of trajectories:
\begin{itemize}
    \item {Type 1: $x(t) = \cos(2\pi (t-\phi))$.}
    \item {Type 2: $x(t) = \cos(\pi (t-\phi))$.}
    \item {Type 3: $x(t) = \sin(\pi (t-\phi))$.}
    \item {Type 4: $x(t) = \sin(2\pi (t-\phi))$.}
\end{itemize}
Delay term $\phi\sim \mathrm{Exp}(\frac{1}{2})$. Gaussian noise sampled from $\mathrm{Normal(0,0.03^2)}$ is independently introduced to the observations at different time points.
The mean squared error (MSE) in time-series reconstruction of the considered Laplace encoder and auto-encoder network is evaluated over 5 random splits of the toy dataset with the train/validation/test ratio of 64/16/20.
Our proposed Laplace encoder achieves the best performance of $\mathrm{MSE} = 0.039\pm0.008$.
The auto-encoder has a much higher reconstruction error of $\mathrm{MSE} = 0.108\pm 0.019$.
Comparison of typical reconstruction outcomes of the Laplace encoder and the auto-encoder is illustrated in \Figref{fig:encoder-comparison}.

\begin{figure}[h]
	\begin{center}
	\begin{subfigure}[b]{0.4\textwidth}
	\centering
    \includegraphics[width=\textwidth]{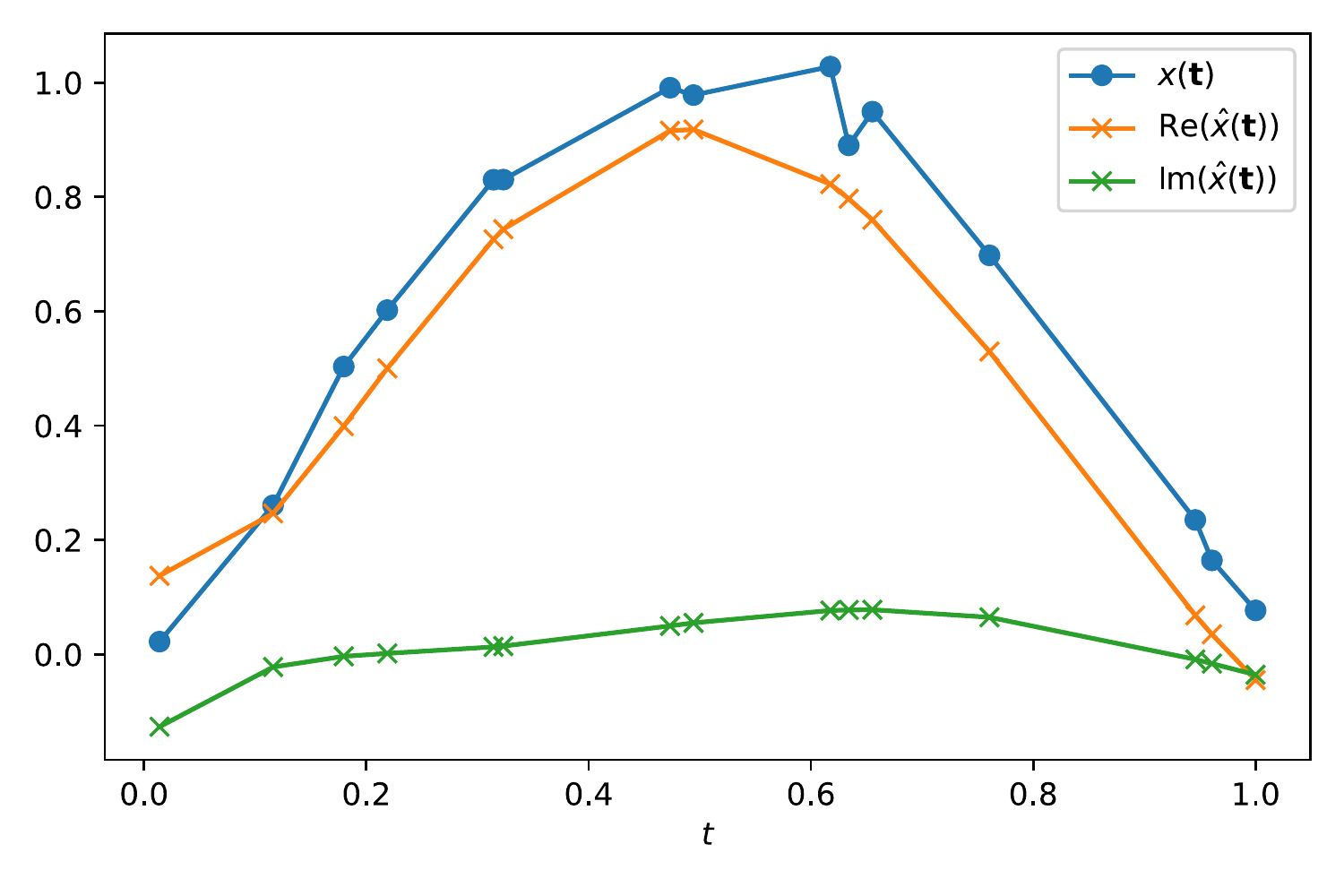}
    \caption{Trajectory reconstruction via Laplace encoder.}
    \label{fig:laplace-encoder-example}
    \end{subfigure}
    ~
    \begin{subfigure}[b]{0.4\textwidth}
	\centering
    \includegraphics[width=\textwidth]{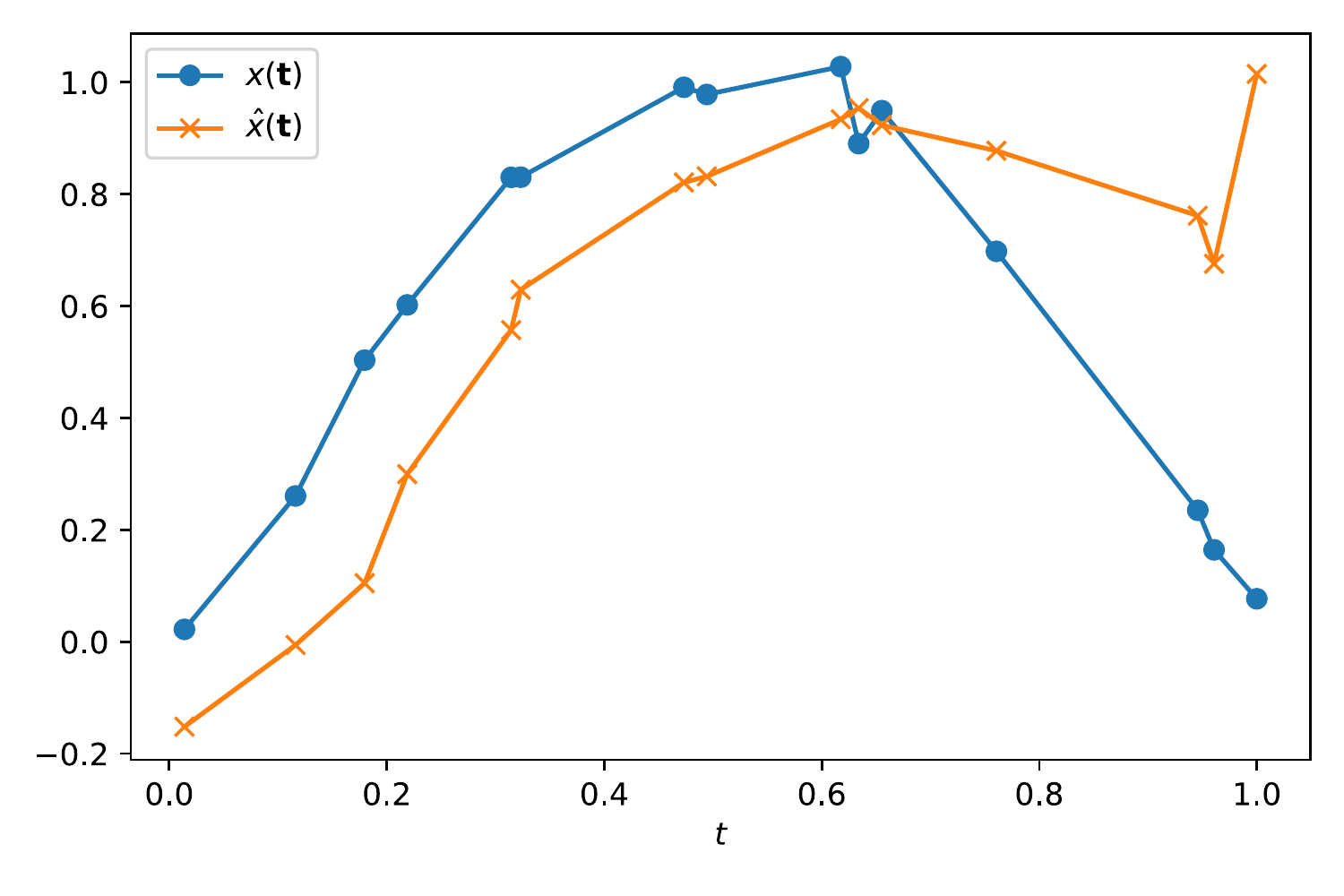}
    \caption{Trajectory reconstruction via Auto-encoder.}
    \label{fig:autoencoder-example}
    \end{subfigure}
    \caption{Comparison of Time-series Reconstruction Outcomes of Laplace Encoder and Auto-encoder.} 
		\label{fig:encoder-comparison}
	\end{center}
\end{figure}

\begin{figure}[!h]
	\begin{center}
	\begin{subfigure}[b]{0.3\textwidth}
	\centering
    \includegraphics[width=\textwidth]{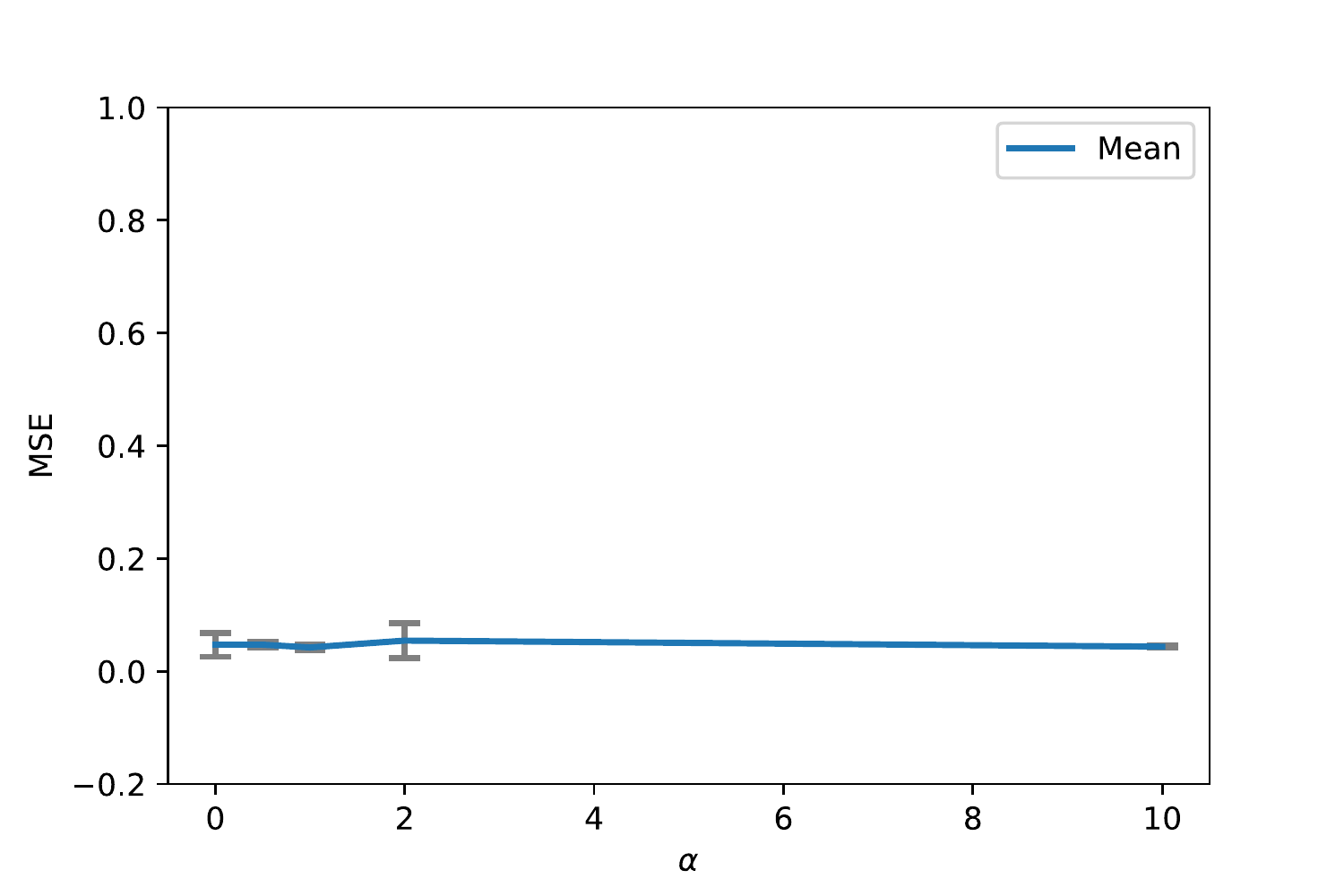}
    \caption{Impact of $\alpha$.}
    \end{subfigure}
    ~
    \begin{subfigure}[b]{0.3\textwidth}
	\centering
    \includegraphics[width=\textwidth]{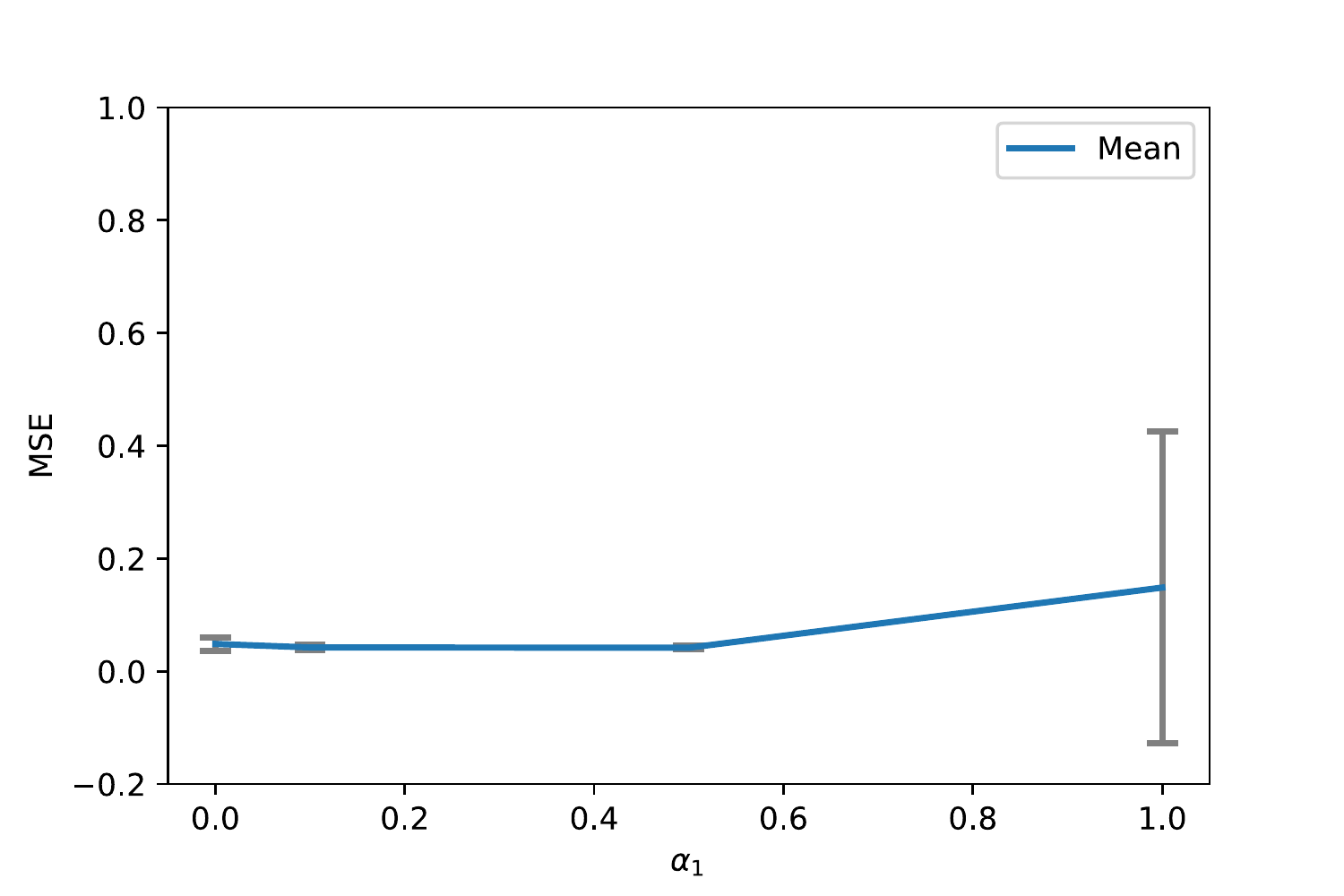}
    \caption{Impact of $\alpha_1$.}
    \end{subfigure}
    ~
    \begin{subfigure}[b]{0.3\textwidth}
	\centering
    \includegraphics[width=\textwidth]{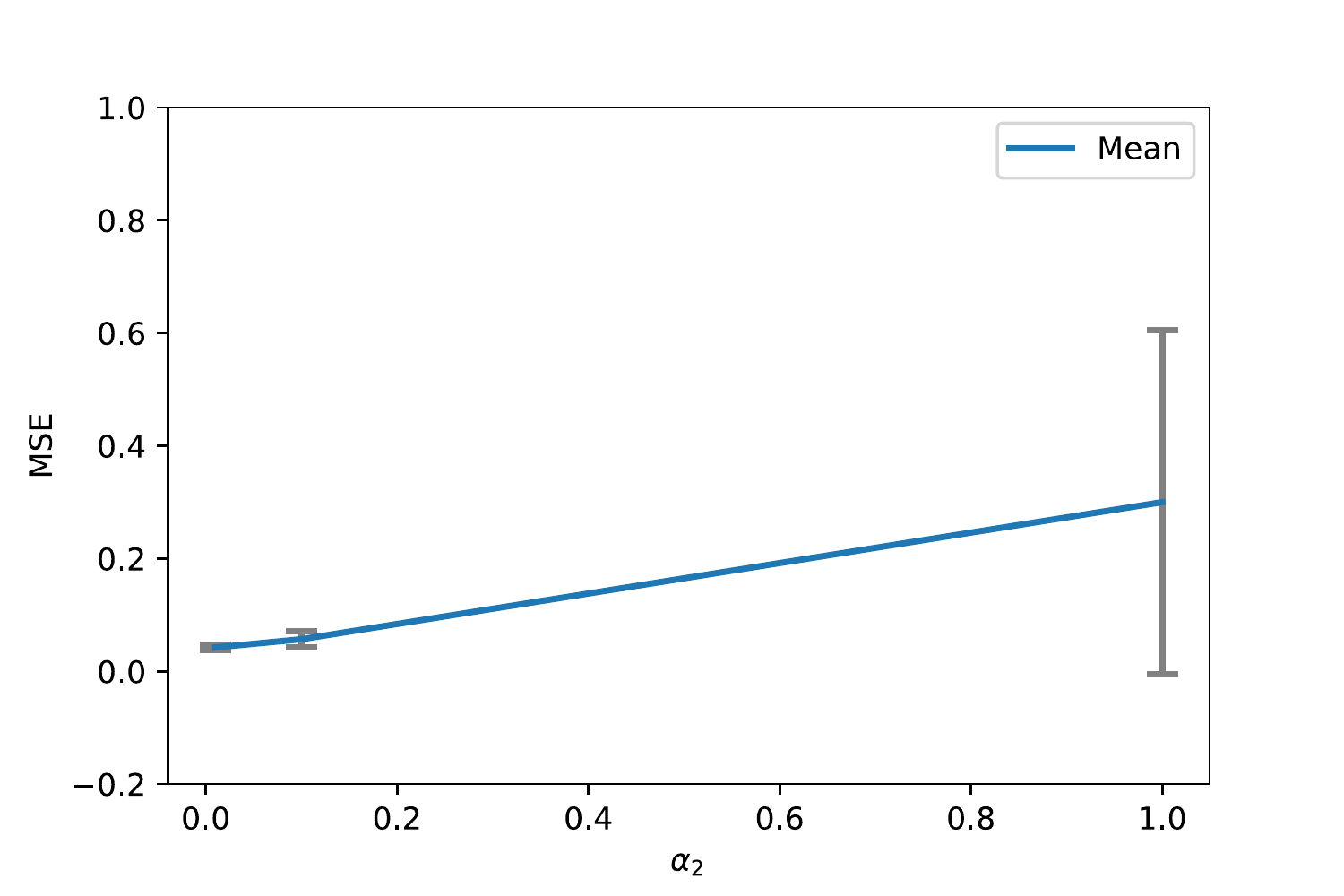}
    \caption{Impact of $\alpha_2$.}
    \label{fig:impact-alpha-2}
    \end{subfigure}
    \\
    \begin{subfigure}[b]{0.3\textwidth}
	\centering
    \includegraphics[width=\textwidth]{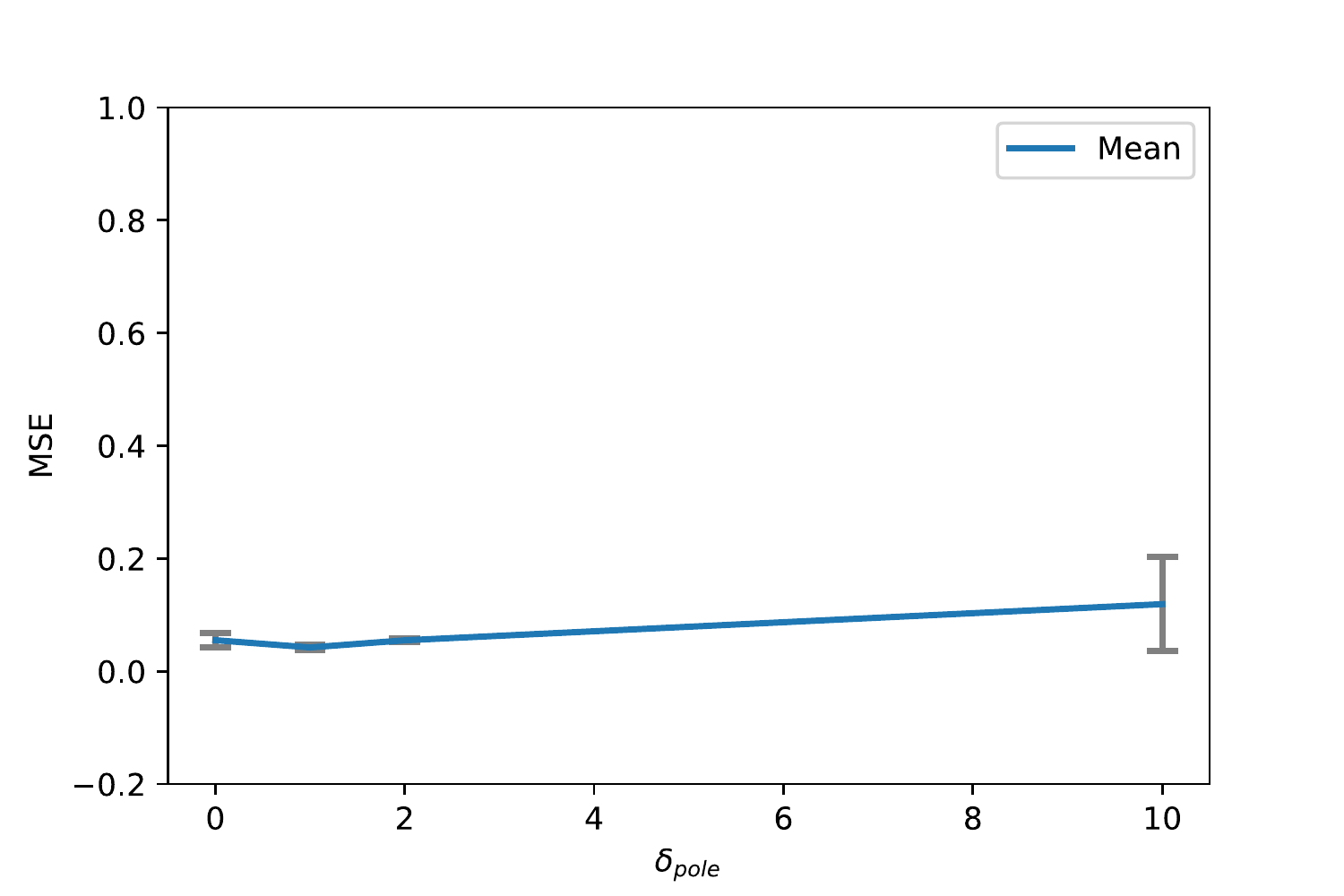}
    \caption{Impact of $\delta_{pole}$.}
    \end{subfigure}
    \begin{subfigure}[b]{0.3\textwidth}
	\centering
    \includegraphics[width=\textwidth]{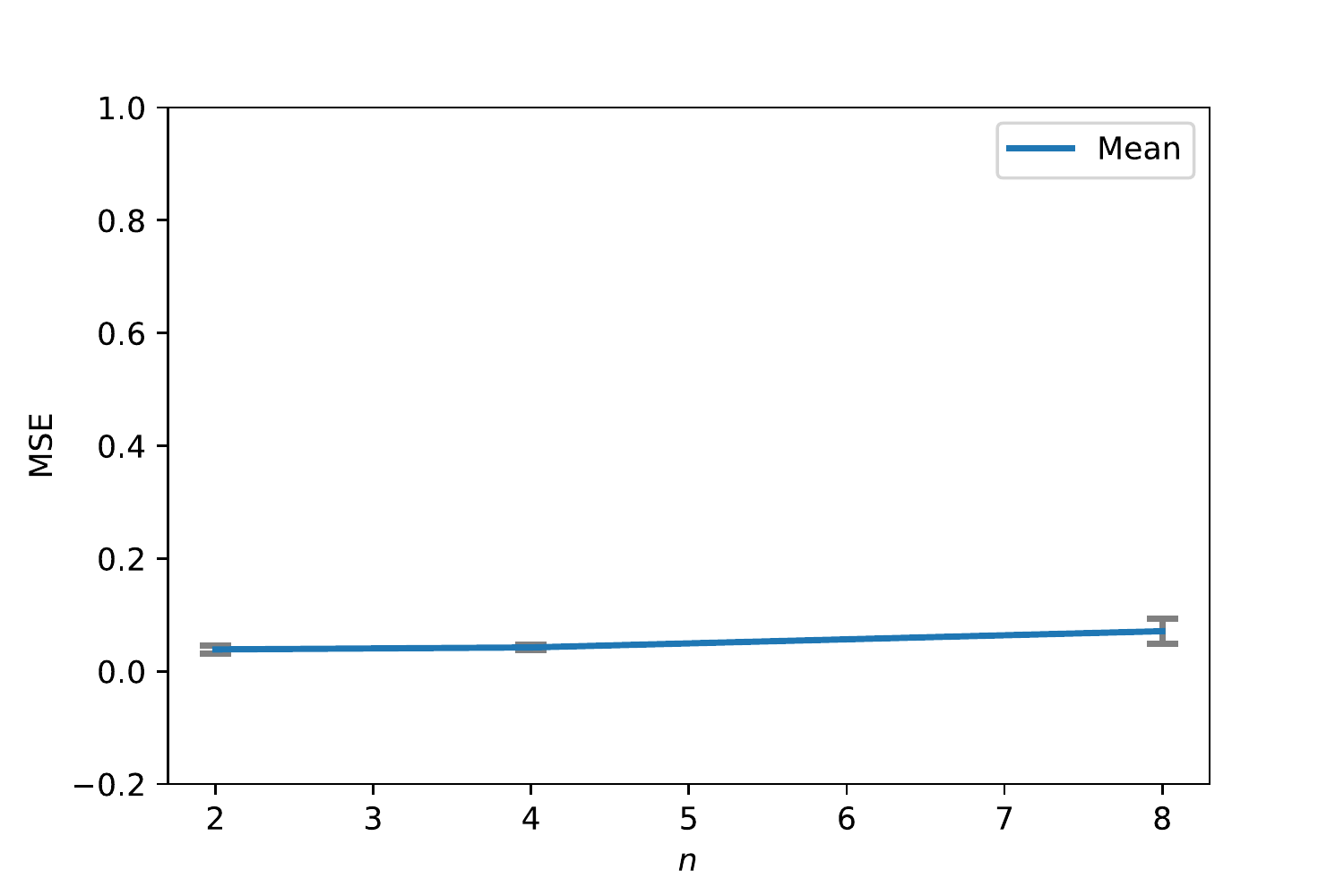}
    \caption{Impact of pole number $n$.}
    \end{subfigure}
    \begin{subfigure}[b]{0.3\textwidth}
	\centering
    \includegraphics[width=\textwidth]{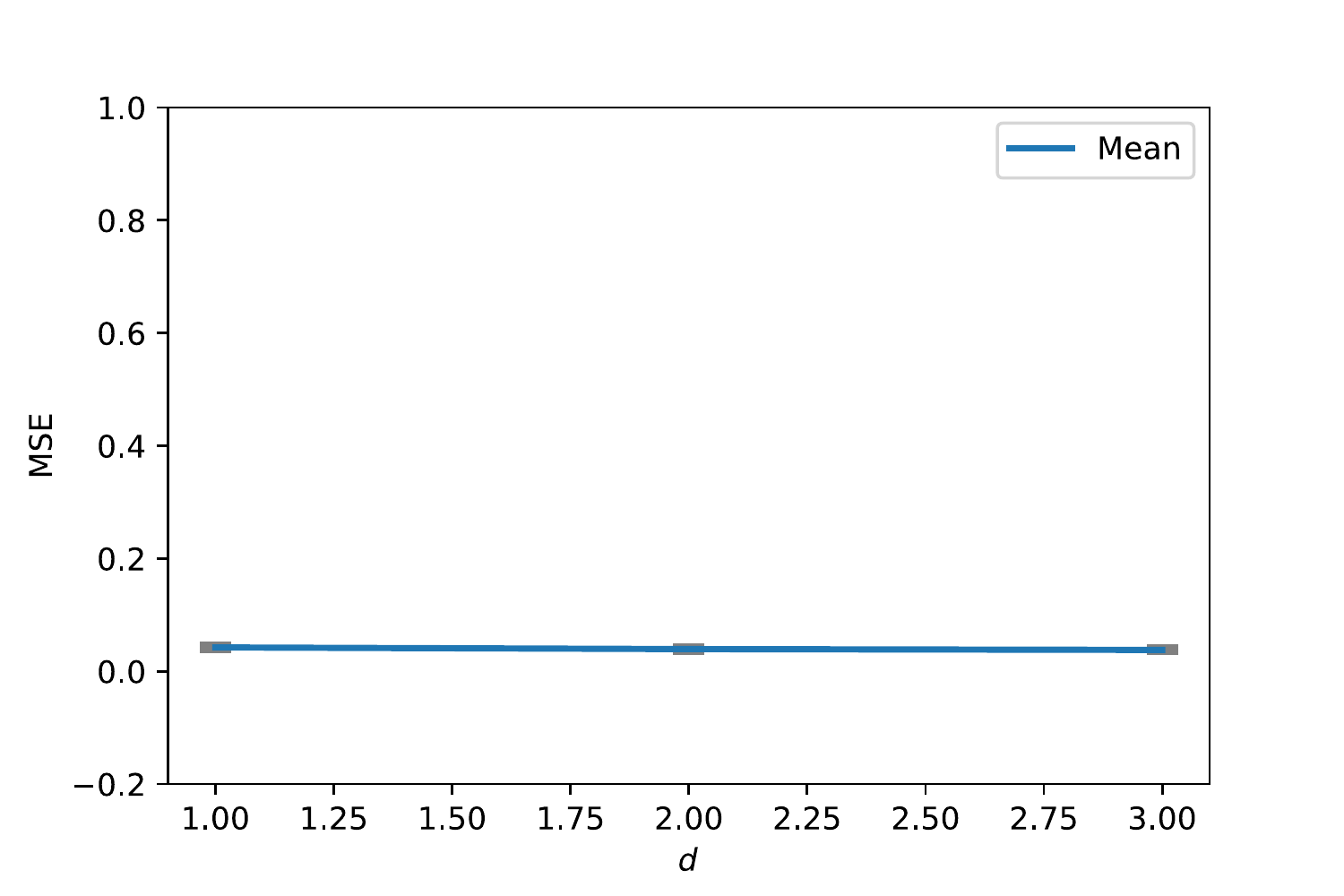}
    \caption{Impact of maximum pole degree $d$.}
    \end{subfigure}
    \caption{Sensitivity of Laplace Encoder with Respect to Different Hyperparameters. \small Error bars are calculated via evaluation on 3 random splits of the toy dataset.} 
		\label{fig:sensitivity-analysis}
	\end{center}
\end{figure}

\textbf{Sensitivity Analysis.}\quad
We further conduct a sensitivity analysis of the Laplace encoder $f_L$ under different hyperparameters on the toy dataset.
The default hyperparameters are set as $n=4, d=1,\alpha=1.0, \alpha_1 = 0.1, \alpha_2=0.01, \delta_{pole}=1.0$.
To evaluate the impact of individual hyperparameter on the Laplace encoder, in each test, we only alter the value of one hyperparameter and keep other hyperparameters the same as default setting.
The parameter sensitivity is measured via the reconstruction error (MSE), and the sensitivity test result is given in \Figref{fig:sensitivity-analysis}.

It can be found that our proposed Laplace encoder $f_L$ has relatively stable time-series reconstruction performance under different hyperparameters.
As mentioned earlier, the regularizer $l_\mathrm{ditinct}$ may generate wrong gradients in the beginning of training due to the large reconstruction error.
The increased MSE for larger $\alpha_2$ in \Figref{fig:impact-alpha-2} is within expectation, and we choose to set $\alpha_2$ to 0.01 such that it only takes effect when the reconstruction error is small enough.

In addition, the effect of pole separation threshold $\delta_{pole}$ on the Laplace embedding is illustrated in \Figref{fig:pole-separation}.
When $\delta_{pole}=0.0$, the order of poles in Laplace embedding $\vw$ can easily be affected by random noise in input time-series, which makes it difficult to ensure the uniqueness of $\vw$.
In contrast, setting $\delta_{pole}=1.0$ effectively improves the representations learned by the Laplace encoder, and different components in the Laplace transform $F_{\vw}(s)$ are clearly represented by distinct poles (marked with different colors).

\begin{figure}[h]
	\begin{center}
	\begin{subfigure}[b]{0.49\textwidth}
	\centering
    \includegraphics[width=\textwidth]{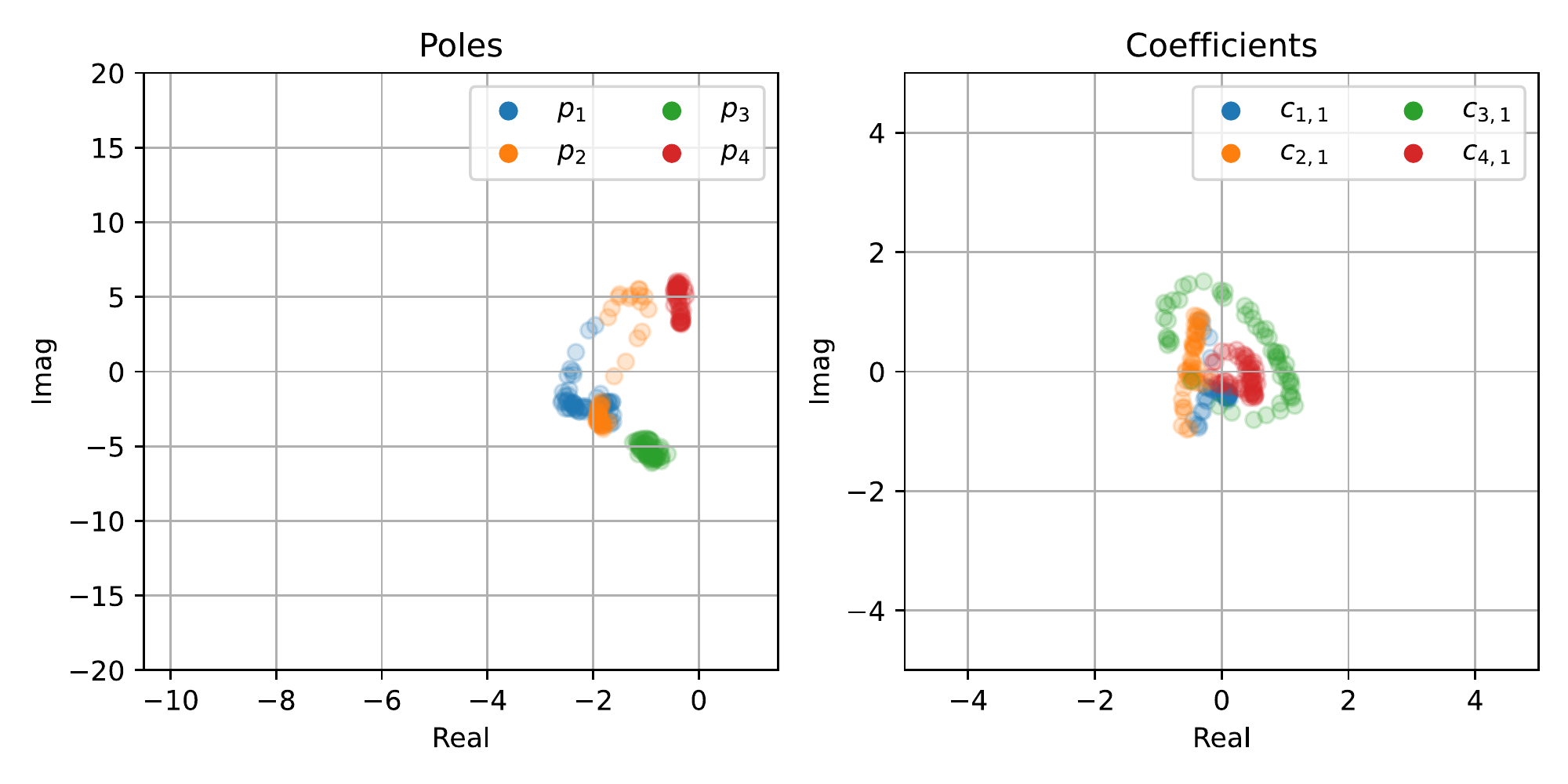}
    \caption{$\delta_{pole}=0.0$.}
    \end{subfigure}
    ~
    \begin{subfigure}[b]{0.49\textwidth}
	\centering
    \includegraphics[width=\textwidth]{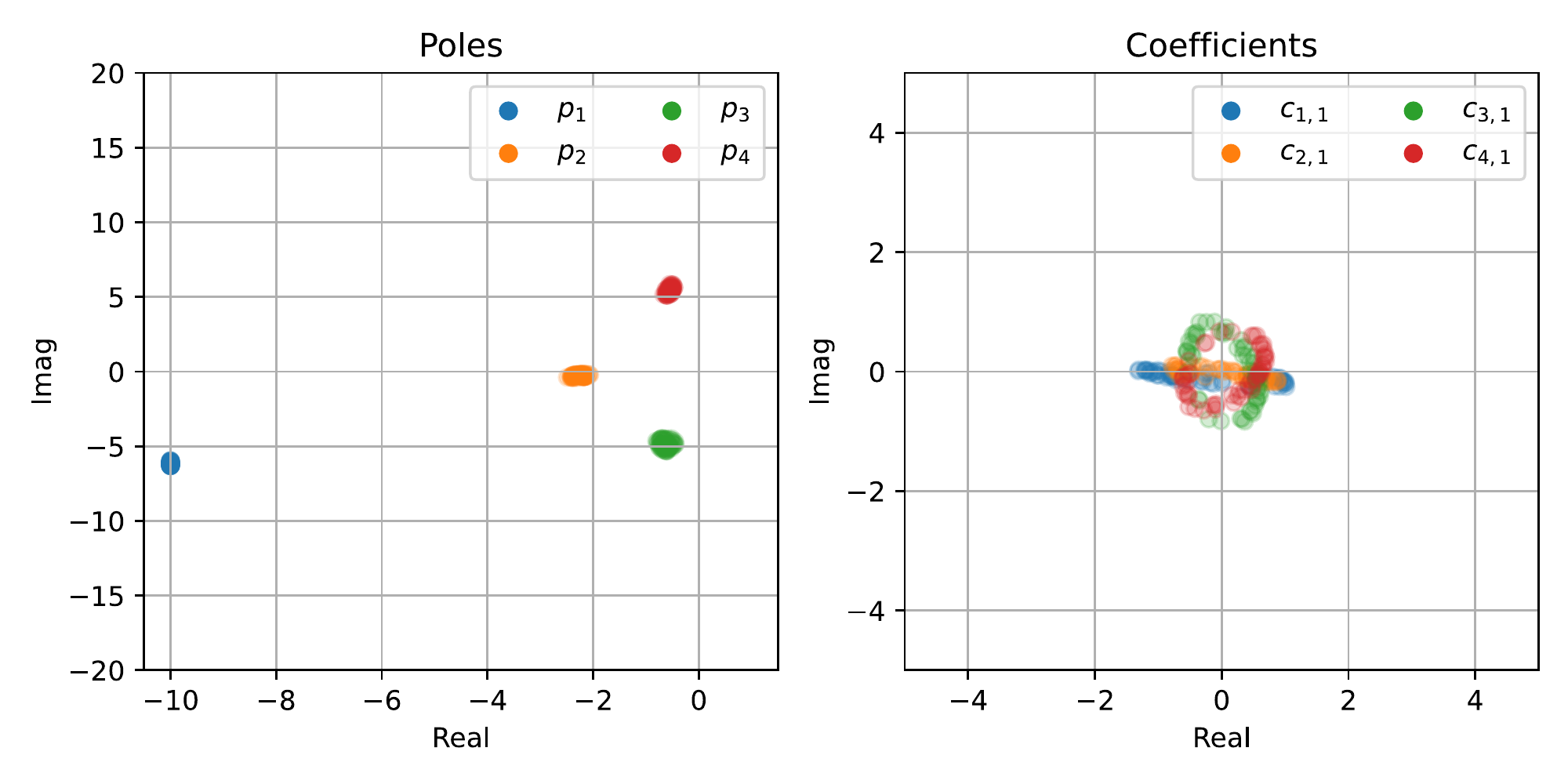}
    \caption{$\delta_{pole}=1.0$.}
    \end{subfigure}
    \caption{Distribution of Laplace Embedding $\vw$ under Different Thresholds of $\delta_{pole}$.
    \small
    The Laplace embeddings of trajectory $x(t)=\cos(2 \pi (t -\phi)), \phi\sim\mathrm{Exp}(\frac{1}{2})$ are plotted as poles and coefficients on the complex plane with different values of $\delta_{pole}$. 
    }
		\label{fig:pole-separation}
	\end{center}
\end{figure}

\textbf{Impact of Sampling Rate in Input Data.}\quad
The Nyquist Sampling Theorem states that a band-limited signal (maximum frequency of $B$) can be perfectly reconstructed from sequential observations with (average) sampling rate above $2B$. It provides a lower bound on the number of time-series observations required for our proposed Laplace encoder to work. 
Thus, we assume that the sampling rate in real-world datasets is sufficiently large so that important temporal patterns can be correctly identified. 
To validate the above statement, we conduct a synthetic experiment on time-series data generated by $x(t)=\sin(2\pi t+\varphi)$ where $\varphi \sim \mathrm{Exp}(\frac{1}{2})$ with different sampling rates. Figure \ref{fig:sampling-rate} demonstrates that the reconstruction error of the Laplace encoder converges to zero when the sampling rate is sufficiently large.

\begin{figure}[!h]
	\begin{center}
		\centering
		{\includegraphics[width=0.6\textwidth]{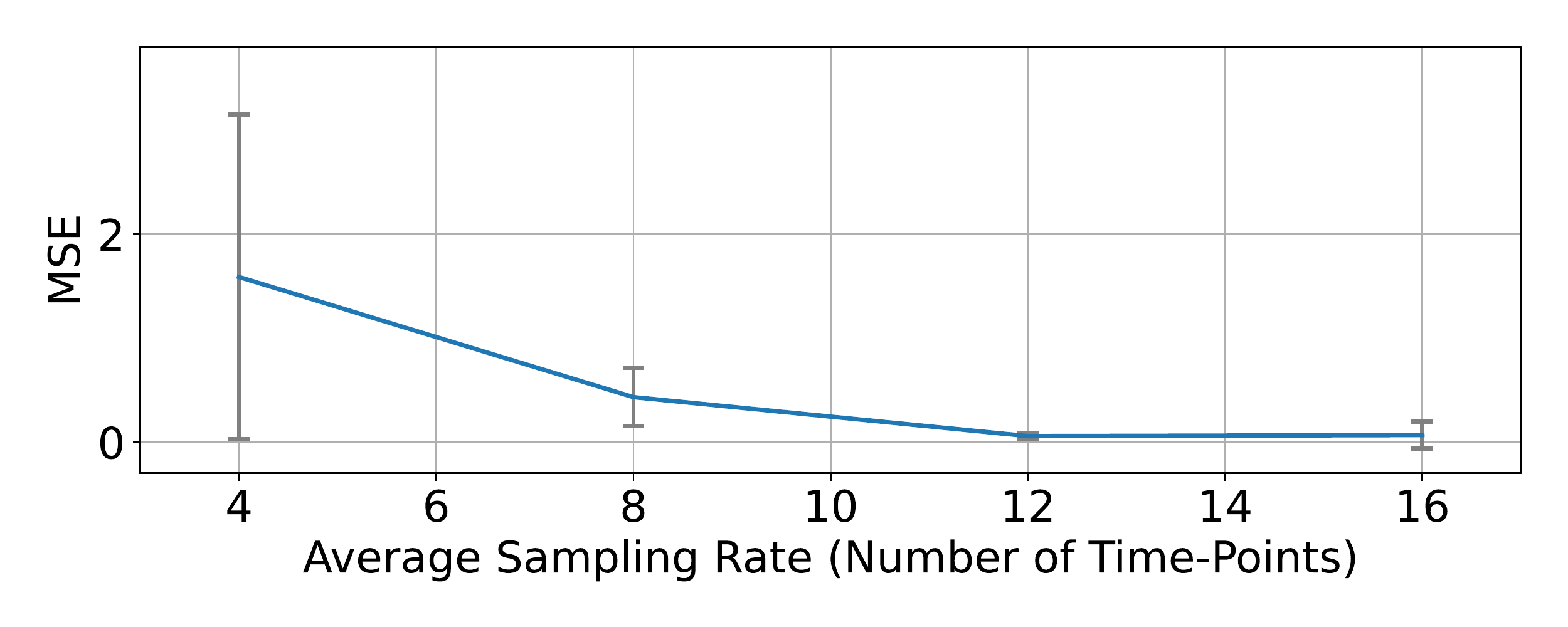}}
		\caption{{Impact of Sampling Rate on Time-series Reconstruction via Laplace Encoder.}
		\small} 
		\label{fig:sampling-rate}
	\end{center}
\end{figure}

\renewcommand{\thesection}{B}
\setcounter{table}{0}
\setcounter{figure}{0}
\setcounter{algorithm}{0}
\section{PROOF OF PROPOSITION 1}
Proposition \ref{proposition1} states that, given two Laplace embeddings $\vz^1$ and $\vz^2$ in latent space $\mathcal{Z}$, the distance between their corresponding time-domain trajectories $\hat{\vx}^1$ and $\hat{\vx}^2$ is upper-bounded by $ \psi\Vert\vz^1 - \vz^2\Vert_2^2$ with some scalar $\psi>0$. The proof of Proposition \ref{proposition1} can be derived as the following:
\vspace{-3mm}
\begin{proof} 
    Let us first consider the uni-variate case.
    Given two arbitrary Laplace embeddings $\vw^1, \vw^2 \in \mathbb{C}^{n(d+1)}$, their time-domain reconstructions can be obtained via inverse Laplace transform, i.e., $\hat{x}^i(t) = \mathcal{L}^{-1}(\vw^i) \triangleq \mathcal{L}^{-1}[F_{\vw^i}(s)](t), i=1,2$.
    According to (\ref{eq:laplace-transform}) and (\ref{eq:inverse-laplace-transform}), we have
    \begin{equation}
        \hat{x}^i(t) = \sum_{m=1}^n \sum_{l=1}^d \frac{c_{m,l}^it^{l-1}}{\Gamma(l)} e^{p_m^i t}, t\geq 0,
    \end{equation}
    where $\Gamma(l) = (l-1)!$ is the Gamma function, $\vw^i = [p_1^i,p_2^i,\ldots, c_{1,1}^i,\ldots, c_{n,d}^i]^\top, i=1,2$.
    
    \textbf{Difference in One Coefficient.}\quad
    Suppose $\vw^1$ and $\vw^2$ only differ at one coefficient $c_{m,l}$, which leads to the result $\Vert \vw^1 - \vw^2\Vert^2_2 = |c_{m,l}^1 - c_{m,l}^2|^2$.
    Then,
    \begin{equation}
    \begin{aligned}
        \Vert \hat{x}^1 - \hat{x}^2 \Vert_{\normltwo_{[0,1]}}^2 &= \int_0^1{|\hat{x}^1(t) - \hat{x}^2(t)|^2}\mathrm{d}t,\\
        & = \int_0^1{|c^1_{m,l} - c^2_{m,l}|^2 \left|\frac{t^{l-1}}{\Gamma(l)}e^{p_m t}\right|^2}\mathrm{d}t,\\
        & \leq |c^1_{m,l} - c^2_{m,l}|^2 \psi^c_{m,l} = \psi^c_{m,l} \Vert \vw^1 - \vw^2\Vert^2_2 ,
    \end{aligned}
    \end{equation}
    where $\psi^c_{m,l}$ is some suitable constant.
    
    \textbf{Difference in One Pole.}\quad
    Now, let us consider the case where $\vw^1$ and $\vw^2$ only differ at one pole $p_{m}$ which gives $\Vert \vw^1 - \vw^2\Vert^2_2 = |p_{m}^1 - p_{m}^2|^2$.
    Without loss of generality, we assume $p_m^2 - p_m^1 = r + j\theta $, where $r\leq 0, j^2=-1$. The following inequality can be established when $t\in [0,1]$:
    \begin{equation}
        \begin{aligned}
            |1 - e^{(p_m^2 - p_m^1) t}|^2 & = |1 - e^{r t} (\cos(\theta t) - j \sin(\theta t))|^2,\\
            & = (1-e^{rt})^2 + 2 e^{r t} (1-\cos (\theta t)),\\
            &\leq (1-e^{rt})^2 + e^{r t} \theta^2 t^2,&& \text{(via $e^{rt}> 0$ and $1-\cos(x) \leq \frac{x^2}{2}$)}\\
            & \leq r^2t^2 + e^{rt} \theta^2 t^2, &&\text{(via $r\leq 0$ and $0\leq 1-e^{rt} \leq (-r) t)$}\\
            & \leq (r^2+\theta^2) t^2,\\
            &= |p_m^1 - p_m^2|^2 t^2.
        \end{aligned}
    \end{equation}
    Hence, we have
    \begin{equation}
    \begin{aligned}
        \Vert \hat{x}^1 - \hat{x}^2 \Vert_{\normltwo_{[0,1]}}^2 &= \int_0^1{|\hat{x}^1(t) - \hat{x}^2(t)|^2}\mathrm{d}t,\\
        & = \int_0^1{\left|\frac{c_{i,l}t^{l-1}}{\Gamma(l)} \right|^2}|e^{p_m^1 t}|^2 |1 - e^{(p_i^2 - p_i^1) t}|^2\mathrm{d}t,\\
        & \leq  \int_0^1{\left|\frac{c_{i,l}t^{l-1}}{\Gamma(l)} \right|^2}|e^{p_m^1 t}|^2 |p_m^1 - p_m^2|^2 t^2 \mathrm{d}t,\\
        & \leq |p_m^1 - p_m^2|^2  \psi^p_m = \psi^p_m \Vert \vw^1 - \vw^2\Vert^2_2 ,
    \end{aligned}
    \end{equation}
    where $\psi^p_{m}$ is some suitable constant.

    {
    \textbf{General Cases.}\quad
    Now, we define an operator $S_{i}(\vw^1,\vw^2)$ that generates a new composite vector $\Bar{\vw}_i$ from $\vw^1$ and $\vw^2$.
    The first $i$ elements of the composite vector $\Bar{\vw}_i$ are taken from $\vw^2$ while the latter $n(d+1)-i$ elements of $\Bar{\vw}_i$ are obtained from $\vw^1$.
    For instance, we have
    $S_{0}(\vw^1,\vw^2)=\vw^1$,
    $S_{1}(\vw^1,\vw^2) =  [p_1^2,p_2^1,p_3^1,\ldots, c_{1,1}^1,\ldots,c_{n,d}^1]^\top$, $S_{2}(\vw^1,\vw^2) =  [p_1^2,p_2^2,p_3^1,p_4^1\ldots, c_{1,1}^1,\ldots, c_{n,d}^1]^\top$, $\ldots$, and $S_{n(d+1)}(\vw^1,\vw^2)=\vw^2$. It is easy to see that $S_{i}(\vw^1,\vw^2)$ and $S_{i+1}(\vw^1,\vw^2)$ only differ at one pole or one coefficient, and $\Vert S_{i}(\vw^1,\vw^2)-S_{i+1}(\vw^1,\vw^2)\Vert_2^2 = |p_{i+1}^1-p_{i+1}^2|^2$ when $0\leq i\leq n-1$ and $\Vert S_{i}(\vw^1,\vw^2)-S_{i+1}(\vw^1,\vw^2)\Vert_2^2 = |c_{m ,l}^1-c_{m,l}^2|^2$ otherwise, where $m=\floor{\frac{i-n}{d}+1}, l = i-n - (m-1) d+1$. 
    Each composite vector $\Bar{\vw}_i = S_{i}(\vw^1,\vw^2)$ yields a time-domain trajectory $\mathcal{L}^{-1}(\Bar{\vw}_i)$ via inverse Laplace transform of $F_{\Bar{\vw}_i}(s)$.
    
    Note that $\mathcal{L}^{-1}(\Bar{\vw}_0)=\mathcal{L}^{-1}(\vw^1)= \hat{x}^1$ and  $\mathcal{L}^{-1}(\Bar{\vw}_{n(d+1)})=\mathcal{L}^{-1}(\vw^2)= \hat{x}^2$. Based on the triangular inequality,
    \begin{equation}
    \begin{aligned}
         \Vert \hat{x}^1 - \hat{x}^2 \Vert_{\normltwo_{[0,1]}}^2 &= \Vert \sum_{i=0}^{n(d+1)-1} \mathcal{L}^{-1}(S_{i}(\vw^1,\vw^2)) - \mathcal{L}^{-1}(S_{i+1}(\vw^1,\vw^2)) \Vert_{\normltwo_{[0,1]}}^2\\
         &\leq  \sum_{i=0}^{n(d+1)-1} \Vert \mathcal{L}^{-1}(S_{i}(\vw^1,\vw^2)) - \mathcal{L}^{-1}(S_{i+1}(\vw^1,\vw^2)) \Vert_{\normltwo_{[0,1]}}^2,\\
         & \leq  \sum_{i=0}^{n(d+1)-1} \psi \Vert S_{i}(\vw^1,\vw^2) - S_{i+1}(\vw^1,\vw^2) \Vert_2^2,\\
         & =  \sum_{m=1}^{n} \psi |p_m^1-p_m^2|^2+\sum_{m=1}^{n} \sum_{l=1}^{d} \psi |c_{m,l}^1 - c_{m,l}^2|^2,\\
         & = \psi \Vert \vw^1 - \vw^2\Vert_2^2,
    \end{aligned}\label{eq:upper-bound}
    \end{equation}
    where we take $\psi = \max_{m,l} (\psi^p_m, \psi^c_{m,l})$.
    }
    
    Finally, for the multivariate case,
    let us consider two latent embeddings $\vz^1$ and $\vz^2$ as well as their associated time-domain reconstructions $\hat{\vx}^1$ and $\hat{\vx}^2$.
    We define the distance between trajectories $\hat{\vx}^1$ and $\hat{\vx}^2$ as
    \begin{equation}
        \Vert \hat{\vx}^1  - \hat{\vx}^2 \Vert_{\normltwo_{[0,1]}}^2 \triangleq \sum_{d=1}^{\mathrm{dim}_x} \int_0^1 |\hat{x}^1_d(t) - \hat{x}^2_d(t)|^2 \mathrm{d}t,
    \end{equation}
    where $\hat{x}_d^m$ is the $d$-th dimension of trajectory, $\hat{\vx}^m = \mathcal{L}^{-1}(\vw_d^m) = \mathcal{L}^{-1}[F_{\vw_d^m}(s)]$ for $m=1,2$, $\vw_d^m$ is the $d$-th component of $\vz^m$.
    According to (\ref{eq:upper-bound}), we have the following bound for each dimension $d$.
    \begin{equation}
         \int_0^1 |\hat{x}^1_d(t) - \hat{x}^2_d(t)|^2 \mathrm{d}t = \Vert \hat{\vx}_d^1  - \hat{\vx}_d^2 \Vert_{\normltwo_{[0,1]}}^2\leq \psi_d \Vert \vw_d^1-\vw_d^2\Vert_2^2,
    \end{equation}
    where $\psi_d>0$ is some suitable scalar.
    Since $\Vert \vz^1 - \vz^2_2 \Vert_2^2 =\sum_{d=1}^{\mathrm{dim}_x} \Vert \vw_d^1 - \vw_d^2 \Vert_2^2 $, the distance between the two reconstructed trajectories $\hat{\vx}^1$ and $\hat{\vx}^2$ can be upper-bounded as follows     with some suitable $\psi>0$.
    \begin{equation}
        \Vert \hat{\vx}^1  - \hat{\vx}^2 \Vert_{\normltwo_{[0,1]}}^2 \leq \sum_{d=1}^{\mathrm{dim}_x} \psi_d \Vert \vw_d^1 - \vw_d^2 \Vert_2^2 \leq \psi \Vert \vz^1 - \vz^2 \Vert_2^2.
    \end{equation}
\end{proof}

\begin{corollary}\label{corollary1}
Given a continuous set $\Phi_z$ in latent space, the set $\Phi$, which consists of reconstructed trajectories of $\vz\in \Phi_z$, is also a continuous set in trajectory space $\mathcal{X}$.
\end{corollary}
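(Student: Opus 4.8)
The plan is to read Proposition~\ref{proposition1} as a Lipschitz estimate for the reconstruction map and then invoke the elementary fact that continuous images of connected sets are connected. Concretely, I would define $\Psi:\mathcal{Z}\to\mathcal{X}$ by $\Psi(\vz)=\mathcal{L}^{-1}(\vz)=[\mathcal{L}^{-1}(\vw_1),\ldots,\mathcal{L}^{-1}(\vw_{\mathrm{dim}_x})]^\top$, the feature-wise inverse Laplace transform that sends a latent embedding to its time-domain reconstruction, so that by construction $\Phi=\Psi(\Phi_z)$. Taking square roots in the bound of Proposition~\ref{proposition1} gives $\|\Psi(\vz^1)-\Psi(\vz^2)\|_{\normltwo_{[0,1]}}\le\sqrt{\psi}\,\|\vz^1-\vz^2\|_2$ for all $\vz^1,\vz^2\in\mathcal{Z}$, with the same uniform constant $\psi>0$ established in (\ref{eq:upper-bound}); hence $\Psi$ is (Lipschitz) continuous as a map between the metric spaces $(\mathcal{Z},\|\cdot\|_2)$ and $(\mathcal{X},\|\cdot\|_{\normltwo_{[0,1]}})$.

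I would then finish with a standard topological argument. If $\Phi_z$ is connected, its continuous image $\Phi=\Psi(\Phi_z)$ is connected: a separation $\Phi=U_1\cup U_2$ into nonempty, disjoint, relatively open sets would pull back under $\Psi$ to a separation of $\Phi_z$, a contradiction. If instead ``continuous set'' is meant in the stronger path-connected sense that the paper uses elsewhere (translations $\gamma$ are explicit continuous paths), the same map transports a path $\gamma:[0,1]\to\Phi_z$ from $\vz^1$ to $\vz^2$ to the path $\Psi\circ\gamma:[0,1]\to\Phi$ from $\Psi(\vz^1)$ to $\Psi(\vz^2)$, continuous by composition, so $\Phi$ is path-connected; this is exactly the ``equivalent translation'' claim invoked in Section~\ref{sec:encoder}. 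Either reading yields the corollary.

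The corollary is genuinely light once Proposition~\ref{proposition1} is available, so the only point requiring care — the ``main obstacle'' such as it is — is confirming that $\Psi$ is well-defined and single-valued on $\Phi_z$, i.e., that passing through the rational form $F_\vw(s)$ of (\ref{eq:laplace-transform}) and its inverse transform (\ref{eq:inverse-laplace-transform}) assigns to each $\vz$ exactly one trajectory. This is immediate from the closed form $\mathcal{L}^{-1}(\vw)(t)=\sum_{m=1}^{n}\sum_{l=1}^{d}\frac{c_{m,l}t^{l-1}}{\Gamma(l)}e^{p_m t}$ derived in the proof of Proposition~\ref{proposition1}, which depends only on the entries of $\vw$. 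I would also stress that no injectivity of $\Psi$, no continuity of an inverse, and no role for the lexical pole ordering is needed here: the Lipschitz bound holds for arbitrary embeddings regardless of how the poles are ordered, and its constant $\psi$ is uniform over $\mathcal{Z}$, which is precisely what upgrades pointwise estimates to global continuity of $\Psi$ and hence delivers the connectedness of $\Phi$.
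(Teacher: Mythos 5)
Your proposal is correct, and it leans on exactly the same analytic ingredient as the paper -- the bound of Proposition~\ref{proposition1} used as a Lipschitz-type modulus for the reconstruction -- but it packages the conclusion differently. The paper argues pointwise: given $\hat{\vx}\in\Phi$ with embedding $\vz\in\Phi_z$ and any $\varepsilon>0$, it picks $\vz'\in\Phi_z$ with $\|\vz-\vz'\|_2^2<\varepsilon/\psi$ and concludes $\|\hat{\vx}-\hat{\vx}'\|_{\normltwo_{[0,1]}}^2\le\varepsilon$, i.e.\ an $\varepsilon$--$\delta$ statement mirroring its informal reading of ``continuous set'' (every element of $\Phi$ has arbitrarily close companions in $\Phi$). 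You instead promote the reconstruction to a map $\Psi:\mathcal{Z}\to\mathcal{X}$, take square roots in (\ref{eq:upper-bound}) to get global Lipschitz continuity, and invoke the standard facts that continuous images of connected (resp.\ path-connected) sets are connected (resp.\ path-connected), so $\Phi=\Psi(\Phi_z)$ inherits the property; your observation that $\Psi$ is single-valued via the closed-form inverse transform, and that no injectivity or pole ordering is needed, is accurate. What your route buys is a statement that actually matches how the corollary is used downstream -- the equivalent-translation argument needs images of latent paths $\gamma$ to be paths in $\mathcal{X}$, which is precisely your path-connected case -- whereas the paper's shorter $\varepsilon$--$\delta$ argument establishes only the local ``no isolated points'' property and leaves the connectedness reading implicit. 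Both are sound given Proposition~\ref{proposition1}; yours is the more rigorous formalization of the intended claim.
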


\begin{proof}
Consider a trajectory $\hat{\vx}\in \Phi$ and its corresponding latent embedding $\vz \in \Phi_z$.
For any $\varepsilon>0$, due to the continuity of $\Phi_z$, there must exist another embedding $\vz'\in \Phi_z$ such that $\Vert \vz-\vz'\Vert_2^2 < \delta{\varepsilon}$, where $\delta>0$ is a scalar.
Let us denote the time-domain reconstruction of $\vz'$ as $\hat{\vx}'\in \Phi$.
According to Proposition \ref{proposition1}, $\Vert \hat{\vx}  - \hat{\vx}' \Vert_{\normltwo_{[0,1]}}^2 \leq \psi \Vert \vz-\vz'\Vert_2^2$ holds for some $\psi>0$. Setting $\delta = \frac{1}{\psi}$ leads to the inequality $\Vert \hat{\vx}  - \hat{\vx}' \Vert_{\normltwo_{[0,1]}}^2 \leq\varepsilon$ which  indicates the continuity of set $\Phi$.
\end{proof}

\textbf{Equivalent Translation in the Latent Space.} \quad
Consider two trajectories $\vx^1, \vx^2\in \mathcal{X}$ with the corresponding latent embeddings $\vz^1$ and $\vz^2$ in the latent space.
We construct a set $P_z = \{ \Tilde{\gamma}(\vz^1\rightarrow\vz^2)\}$ of all possible continuous path $\Tilde{\gamma}$ in the latent space that connects $\vz^1$ and $\vz^2$.
Let $g_E:\mathcal{Z} \rightarrow \mathcal{X}$ be a function that maps latent embedding $\vz$ back to its time-domain reconstruction $\hat{\vx}$ in the trajectory space. 
Then, given a translation $\Gamma(\vx^1\rightarrow\vx^2)$ in the trajectory space, we can define the (approximately) equivalent translation in the latent space as 
\begin{equation}
    \gamma(\vz^1\rightarrow\vz^2) \triangleq \argmin_{\Tilde{\gamma}\in P_z} {\min_{\vz\in \Tilde{\gamma}} \max_{\vx\in \Gamma}\Vert \vx - g_E(\vz) \Vert_{\normltwo_{[0,1]}}^2},
\end{equation}
where ${\min_{\vz\in \Tilde{\gamma}} \max_{\vx\in \Gamma}\Vert \vx - g_E(\vz) \Vert_{\normltwo_{[0,1]}}^2}$ measures the minimum distance between translation, i.e., $\Gamma$, and the time-domain reconstruction of latent path $\Tilde{\gamma}$, i.e., $\Tilde{\Gamma}= \{g_E(\vz)~|~\vz\in \Tilde{\gamma}\}$. 
In general, $\gamma$ is the closet projection of $\Gamma$ within the latent space $\mathcal{Z}$, and the equivalence of trajectory translation is approximate.
If every trajectory $\vx\in \Gamma$ has a rational Laplace transform with no more than $n$ poles and maximum degree of $d$ as described in  (\ref{eq:laplace-transform}), the equivalence becomes strict.
Without loss of generality, let us consider the uni-variate case.
Given a translation $\Gamma$, we assume each $x\in \Gamma$ can be exactly described by the Laplace transform $F_\vw(s)$ in (\ref{eq:laplace-transform}), where $\vw = f_L(x(\vt))$, $\vt$ is a vector of some suitable sampling time stamps.
For any two trajectories $x,x'\in \Gamma$ that satisfy $|x(t) - x'(t)|\leq \delta$ almost everywhere in $t\in[0,1]$, we have
\begin{equation}
\begin{aligned}
    |F_{\vw}(s) - F_{\vw'}(s)|^2 &= \left|\int_0^{\infty} {(x(t)-x'(t))e^{-st} }\mathrm{d}t \right|^2,\\
    &\leq  \int_0^{\infty} {|x(t)-x'(t)|^2 |e^{-st}|^2 }\mathrm{d}t,\\
    & \leq \delta^2  \int_0^{\infty} {|e^{-st}|^2 }\mathrm{d}t,\\
    &=\frac{\delta^2}{2\mathrm{Re}(s)},
\end{aligned}
\end{equation}
holds for $\mathrm{Re}(s)>0$.
When $\delta\rightarrow 0$, we have $x'\rightarrow x$ and $F_{\vw'}\rightarrow F_{\vw}$.
Note that $F_{\vw} - F_{\vw'}$ is rational and can be determined with a sufficient number of observations in its region of convergence, e.g., $\mathrm{Re}(s)>0$.
The equivalence in Laplace transform, i.e., $|F_{\vw}(s)- F_{\vw}(s)|^2 \equiv 0$, implies that $\vw' = \vw$.\footnote{When $F_{\vw}(s)$ and $F_{\vw'}(s)$ have less than $n$ poles, $\vw$ and $\vw'$ may take value from multiple alternative embeddings. However, we can always select the combination such that $\vw' = \vw$.}
Thus, $x'\rightarrow x$ also leads to $\vw'\rightarrow\vw$, which means that the collection of Laplace embeddings $\{\vw|\vw=f_L(x(\vt)),x\in \Gamma\}$ is in fact a continuous path $\gamma$ in the latent space.
Thereby, path $\gamma$ is a latent translation that exactly yields the trajectory translation $\Gamma$. Similar results can be easily extended to the multi-variate trajectory setting.

\textbf{Justification for Latent Path-based Test.}\quad
The path-based connectivity test $\mathrm{d}_\Gamma(\vx^1,\vx^2)$ is defined based on the oracle model $g(\vx)$ of conditional distribution $p(\vy|\vx)$.
In our proposed method {\ours}, a predictor is built upon the Laplace embedding, i.e., $f(\mX) = f_P\circ f_E (\mX)$, to approximate the oracle conditional distribution such that $f(\mX)\approx g(\vx)$ given time-series $\mX$ sampled from $\vx$.
Thus, we have $\mathrm{d}_\Gamma(\vx^1,\vx^2) \approx \max_{\vx\in \Gamma, i=1,2} \mathrm{d}_y(f(\vx(\vt)),f(\mX^i))$, where $\vt$ is a vector of some suitable observation time stamps.
Further, note that translation $\Gamma$ in trajectory space can be approximated by $\hat{\Gamma}$ as time-domain reconstruction of latent translation $\gamma(\vz^1\rightarrow\vz^2)$ in $\mathcal{Z}$, where $\vz^i = f_E(\mX^i)$ for $i=1,2$. Then, we have 
\begin{equation}
\max_{\vx\in \Gamma, i=1,2} \mathrm{d}_y(f(\vx(\vt)),f(\mX^i))\approx \max_{\hat{\vx}\in \hat{\Gamma}, i=1,2} \mathrm{d}_y(f(\hat{\vx}(\vt)),f(\mX^i))  \approx \max_{\vz\in \gamma, i=1,2} \mathrm{d}_y(f_P(\vz),f_P(\vz^i)),
\end{equation}
which leads to the latent path-based test in (\ref{eq:latent-path-based-test}).

\renewcommand{\thesection}{C}
\setcounter{table}{0}
\setcounter{figure}{0}
\setcounter{algorithm}{0}
\section{GRAPH-CONSTRAINED \texorpdfstring{$K$-MEANS}{} ALGORITHM IN T-PHENOTYPE}
The graph-constrained $K$-means iteration in Algorithm \ref{alg:clustering} is provided in Algorithm \ref{alg:kmeans-on-graph}.
After each run via \var{GK-means}, the objective function $J$ in (\ref{eq:problem}) is re-evaluated. 
The main algorithm of {\ours} stops after 5 iterations with no improvement in objective $J$ under maximum of 1,000 iterations. 
Alternatively, {\ours} stops when the improvement is below certain tolerance $\mathrm{tol}=10^{-7}$, i.e., $|\Delta J| \leq \mathrm{tol}$.
\begin{algorithm}
\caption{\var{GK-means} (Single $K$-means iteration over similarity graph $\gG_\delta$)}\label{alg:kmeans-on-graph}
  \begin{algorithmic}[1]
  \Require{$J, e_1,e_2,\ldots, e_K,\gG_\delta$} \Comment{$J$ objective, $e_k$ cluster seed, $\gG_\delta$: similarity graph}
\Ensure{$\mathcal{C} = \{C_1,C_2,\ldots, C_K\}$}
    \For{$k=1,2,\ldots,K$}
        \State $\vv_k, \mX^{(k)} \gets e_k $
        \State $C_k \gets \{\mX^{(k)}\}$ \Comment{Initialize cluster $C_k$ with seed $e_k$}
    \EndFor
    \State $D_\mathrm{free} \gets \{\mX| \mX\not \in C_k, \,\forall C_k\in \mathcal{C}\}$ \Comment{Get the set of unclustered samples}
    \While {$|D_\mathrm{free}|>0$}
        \For{$\mX \in D_\mathrm{free}$}
            \State $C^\ast \gets \argmin_{C_k\in\mathcal{C}, \mX \xleftrightarrow{\gG_\delta} C_k} {\mathrm{d}_y(f(\mX),\vv_k)}$ \Comment{Find the best cluster assignment}
            \State $C^\ast \gets C^\ast \cup \{\mX\}$
            \State $D_\mathrm{free} \gets D_\mathrm{free} \setminus \{\mX\}$
        \EndFor
        \For{$k=1,2,\ldots,K$}
            \State $\vv_k\gets \frac{1}{|C_k|}\sum_{\mX\in C_k} f(\mX)$ \Comment{Update cluster centroid}
        \EndFor
    \EndWhile
  \end{algorithmic}
\end{algorithm}

\renewcommand{\thesection}{D}
\setcounter{table}{0}
\setcounter{figure}{0}
\setcounter{algorithm}{0}
\section{EXPERIMENT SETUP}

\subsection{Datasets and Statistics}
For the two real-world medical datasets, we want to capture recent temporal patterns and  associated target outcomes. Thus, we utilize a sliding window of size 6 years and 24 hours to extract sub-sequences containing temporal predictive patterns among most recent observations for ADNI and ICU datasets, respectively.
Statistics of major feature variables in the ADNI dataset and ICU dataset can be found in Table \ref{tab:ADNI-statistics} and Table \ref{tab:ICU-statistics}, respectively.

\begin{table}[!h]
\caption{Statistics of ADNI Dataset.
}
\label{tab:ADNI-statistics}
\begin{center}
\tiny
\begin{tabular}{lllllllll}
\toprule
\multicolumn{2}{l}{\textbf{\capstr{Static Covariates}}} & \textbf{TYPE} & \textbf{MEAN} & \textbf{MIN/MAX (MODE)}&  & \textbf{TYPE} &\textbf{MEAN} &\textbf{MIN/MAX (MODE)}\\
\midrule
\multirow{2}{*}{Demographic}& Race &Cat. &0.93 &White& Ethnicity& Cat. &0.97& Not Hisp/Latino\\
&Education &Cat. &16.13 &16 &Marital Status& Cat. &0.75 &Married\\
\midrule
Genetic& APOE $\varepsilon4$& Cat. &0.44& 0 \\
\midrule
\\
\multicolumn{2}{l}{\textbf{\capstr{Time-varying Covariates}}} & \textbf{TYPE} & \textbf{MEAN} & \textbf{MIN/MAX (MODE)}&  & \textbf{TYPE} &\textbf{MEAN} &\textbf{MIN/MAX (MODE)}\\
\midrule
\multirow{1}{*}{Demographic}& Age &Cont. &73.62 &55/91.4\\
\midrule
\multirow{4}{*}{Biomarker}& Entorhinal& Cont. & 3.6E+3& 1.0E+3/6.7E+3& Mid Temp& Cont. & 2.0E+4 & 8.9E+3/3.2E+4\\
& Fusiform& Cont. & 1.7E+4& 9.0E+3/2.9E+4& Ventricles& Cont.& 4.1E+4& 5.7E+3/1.6E+5\\
&Hippocampus&Cont.&6.9E+4&2.8E+3/1.1E+4& Whole Brain& 1.0E+6& 6.5E+5/1.5E+6\\
&Intracranial& Cont. & 1.5E+6& 2.9E+2/2.1E+6\\
\midrule
\multirow{4}{*}{Cognitive}& CDRSB& Cont.&1.21&0.0/17.0& Mini Mental State& Cont. &27.84& 2.0/30.0\\
& ADAS-11& Cont. & 8.58& 0.0/70.0 & ADAS-13& Cont. &13.60& 0.0/85.0\\
& RAVLT Immediate& Cont. & 38.26& 0.0/75.0 & RAVLT Learning & Cont.& 4.65&-5.0/14.0\\
& RAVLT Forgetting & Cont.& 4.19&-12.0/15.0& RAVLT Percent& Cont.& 51.68&-500.0/100.0\\
\bottomrule
\end{tabular}
\end{center}
\end{table}

\begin{table}[!h]
\caption{Statistics of ICU Dataset.
}
\label{tab:ICU-statistics}
\begin{center}
\tiny
\begin{tabular}{lllllllll}
\toprule
\multicolumn{2}{l}{\textbf{Static Covariates}} & \textbf{TYPE} & \textbf{MEAN} & \textbf{MIN/MAX (MODE)}&  & \textbf{TYPE} &\textbf{MEAN} &\textbf{MIN/MAX (MODE)}\\
\midrule
\multirow{1}{*}{Demographic}& Age &Cont. &67.25 &15.0/90.0& Gender& Cat. &0.56& Male\\
Admission &ICU Type& Cat. &2.76& Medical ICU\\
\midrule
\\
\multicolumn{2}{l}{\textbf{\capstr{Time-varying Covariates}}} & \textbf{TYPE} & \textbf{MEAN} & \textbf{MIN/MAX (MODE)}&  & \textbf{TYPE} &\textbf{MEAN} &\textbf{MIN/MAX (MODE)}\\
\midrule
\multirow{10}{*}{Blood Test}
& Albumin&Cont. &2.92& 1.0/5.3& ALP&Cont.&1.2E+2& 1.2E+1/2.2E+3\\
& ALT & Cont.& 3.9E+3& 1.0/1.2E+4&
AST& Cont.& 5.1E+2& 4.0/1.8E+4\\
&Bilirubin & Cont.& 2.91& 0.1/47.7& BUN&Cont.&27.41&0.0/197.0\\
&Cholesterol &Cont.&156.52& 28.0/330.0& Creatinine & Cont.& 1.50& 0.1/22.1\\
& Glucose &Cont.& 1.4E+3& 1.0E+1/1.1E+3& Lactate&Cont.& 2.88& 0.3/29.3\\
& HCO3& Cont.& 23.12& 5.0/50.0& pH & Cont.& 7.49& 1.0/735.0\\
&K &Cont. & 4.14& 1.8/22.9& Mg&Cont.& 2.03&0.6/9.9\\
& Na&Cont. &139.07& 98.0/177.0& 
HCT& Cont. & 30.69& 9.0/61.8\\
& TroponinI&Cont.& 7.15& 0.3/49.2& TroponinT& Cont. &1.20& 0.01/24.91\\
&Platelets &Cont. &1.9E+2& 6.0/1.0E+3& White Blood Cell & Cont. & 12.67& 0.1/187.5\\
\midrule
\multirow{6}{*}{Monitoring}
& Heart Rate& Cont. & 86.80& 0.0/199.5& Respiratory Rate& Cont.& 19.64&  0.0/98.0\\
& SysABP & Cont. &119.57& 0.0/273.0& NISysABP & Cont.&119.20&0.0/247.0\\
& DiasABP& Cont. & 59.54& 0.0/268.0& NIDiasABO& Cont. & 58.18& 0.0/180\\
& MAP& Cont. &80.23& 0.0/295.0& NIMAP& Cont. & 77.13& 0.0/194.0\\
&GCS& Cont.& 11.41& 3.0/15.0& Temperature& Cont.& 37.07& -17.8/42.1\\
& Urine& Cont.& 12E+2& 0.0/1.1E+5\\
\midrule
\multirow{2}{*}{Oxygen}
&FiO2& Cont. & 0.54& 0.21/1.0& PaCO2 & Cont. & 40.41& 11.0/100.0\\
& PaO2&Cont.& 147.82& 0.0/500.0&SaO2& Cont.& 96.65& 26.0/100.0\\
\bottomrule
\end{tabular}
\end{center}
\end{table}

\subsection{Baselines}
We compare the performance of {\ours} with the following five benchmarks ranging from traditional method to state-of-the-art deep learning-based methods, where each clustering method reflects a different notion of temporal phenotypes:

\textbf{$K$-means with Warping-based Distance.}\quad
The technique of dynamic time warping (DTW) provides one way to measure time-series similarity regardless of the observation interval.
Time-series with similar temporal patterns usually leads to smaller DTW distances.
We apply conventional $K$-means with the DTW-based similarity measure to discover clusters representing different temporal patterns. We denote this approach as KM-DTW.

\textbf{Deep Temporal $K$-means.}\quad
Embedding (i.e., hidden representations) from RNNs can provide meaningful information to measure the similarity between time-series.
With the encoder-predictor (E2P) structure introduced in \citep{lee2020temporal}, we include the baseline of KM-E2P that performs clustering in a representation space via $K$-means.
We denote the baseline as KM-E2P(z) when The representation space is formed by the latent embeddings from an encoder network.
The discovered cluster will capture both similarities in input time-series and the output label prediction due to the E2P structure.
When the representation space is selected to be the output (label prediction) of the predictor network, we refer to the method as KM-E2P(y).
In this case, the discovered clusters are aligned to major modes in the label distribution and are not necessarily associated with certain temporal patterns in trajectory space. 

\textbf{$K$-means with Laplace Encoder.}\quad
Similar to the baseline of KM-DTW, the time-series embedding from Laplace encoder provides a unified representation of (potentially) irregularly sampled time-series. The Euclidean distance between Laplace embeddings can thus be used as a similarity measure for different patient trajectories. In practice, the longitudinal observations of patients are first converted to a latent space via the Laplace encoder. Then, $K$-means algorithm is performed over the latent representations to identify patient subgroups based on their similarity in temporal patterns. 

\textbf{Toward $K$-means Friendly Spaces using Sequence-to-sequence.}\quad
Sequence-to-sequence (SEQ2SEQ) learning paradigm allows the learning of a representation space that is easier to perform clustering compared to the original time-series data. 
Such baseline reflects the recent trend of combining conventional clustering methods, e.g., $K$-means, with dimension reduction using deep learning technique \citep{xie2016unsupervised,baytas2017patient}.
With different temporal patterns encoded in a low-dimension representation space, $K$-means clustering is applied to discover clusters that represent various temporal feature interactions in input time-series data.
In the experiment, we use a modified version of DCN \citep{yang2017towards} as the SEQ2SEQ baseline.

\textbf{AC-TPC.}\quad
AC-TPC \citep{lee2020temporal} is one of the state-of-the-art temporal clustering approach that discovers outcome-oriented clusters.
AC-TPC learns a cluster assignment policy in the latent space based on  an encoder network.
The cluster assignment policy is trained with the actor-critic loss from reinforcement learning to find the optimal clusters that represent typical label distributions learned by a predictor network.
Similar to KM-E2P(y), there is no guarantee on the association between temporal patterns and clusters discovered by AC-TPC.

\subsection{Training Procedure of \ours}
To fit the model of {\ours} on a dataset, the Laplace encoder for each trajectory dimension is firstly pre-trained based on (\ref{eq:loss-encoder}) calculated at each time step.
Then, we fit the predictor $f_L$ with observed patient outcomes $y$.
Finally, the temporal clusters are discovered via graph-constrained $K$-means algorithm \ref{alg:kmeans-on-graph} based on the output from the predictor.
Latent embeddings from the Laplace encoder has a clear mathematical meaning. Thus, we freeze the pre-trained Laplace encoder to be isolated from gradients due to outcome predictions. 
Joint optimization of the encoder and predictor may lead to slower convergence and lower performance as shown in Table \ref{tab:benchmark-synth} and Table \ref{tab:benchmark-medical} with ``{\ours} (J)'' as the ablation study.

\subsection{Performance Metrics}

\textbf{Prediction Performance.}\quad
Area under the curve of receiving-operator characteristic (AUROC) and area under the curve of precision-recall (AUPRC) are used to assess the prognostic value of the discovered clusters on predicting the target label $\vy$. 
For non-binary (category larger than 2) labels, these scores are calculated individually for each category and averaged over the entire categories. 

\textbf{Clustering Performance.}\quad
For synthetic data, we evaluate the clustering performance in terms of the purity score \citep{lee2020temporal}, adjusted Rand index (RAND) \citep{steinley2004properties}, and normalized mutual information (NMI) \citep{vinh2009information} as the ground-truth cluster label is available. 
For the real-world dataset, there is no ground-truth of cluster label. 
In such a case, the Silhouette coefficient \citep{rousseeuw1987silhouettes} is commonly used as a measure of cluster consistency by assessing the homogeneity within each cluster and heterogeneity across different clusters.
More specifically, the traditional Silhouette index assumes convex clusters and 
uses the average intra-cluster distance ($a$) and inter-cluster distance ($b)$ to evaluate the consistency between cluster assignment and pattern distribution as $s = \frac{|b-a|}{\max (a,b)}$. 
Averaging $s$ over all samples gives the Silhouette index $S$.

In this paper, the clusters are identified via predictive temporal patterns and are not necessarily in convex shapes.
To better reflect our new notion of clusters, we instead use an $m$-nearest neighbor version of Silhouette index, i.e., $S^m$.
Specifically, suppose there are $K$ clusters $\mathcal{C}=\{C_1,C_2,\ldots, C_K\}$.
Given a time-series $\mX$ in cluster $C_k$, we only consider its $m$ nearest samples in the corresponding cluster when calculating intra- and inter-cluster distances $a^m$ and $b^m$ as given below:
\begin{equation}
    a^m = \frac{1}{|N_m(\mX, C_k)|}\sum_{\mX' \in N_m(\mX, C_k)}{\Vert\mX - \mX' \Vert_2^2},~~~~b^m= \min_{i\neq k}  \frac{1}{|N_m(\mX, C_i)|}\sum_{\mX' \in N_m(\mX, C_i)}{\Vert\mX - \mX' \Vert_2^2},
\end{equation}
where $N_m(\mX,C_k)$ indicates the set of $m$ nearest neighbors of $\mX$ in cluster $C_k$. Then, the clustering consistency in our variant Silhouette index is calculated as $s^m = \frac{|b^m-a^m|}{\max (a^m,b^m)}$. The average score $S^m$ of all samples is used to measure the overall clustering consistency.
Note that when $m\geq \max_{C_k\in \mathcal{C}} |C_k| $, the variant $S^m$ is identical to the original Silhouette index, i.e., $S^m=S$.

Focusing on $m$ closest samples allows us to effectively evaluate pattern consistency in non-convex and irregularly shaped clusters. 
Nonetheless, when multiple temporal patterns are put into the same cluster, $S^m$ may still generate a high score due to the focus on local similarity.
To address this issue, we use another connectivity-based metric $P^m$ to evaluate the purity of a cluster in terms of temporal patterns.
Consider a cluster $C_k$, a connectivity graph over time-series in $C_k$ can be derived via $m$-nearest neighbor discovery. We use the count $p_k$ of connected subgraphs to estimate the number of temporal pattern included in cluster $C_k$ and calculate the temporal pattern purity via $P^m = \frac{1}{K}\sum_{C_k\in \mathcal{C}} \frac{1}{p_k}$.
It is clear that $P^m=1$ when $m$ is sufficiently large and each cluster only contains a single temporal pattern, and $P^m=\frac{1}{K}\sum_{C_k\in \mathcal{C}} \frac{1}{|C_k|}$ when $m=0$.

To get an overall assessment of cluster consistency, we normalize $S^m$ into $[0,1]$ and calculate the summary metric \textit{AUSIL} as the area under the curve of $S^m$ verses $P^m$ for $m=1,2,\ldots,M, M\in \sN$.
For the evaluation of phenotype discovery, we combine the prediction accuracy (AUROC and AUPRC) and cluster consistency (AUSIL) into two composite metrics $H_\mathrm{ROC}$ and $H_\mathrm{PRC}$.
Similar to the F1-score in classification, these composite metrics are defined respectively as 
\begin{equation}
H_\mathrm{ROC} \triangleq 2 \frac{\mathrm{AUROC}\cdot \mathrm{AUSIL} }{\mathrm{AUROC}+\mathrm{AUSIL}}, ~~~~~    H_\mathrm{PRC} \triangleq 2 \frac{\mathrm{AUPRC}\cdot \mathrm{AUSIL} }{\mathrm{AUPRC}+\mathrm{AUSIL}}.
\end{equation}

\renewcommand{\thesection}{E}
\setcounter{table}{0}
\setcounter{figure}{0}
\setcounter{algorithm}{0}
\section{HYPERPARAMETER SELECTION}
In the experiment, {\ours}, KM-E2P($y$), KM-E2P($z$) are implemented with PyTorch and are trained with learning rate of 0.1 in 50 epochs.
AdamW optimizer is used to tune the network parameters.
The $K$-means clustering in KM-E2P($y$), KM-E2P($z$) and KM-DTW is performed with $K$-means++ initialization based on implementation in PyClustering.\footnote{\url{https://pyclustering.github.io/}} The baselines of AC-TPC and SEQ2SEQ are implemented in TensorFlow.
They are trained with Adam optimizer with training epochs set to 200 due to different learning rates in their implementation.

We perform hyperparameter selection on each dataset via 3-fold cross-validation.
For {\ours}, the best hyperparameters of the Laplace encoders are searched to minimize the average reconstruction error over all temporal dimensions.
For each real-world dataset, the best number of clusters $K$ is searched via maximizing the composite metric $H_\mathrm{PRC}$ of {\ours}.
The selected best cluster number $K$ is used for all baselines on the same dataset.
For baselines of KM-E2P($y$) and  KM-E2P($z$), the hyperparameters for each dataset are search to maximize $H_\mathrm{PRC}$ (or purity score on the synthetic dataset) given the selected cluster number $K$.
The hyperparameters of AC-TPC and SEQ2SEQ are set to be the same with the original implementation in \citep{lee2020temporal} (dropout layers are disabled to ensure reproducibility). 
The hyperparameter space considered in our experiment is discussed as follows.

\subsection{Hyperparameter Selection of {\ours}}

\textbf{Laplace Encoder.}\quad
In the experiment, each Laplace encoder $f_L$ in {\ours} contains a 1-layer GRU and a 1-layer MLP with 10 hidden units in each layer. 
Given a time-series input, each Laplace encoder generates an embedding with $n=4$ poles and maximum degree of $d=1$.
As mentioned earlier, coefficient $\alpha_2$ for regularization term $l_\mathrm{distinct}$ is set to $0.01$ throughout the experiment.
The rest hyperparameters are searched in the parameter space as follows.
\begin{itemize}
    \item {Coefficient for pole separation loss $l_\mathrm{sep}$: $\alpha\in\{1.0,10.0\}$.}
    \item {Coefficient loss $l_\mathrm{real}$: $\alpha_1\in\{0.1,1.0\}$.}
    \item {Threshold for pole sorting  and the separation loss: $\delta_{pole}\in\{1.0,2.0\}$.}
\end{itemize}
To address the complex temporal patterns in the ICU dataset, the maximum degree of poles $d$ is also added to the search space, and the range of $d\in \{1,2\}$ is considered. 
The best hyperparameter for Laplace encoder on the three datasets are given as follows.
\begin{itemize}
    \item {Synthetic dataset: $\alpha=1.0, \alpha_1 = 0.1, \delta_{pole}=1.0$.}
    \item {ADNI dataset: $\alpha=1.0, \alpha_1 = 0.1, \delta_{pole}=2.0$.}
    \item {ICU dataset: $\alpha=1.0, \alpha_1 = 0.1, \delta_{pole}=2.0, d=2$.}
\end{itemize}

\textbf{Predictor.}\quad
The predictor $f_{P}$ is composed of a $3$-layer MLP with $10$ hidden units in each layer.

\textbf{Cluster Number $K$.}\quad
The best number of $K$ for each dataset is selected based on the optimal Laplace encoder and predictor structures selected above.
We use the ground truth cluster number $K=3$ for the synthetic dataset.
For the two real-world datasets, the cluster number is searched among $K\in\{2,3,4,5\}$ to maximize the composite clustering performance $H_\mathrm{PRC}$.
The optimal cluster number selection result is given below.
\begin{itemize}
    \item {Synthetic dataset: $K=3$ (we directly use the ground truth).}
    \item {ADNI dataset: $K=4$.}
    \item {ICU dataset: $K=3$.}
\end{itemize}

\subsection{Hyperparameter Selection of Baselines}
\textbf{KM-E2P($y$).}\quad
The KM-E2P($y$) model includes a 1-layer GRU network to extract temporal features from input time-series. A $2$-layer MLP is stacked on top of the GRU network to form an encoder.
Given the encoder output, another $2$-layer MLP is used to predict the categorical label $\vy$.
All layers in the GRU and MLP share the same number $h$ of hidden units.
Hyperparameters of $h\in\times\{5,10,20\}$ is searched in each dataset basedd on the corresponding $K$ determined above.
By maximizing the composite metric $H_\mathrm{PRC}$ or purity score, the hyperparameter selection result is obtained as follows.
\begin{itemize}
    \item {Synthetic dataset: $h=20$.}
    \item {ADNI dataset: $h=20$.}
    \item {ICU dataset: $h=20$.}
\end{itemize}

\textbf{KM-E2P($z$).}\quad
Similar to KM-E2P($y$), the KM-E2P($z$) model is composed of a encoder with 2-layer MLP on top  of a 1-layer GRU network to extract temporal features from input time-series. 
The encoder outputs a $r$-dimension latent vector, which is then used by a 2-layer MLP-based predictor for label prediction.
All layers in the GRU and MLP share the same number $h$ of hidden units.
Given the best cluster numbers of $K$ found by {\ours}, on each dataset, the optimal combination of $h$ and $r$ are search in the space of $(h,r)\in \{10,20\}\times\{5,10,20\}$ to maximize the composite metric $H_\mathrm{PRC}$ or purity score when ground truth cluster label is available.
The hyperparameter selection result is given as follows.
\begin{itemize}
    \item {Synthetic dataset: $h=10,r = 10$.}
    \item {ADNI dataset: $h=10,r=20$.}
    \item {ICU dataset: $h=20,r=10$.}
\end{itemize}

\textbf{KM-$\mathcal{L}$.}\quad
The baseline KM-$\mathcal{L}$ simply shares hyperparameters with {\ours} for its Laplace encoders on each dataset.

\renewcommand{\thesection}{F}
\setcounter{table}{0}
\setcounter{figure}{0}
\setcounter{algorithm}{0}
\section{COMPLETE BENCHMARK RESULT}
The complete benchmark result on synthetic dataset is shown in Table \ref{tab:benchmark-synth-full}.
{\ours} has significantly better clustering performance (purity score, adjusted Rand index, normalized mutual information) over all baselines on the synthetic data. 
In the meantime, the advantage of {\ours} over other baselines (except for AC-TPC) is clearly demonstrated via the proposed phenotype discovery performance metrics of $H_\mathrm{ROC}$ and $H_\mathrm{PRC}$.
An extra baseline of KM-Laplacian ($K$-means on graph Laplacian calculated via dynamic time warping) is included in Table \ref{tab:benchmark-synth-full} for reference. We note that this method has two major drawbacks: 1) there is not a stable and consistent representation space for cluster assignment for new samples; and 2) the distance matrix computation complexity in dynamic time warping could be extremely high, which makes this baseline infeasible for the two real-world datasets.

\begin{table}[!h]
\caption{Complete Benchmark Result on the Synthetic Dataset.
}
\label{tab:benchmark-synth-full}
\begin{center}
\scriptsize
\begin{tabular}{lccccccc}
\toprule
\textbf{METHOD}   &\textbf{AUROC}&\textbf{AUPRC}& \textbf{PURITY} & \textbf{RAND} & \textbf{NMI}&\textbf{$H_\mathrm{ROC}$}&\textbf{$H_\mathrm{PRC}$} \\
\midrule
KM-E2P(y)  &0.973$\pm$0.014 	&\textbf{0.962$\pm$0.019 }	&0.663$\pm$0.019 &	0.477$\pm$0.033 &	0.569$\pm$0.045 	&0.846$\pm$0.012 	&0.842$\pm$0.010\\
KM-E2P(z) &	0.963$\pm$0.012 	&0.948$\pm$0.011 &	0.677$\pm$0.029 &	0.418$\pm$0.024 	&0.485$\pm$0.047 	&	0.879$\pm$0.011 	&0.873$\pm$0.009 \\
KM-DTW & 0.722$\pm$0.033 	&0.649$\pm$0.028 &	0.469$\pm$0.017 	&0.068$\pm$0.021 &	0.077$\pm$0.022 	& 	0.787$\pm$0.020 	&0.742$\pm$0.019		 \\
KM-Laplacian& 0.736$\pm$0.024 & 	0.663$\pm$0.017 	&0.490$\pm$0.021 	&0.086$\pm$0.011 &	0.094$\pm$0.010 & 	0.797$\pm$0.016 &	0.752$\pm$0.013\\
KM-$\mathcal{L}$& 0.646$\pm$0.030 	&0.593$\pm$0.027 &	0.687$\pm$0.033 	&0.395$\pm$0.058 	&0.447$\pm$0.059 &		0.735$\pm$0.020 &	0.700$\pm$0.017\\
SEQ2SEQ &0.507$\pm$0.028 &	0.505$\pm$0.014 &	0.378$\pm$0.008 	&-0.003$\pm$0.003 	&0.005$\pm$0.003 & 	0.630$\pm$0.022 	&0.628$\pm$0.011\\ 
AC-TPC &0.966$\pm$0.012 	&0.952$\pm$0.017 	&0.659$\pm$0.020 &	0.487$\pm$0.035 &	0.596$\pm$0.043 &	\textbf{0.931$\pm$0.011 }	&\textbf{0.925$\pm$0.014}\\ \midrule
{\ours} (J)& 0.967$\pm$0.020 	&0.954$\pm$0.025 &	0.655$\pm$0.021 &	0.440$\pm$0.051 &	0.543$\pm$0.064 	&	0.845$\pm$0.064 &	0.840$\pm$0.064\\ 
{\ours}& \textbf{0.975$\pm$0.013} 	&0.960$\pm$0.024 &	\textbf{0.965$\pm$0.018}$^\ddagger$ 	&\textbf{0.902$\pm$0.048 }$^\ddagger$	&\textbf{0.875$\pm$0.050}$^\ddagger$&	0.927$\pm$0.010 	&0.920$\pm$0.014\\
\bottomrule
\end{tabular}
\end{center}
\small
Best performance is highlighted in \textbf{bold}. Symbol $^\ddagger$ indicates $p$-value $<0.01$
\end{table}

The complete benchmark result on two real-world datasets is provided in Table \ref{tab:benchmark-medical-full}.
{\ours} in general has the best (or second best) phenotype discovery performance ($H_\mathrm{ROC}$ and $H_\mathrm{PRC}$) while achieving high accuracy in outcome prediction (AUROC and AUPRC), which demonstrates the prognostic value of the phenotypes discovered by {\ours}.

\begin{table}[!h]
\caption{Complete Benchmark Result on Two Real-world Datasets.}
\label{tab:benchmark-medical-full}
\begin{center}
\small
\begin{tabular}{l lccccc}
\toprule
{} & \textbf{METHOD}   & \textbf{AUROC} & \textbf{AUPRC}& \textbf{AUSIL} & $H_\mathrm{ROC}$&$H_\mathrm{PRC}$\\
\midrule
\multirow{8}{*}{\rotatebox[origin=c]{90}{\textbf{ADNI}}}&
KM-E2P(y) &\textbf{0.893$\pm$0.005} &	\textbf{0.728$\pm$0.017} &	0.677$\pm$0.019 &	0.770$\pm$0.013 &	0.701$\pm$0.012\\
&KM-E2P(z) &0.884$\pm$0.012 &	0.711$\pm$0.020& 	0.672$\pm$0.028 &	0.763$\pm$0.018 	&0.690$\pm$0.013\\
&KM-DTW & 0.743$\pm$0.013 	&0.522$\pm$0.020& 	0.762$\pm$0.049 &	0.752$\pm$0.027 	&0.618$\pm$0.021\\
&KM-$\mathcal{L}$&0.697$\pm$0.029 &	0.465$\pm$0.019& 	\textbf{0.820$\pm$0.022}$^\ddagger$ &	0.753$\pm$0.019 	&0.593$\pm$0.018\\
&SEQ2SEQ &0.775$\pm$0.023 	&0.550$\pm$0.030 &	0.772$\pm$0.014 &	0.773$\pm$0.012 &	0.642$\pm$0.022\\
&AC-TPC &0.861$\pm$0.012 	&0.665$\pm$0.020 &	0.726$\pm$0.020 &	0.788$\pm$0.014 &	0.694$\pm$0.013\\ \cmidrule{2-7}
&{\ours} (J)&0.867$\pm$0.020 	&0.679$\pm$0.040& 	0.690$\pm$0.007 &	0.768$\pm$0.011 &	0.684$\pm$0.021\\
&{\ours} & 0.891$\pm$0.005 	&0.716$\pm$0.015& 	0.711$\pm$0.023 &	\textbf{0.791$\pm$0.013} &	\textbf{0.713$\pm$0.009}$^\ddagger$\\
\midrule
\multirow{8}{*}{\rotatebox[origin=c]{90}{\textbf{ICU}}}&
KM-E2P(y) & \textbf{0.697$\pm$0.014} 	&0.593$\pm$0.012 &	0.668$\pm$0.046 &	0.682$\pm$0.029 &	0.628$\pm$0.025\\
&KM-E2P(z) &0.677$\pm$0.030 &	0.579$\pm$0.018 &	0.698$\pm$0.042 &	0.686$\pm$0.031 &	0.633$\pm$0.024\\
&KM-DTW & 0.539$\pm$0.030 	&0.515$\pm$0.011 &	0.786$\pm$0.072 &	0.636$\pm$0.023 &	0.621$\pm$0.021\\
&KM-$\mathcal{L}$&0.577$\pm$0.019 	&0.532$\pm$0.009 	&\textbf{0.834$\pm$0.024} &	0.682$\pm$0.009 &	0.649$\pm$0.004\\
&SEQ2SEQ &0.592$\pm$0.024 &	0.539$\pm$0.012 &	0.830$\pm$0.016 &	0.690$\pm$0.011 &	\textbf{0.653$\pm$0.004}	\\
&AC-TPC &	0.660$\pm$0.008 &	0.573$\pm$0.003& 	0.735$\pm$0.024 &	0.695$\pm$0.014 &	0.644$\pm$0.011\\ \cmidrule{2-7}
&{\ours} (J)&0.697$\pm$0.025 &	\textbf{0.595$\pm$0.017} &	0.691$\pm$0.091 &	0.691$\pm$0.056 &	0.636$\pm$0.048\\
&{\ours} & 0.681$\pm$0.017 &	0.585$\pm$0.015 &	0.726$\pm$0.015 &	\textbf{0.703$\pm$0.007} &	0.648$\pm$0.008\\
\bottomrule 
\end{tabular}
\end{center}
\small
Best performance is highlighted in \textbf{bold}. Symbol $^\ddagger$ indicates $p$-value $<0.01$
\end{table}

\renewcommand{\thesection}{G}
\setcounter{table}{0}
\setcounter{figure}{0}
\setcounter{algorithm}{0}
\section{FURTHER ANALYSIS ON PHENOTYPE DISCOVERY}

\textbf{Comparison of Cluster Assignments on ADNI Dataset.} \quad
On the ADNI dataset, typical phenotypes from KM-E2P($y$), SEQ2SEQ, AC-TPC and {\ours} are compared in \Figref{fig:ADNI-comparison}.
Due to the model design, KM-E2P($y$) only focuses on the predicted outcome distribution when discovering phenotypes (as shown in \Figref{fig:ADNI-KME2Py}).
Compared to {\ours}, KM-E2P($y$) wrongly splits normal patients with the same temporal pattern (stable CDRSB trajectory) into two clusters under $K=4$.
Additionally,  KM-E2P($y$)  fails to discover the two subtypes of patients with high-risk of MCI as illustrated in phenotype 2 and 3 in \Figref{fig:ADNI-ours-appendix}.
While the SEQ2SEQ method is able to capture temporal patterns exhibit in patient trajectories, it is incapable to properly associate these temporal patterns with patient outcomes.
For instance, SEQ2SEQ wrongly splits high-risk patients with increasing CDRSB scores over time into two different subgroups with similar outcome distributions.

As discussed in the main manuscript, AC-TPC aims at discovering the minimum number of clusters that can sufficiently represent the outcome distribution.
Thus, it only identifies three phenotypes under $K=4$ and combines the two subtypes (Phenotype 2 and 3 in \Figref{fig:ADNI-ours-appendix}) of MCI patients into the same cluster.
In comparison, {\ours} discovers phenotypes based on both predicted outcome and the associated predictive temporal patterns.
The two subgroups of patients with expected diagnosis of MCI are correctly identified by {\ours}, which demonstrates the prognostic value of our method over the considered baselines.

\begin{figure*}[!htb]
	\begin{center}
	\centering
	\begin{subfigure}[b]{\textwidth}
	\centering
    \includegraphics[width=1.0\textwidth]{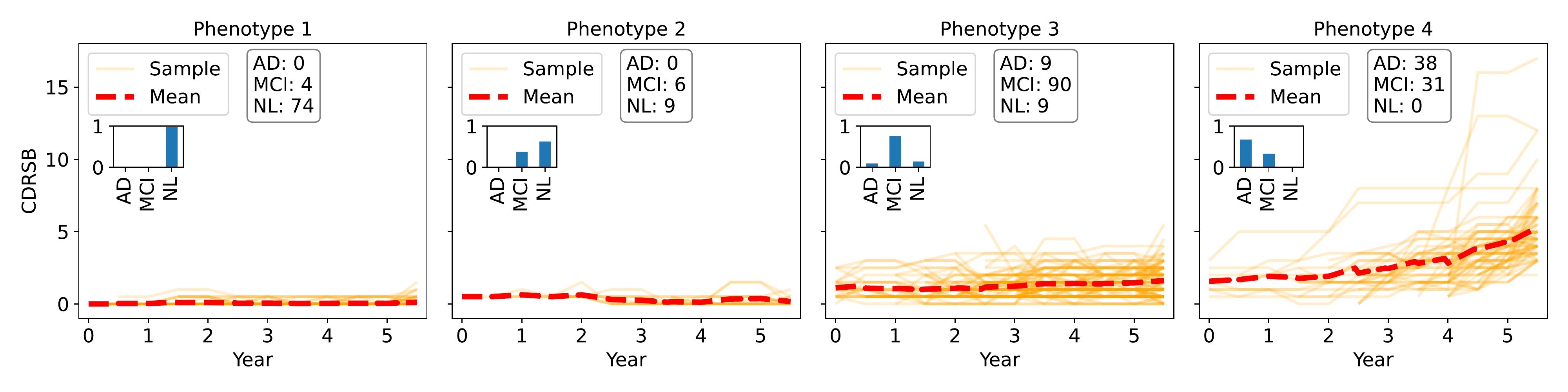}
    \caption{Four phenotypes from KM-E2P($y$).}
    \label{fig:ADNI-KME2Py}
    \end{subfigure}
    \\
    \begin{subfigure}[b]{\textwidth}
    \centering
        \includegraphics[width=1.0\textwidth]{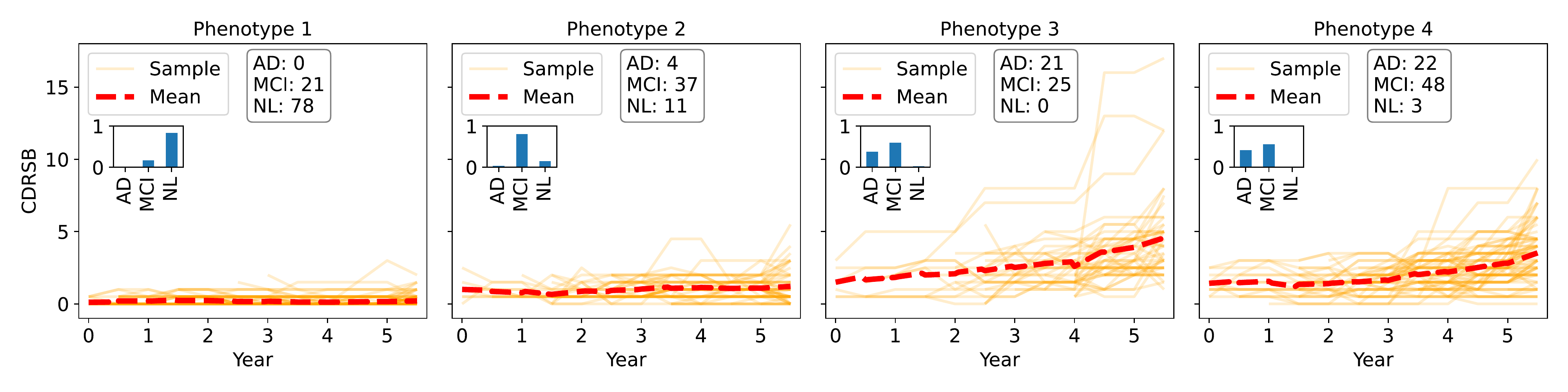}
        \caption{Four phenotypes from SEQ2SEQ.}
        \label{fig:ADNI-SEQ2SEQ}
    \end{subfigure}\\
	\begin{subfigure}[b]{\textwidth}
	\centering
    \includegraphics[width=0.75\textwidth]{figures/ADNI_AC_TPC.pdf}
    \caption{Three phenotypes from AC-TPC.}
    \label{fig:ADNI-AC-TPC-appendix}
    \end{subfigure}
    \\
    \begin{subfigure}[b]{\textwidth}
	\centering
    \includegraphics[width=1.0\textwidth]{figures/ADNI_TPhenotype.pdf}
    \caption{Four phenotypes from {\ours}.}
    \label{fig:ADNI-ours-appendix}
    \end{subfigure}
    \caption{{Comparison of Phenotypes Discovered on the ADNI Dataset.}
		\small} 
		\label{fig:ADNI-comparison}
	\end{center}
\end{figure*}

\textbf{Phenotypes on ICU Mortality.}\quad
On the ICU dataset, {\ours} is applied to identify phenotypes based on the patient's age, gender, GCS score and the fraction of PaCO2.
Three major phenotypes are discovered by {\ours}, and the GCS trajectories of test samples in each subgroup are illustrated in \Figref{fig:Physionet-phenotypes}.
Based on the stability of their GCS trajectory, patients in each phenotype are plotted separately in two subfigures.
The GCS score is predictive of patient mortality after ICU discharge \citep{leitgeb2013glasgow} and shows good discrimination accuracy on high- and low-risk patients admitted to ICU \citep{bastos1993glasgow}.
The predicted mortality rates in phenotypes 1, 2 and 3 are 15.3\%, 3.2\% and 32.4\%, respectively.
The GCS levels of patients in the three subgroups manifest a clear association to their corresponding mortality risks.
For instance, many patients of phenotype 3 had lower GCS score (below 10) than the two other subgroups.
In contrast, while having higher GCS levels, many patients in Phenotype 1 and 2 had an increase pattern (as shown in \Figref{fig:Physionet-nonstable}) in their recent GCS measurements, which potentially contributes to their decreased risks of death.
In the meantime, age is reported to be another risk factor for ICU mortality \citep{blot2009epidemiology,haas2017outcome}.
With the average patient age of 63.0 (IQR: 53.0 -- 76.0), 43.0 (IQR: 29.8 -- 55.3) and 70.6 (IQR: 62.0 -- 82.0) in the three identified subgroups, phenotype 1 and 2 are clearly separated.\footnote{Interquartile range (IQR) is the range defined by 25\% and 75\% quantiles of a variable.}

\begin{figure}[h]
	\begin{center}
	\begin{subfigure}[b]{\textwidth}
	\centering
    \includegraphics[width=0.9\textwidth]{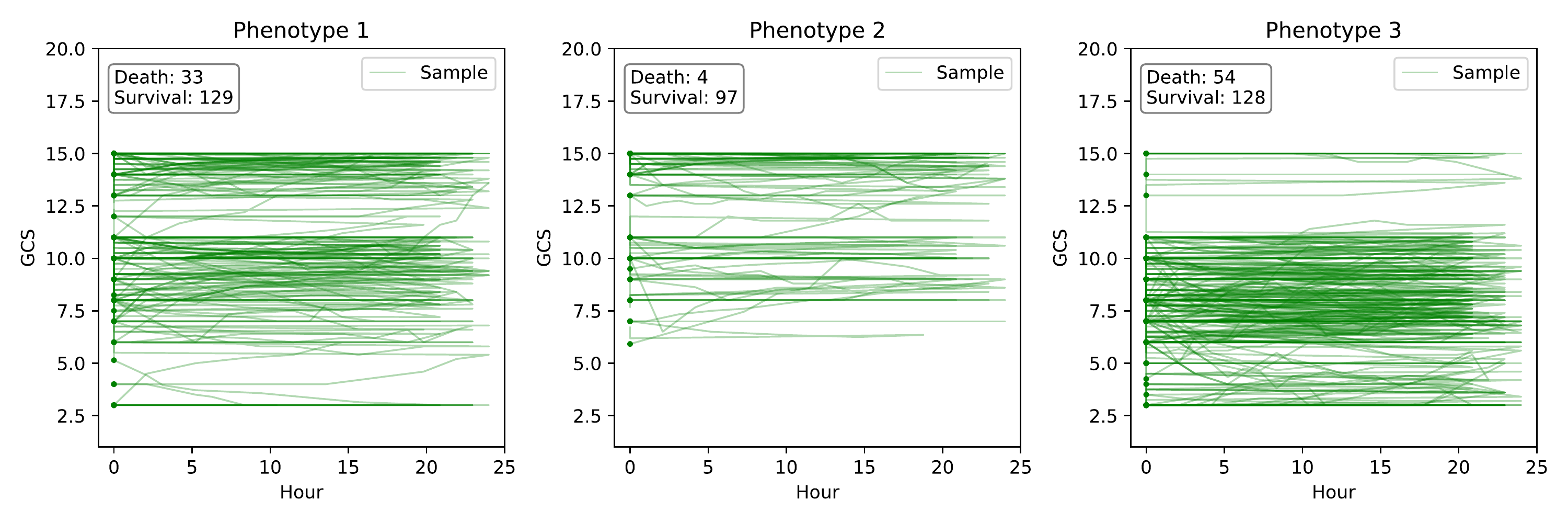}
    \caption{Patients with stable ($\mathrm{std}<1$) GCS trajectories.}
    \label{fig:Physionet-stable}
    \end{subfigure}
    \begin{subfigure}[b]{\textwidth}
	\centering
    \includegraphics[width=0.9\textwidth]{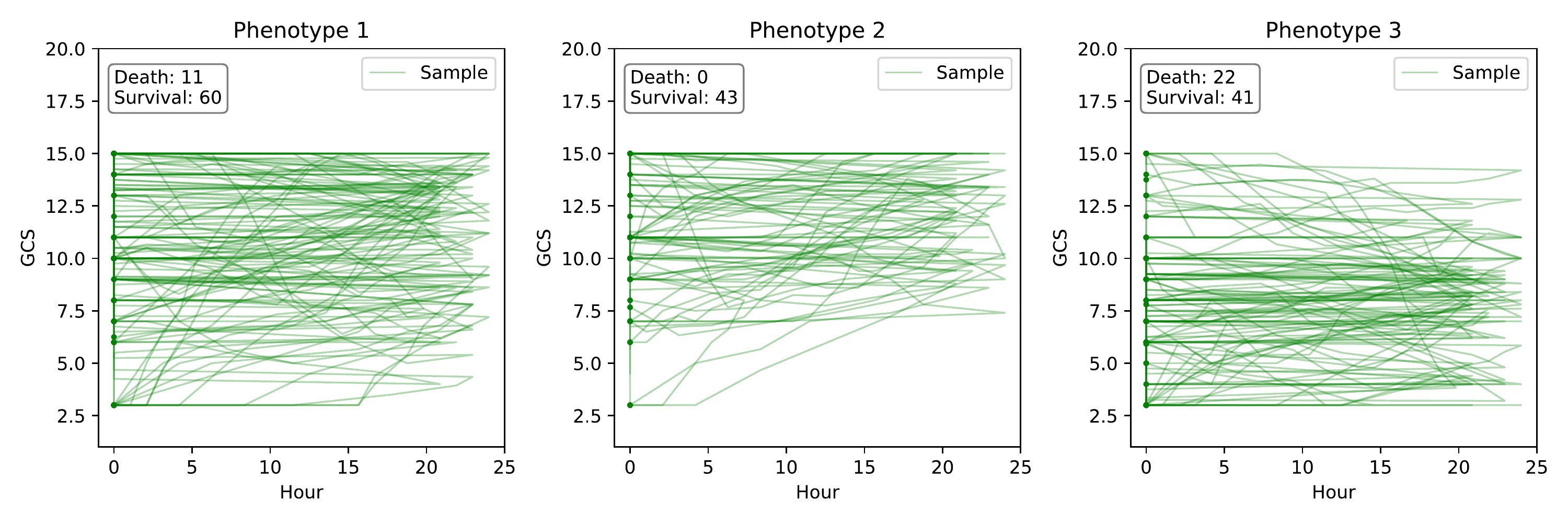}
    \caption{Patients with less stable ($\mathrm{std}\geq1$) GCS trajectories.}
    \label{fig:Physionet-nonstable}
    \end{subfigure}
    \caption{Three Phenotypes Discovered by {\ours} on ICU Dataset.
		\small
		The GCS trajectory of patients with different phenotypes are illustrated in the considered time period.
		All trajectories start at $t=0$ and are smoothed with a rolling window of size 5.} 
		\label{fig:Physionet-phenotypes}
	\end{center}
\end{figure}

\end{document}